\documentclass[11pt]{article} 
\usepackage{fullpage}
%
%

\usepackage{authblk}
\usepackage{algorithm}
\usepackage{algorithmic}
\usepackage{multirow}
\usepackage{booktabs}
\usepackage{pifont} 
\usepackage[utf8]{inputenc} 
\usepackage[T1]{fontenc}    
\usepackage{hyperref}       
\usepackage{url}            
\usepackage{booktabs}       
\usepackage{amsfonts}       
\usepackage{nicefrac}       
\usepackage{microtype}      
\usepackage{amsmath,amsthm}
\usepackage{amssymb,amsbsy,epsfig,float}
\usepackage{graphicx,wrapfig,lipsum}
\usepackage{subfigure} 
\usepackage[makeroom]{cancel}
\usepackage{xspace}
\usepackage{mathtools}

\usepackage{color}
\usepackage[usenames,dvipsnames]{xcolor}

 \usepackage[disable,backgroundcolor = White,textwidth=\marginparwidth]{todonotes}

\newcommand{\todoc}[2][]{\todo[color=Apricot!20,size=\tiny,#1]{Cs: #2}}
\newcommand{\todom}[2][]{\todo[color=Cerulean!20,size=\tiny,#1]{M: #2}}

\newcommand{\hY}{\hat{Y}}
\newcommand{\N}{\mathbb{N}}

\newcommand{\iset}[1]{[#1]}
\DeclareMathOperator{\argmin}{argmin}
\newcommand{\ip}[1]{\langle #1 \rangle} 
\newcommand{\SA}{\mathrm{SA}}
\newcommand{\SD}{\mathrm{SD}}
\newcommand{\WD}{\mathrm{WD}}
\newcommand{\TSA}{\Theta_{\SA}}
\newcommand{\Alg}{\mathfrak{A}}
\newcommand{\TSD}{\Theta_{\SD}}
\newcommand{\TWD}{\Theta_{\WD}}
\newcommand{\awd}{a_{\mathrm{wd}}}
\def\ddefloop#1{\ifx\ddefloop#1\else\ddef{#1}\expandafter\ddefloop\fi}

\def\ddef#1{\expandafter\def\csname b#1\endcsname{\ensuremath{\mathbf{#1}}}}
\ddefloop ABCDEFGHIJKLMNOPQRSTUVWXYZ\ddefloop

\def\ddef#1{\expandafter\def\csname bb#1\endcsname{\ensuremath{\mathbb{#1}}}}
\ddefloop ABCDEFGHIJKLMNOPQRSTUVWXYZ\ddefloop

\def\ddef#1{\expandafter\def\csname c#1\endcsname{\ensuremath{\mathcal{#1}}}}
\ddefloop ABCDEFGHIJKLMNOPQRSTUVWXYZ\ddefloop

\def\ddef#1{\expandafter\def\csname v#1\endcsname{\ensuremath{\boldsymbol{#1}}}}
\ddefloop ABCDEFGHIJKLMNOPQRSTUVWXYZabcdefghijklmnopqrstuvwxyz\ddefloop

\def\ddef#1{\expandafter\def\csname v#1\endcsname{\ensuremath{\boldsymbol{\csname #1\endcsname}}}}
\ddefloop {alpha}{beta}{gamma}{delta}{epsilon}{varepsilon}{zeta}{eta}{theta}{vartheta}{iota}{kappa}{lambda}{mu}{nu}{xi}{pi}{varpi}{rho}{varrho}{sigma}{varsigma}{tau}{upsilon}{phi}{varphi}{chi}{psi}{omega}{Gamma}{Delta}{Theta}{Lambda}{Xi}{Pi}{Sigma}{varSigma}{Upsilon}{Phi}{Psi}{Omega}\ddefloop

\newcommand{\Y}{\mathcal{Y}}
\newcommand{\A}{\mathcal{A}}
\newcommand{\EE}[1]{\mathbb{E}\left[#1\right]}
\newcommand{\EEi}[2]{\mathbb{E}_{#1}\left[#2\right]}
\newcommand{\Prob}[1]{\mathbb{P}\left(#1\right)}
\newcommand{\Regret}{\mathfrak{R}}
\newcommand{\R}{\mathbb{R}} 

\newcommand{\X}{\mathcal{X}}

\usepackage[capitalize]{cleveref}

\newtheorem{thm}{Theorem}

\newtheorem{prop}{Proposition}
\newtheorem{cor}{Corollary}

\newtheorem{defi}{Definition}

\begin{document}

%

%
\title{Sequential Learning without Feedback\thanks{Manjesh Hanawal is with the Department of Industrial Engineering at IIT Bombay. His research was conducted while he was a post-doctoral associate at Boston University. Csaba Szepesvari is with the Department of Computer Science at University of Alberta. Venkatesh Saligrama is with the Department of Electrical and Computer Engineering at Boston University.}}
\author{Manjesh Hanawal \,\,\,\, Csaba Szepesvari \,\,\,\, Venkatesh Saligrama}



\date{}
\maketitle
\begin{abstract}
In many security and healthcare systems a sequence of features/sensors/tests are used for detection and diagnosis. Each test outputs a prediction of the latent state, and carries with it inherent costs. Our objective is to {\it learn} strategies for selecting tests to optimize accuracy \& costs. Unfortunately it is often impossible to acquire-in-situ ground truth annotations and we are left with the problem of unsupervised sensor selection (USS). We pose USS as a version of stochastic partial monitoring problem with an {\it unusual} reward structure (even noisy annotations are unavailable). Unsurprisingly no learner can achieve sublinear regret without further assumptions. To this end we propose the notion of weak-dominance. This is a condition on the joint probability distribution of test outputs and latent state and says that whenever a test is accurate on an example, a later test in the sequence is likely to be accurate as well.
We empirically verify that weak dominance holds on real datasets and prove that it is a maximal condition for achieving sublinear regret. We reduce USS to a special case of multi-armed bandit problem with side information and develop polynomial time algorithms that achieve sublinear regret.
\end{abstract}

\section{Introduction}
Sequential sensor acquisition arises in many security and healthcare diagnostic systems. In these applications we have a diverse collection of sensor-based-tests with differing costs and accuracy. 
In these applications (see Fig.~\ref{motiv}) inexpensive tests are first conducted and based on their outcomes a decision for acquiring more (expensive) tests are made. 
\begin{figure}[t]
  \centering
  \includegraphics[width=0.6\textwidth]{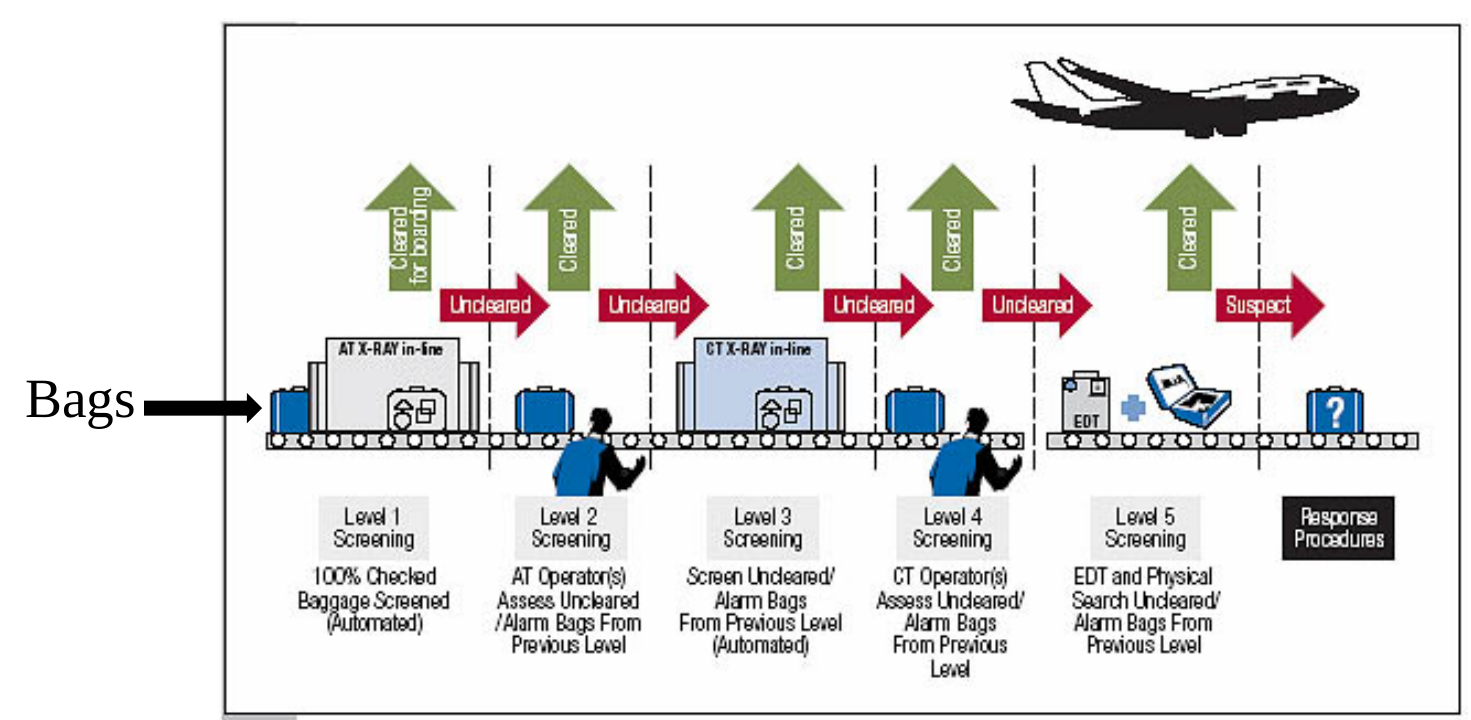}
  \caption{\footnotesize Sequential Sensor/Test Selection in Airport Security Systems. A number of different imaging and non-imaging tests are sequentially processed (see \cite{ML13_MultistageClassifier_TrapezSaligramaCastanon}). Costs can arise due to sensor availability and delay. Inexpensive tests are first conducted and based on their outcomes more expensive tests are conducted.}
  \label{motiv}
\end{figure}
%
The goal in these systems is to maximize overall accuracy for an available cost-budget. Generally, the components that can be optimized include sensor classifiers (to improve test accuracy), sensor ordering, and decision strategies for sequential sensor selection. Nevertheless, sensor classifiers and sensor ordering are typically part of the infrastructure and generally harder to control/modify in both security and medical systems. To this end we focus here  on the sequential sensor selection problem and use the terms sensor and test interchangeably. The need for systematically learning optimal decision strategies for balancing accuracy \& costs arises from the fact that these applications involve legacy systems where sensor/test selection strategies are local; often managed under institutional, rather than national guidelines~\cite{baghdadian}. 
While it is possible to learn such decision strategies given sufficient annotated training data, what makes these applications challenging is that it is often difficult to acquire in-situ ground truth labels. 

These observations leads us to the problem of learning decision strategies for optimal sensor selection in situations where we do not have the benefit of ground-truth annotations, and what we refer to as the Unsupervised Sensor Selection (USS) problem. 
In Sec.~\ref{sec:Setup} we pose our problem as a version of stochastic partial monitoring problem \cite{BaFoPaRaSze14} with \emph{atypical} reward structure, where tests are viewed as actions and sequential observations serves as side information. As is common, we pose the problem in terms of competitive optimality. We consider a competitor who can choose an optimal test with the benefit of hindsight. Our goal is to minimize cummulative regret based on learning the optimal test based on observations in multiple rounds of play. 
Recall that in a stochastic partial monitoring problem a decision maker needs to choose the action with the lowest expected cost by repeatedly trying the actions and observing some feedback.
The decision maker lacks the knowledge of some key information, such as in our case, the misclassification
error rates of the classifiers, but had this information been available, the decision maker could calculate the
expected costs of all the actions (sensor acquisitions) and could choose the best action (test). The feedback received by the decision maker in a given round depends stochastically on the unknown information and the action chosen.
Bandit problems \cite{Tho33} are a special case of partial monitoring, where the key missing information is the expected
cost for each action, and the feedback is the noisy version of the expected cost of the action chosen.
In the USS problem the learner only observes the outputs of the classifiers, but not the label to be predicted over multiple rounds
in a stochastic, stationary environment.

In Sec.~\ref{sec:Setup} we first show that, unsurprisingly that no learner can achieve sublinear regret without further assumptions. 
To this end we propose the notions of weak and strong dominance on tests, which correspond to constraints on joint probability distributions over the latent state and test-outcomes. Strong dominance (SD) is a property arising in many Engineered systems and says that whenever a test is accurate on an example, a later test in the sequence is almost surely accurate on that example. 
Weak dominance is a relaxed notion that allows for errors in these predictions. We empirically demonstrate that weak dominance appears to hold by evaluating it on several real datasets. We also show that in a sense weak dominance is fundamental, namely, without this condition there exist problem instances that result in linear regret. On the other hand whenever this condition is satisfied there exist polynomial time algorithms that lead to sublinear regret. 

Our proof of sublinear regret in Sec.~\ref{sec:Equiv} is based on reducing USS to a version of multi-armed bandit problem (MAB) with side-observation. The latter problem has already been shown to have sub-linear regret in the literature. In our reduction, we identify tests as bandit arms. 
The payoff of an arm is given by marginal losses relative to the root test, and the side observation structure is defined by the feedback graph induced by the directed graph. We then formally show that there is a one-to-one mapping between algorithms for USS and algorithms for MAB with side-observation. In particular, under weak dominance, the regret bounds for MAB with side-observation then imply corresponding regret bounds for USS.

\subsection{Related Work}
%
%
In contrast to our USS setup there exists a wide body of literature dealing with sensor acquisition (see\cite{AISTATS13_SupervisedSequentialLearning_TrapezSaligram,NIPS2015_DirectedAcyclic_WangTrapezSaligram,ICML2015_FeatureBudgeted_NanWangSaligrama}). Like us they also deal with cascade models with costs for features/tests but their method is based on training decision strategies with fully supervised data for prediction-time use. There are also several methods that work in an online bandit setting and train prediction models with feature costs \cite{SBCA14:BanditsPaid} but again they require ground-truth labels as reward-feedback. A somewhat more general version of \cite{SBCA14:BanditsPaid} is developed in \cite{ZBGGySz13:CostlyFeatures} where in addition the learner can choose to acquire ground-truth labels for a cost.

%

Our paper bears some similarity with the concept of active classification, which deals with learning stopping policies\cite{poczos2009,ActiveClass-AIJ-s} among a given sequence of tests. Like us these works also consider costs for utilizing tests and the goal is to learn when to stop to make decisions. Nevertheless, unlike our setup the loss associated with the decision is observed in their context. 
Our paper is related to the framework of finite partial monitoring problems\cite{BaFoPaRaSze14}, which deals with how to infer unknown key information and where tests/actions reveal different types of information about the unknown information. In this context 
\cite{AgTeAn89:pmon} consider special cases where payoff/rewards for a subset of actions are observed. This is further studied as a side-observation problem in \cite{MaSh11} and as graph-structured feedback \cite{COLT15_OnlineLearningWithFeedback_AlonBianchiDekel, NIPS13_FromBanditsToExperts_AlonBianchiGentile,WGySz:NIPS15}). Our work is distinct from these setups because we are unable to observe rewards for our chosen actions or any other actions.
\vspace{-10pt}


\section{Background}
\label{sec:background}

In this section we will introduce a number of sequential decision making problems,
namely stochastic partial monitoring, bandits and bandits with side-observations, which we will build upon later.

First, a few words about our notation: We will use upper case letters to denote random variables.
The set of real numbers is denoted by $\R$. For positive integer $n$, we let
$[n] = \{1,\dots,n\}$. 
We let $M_1(\X)$ to denote the set of probability distributions over some set $\X$.
When $\X$ is finite with a cardinality of $d \doteq |\X|$, 
$M_1(\X)$ can be identified with the $d$-dimensional probability simplex.

%
In a \emph{stochastic partial monitoring problem (SPM)} a learner interacts with a stochastic environment in a sequential manner.
In round $t=1,2,\dots$ the learner chooses an action $A_t$ from an action set $\A$, and receives a feedback $Y_t\in \Y$
from a distribution $p$ which depends on the action chosen and also on the environment instance identified
with a ``parameter'' $\theta\in\Theta$:
$Y_t \sim p(\cdot;A_t,\theta)$. 
The learner also incurs a reward $R_t$, which is a function of the action chosen and the unknown parameter $\theta$:
$R_t = r(A_t,\theta)$. 
The reward may or may not be part of the feedback for round $t$.
The learner's goal is to maximize its total expected reward.
The family of distributions $(p(\cdot;a,\theta))_{a,\theta}$ and the family of rewards $(r(a,\theta))_{a,\theta}$
and the set of possible parameters $\Theta$ are known to the learner, who uses this knowledge to judiciously choose
its next action to reduce its uncertainty about $\theta$ so that it is able to eventually converge on choosing only an 
optimal action $a^*(\theta)$, achieving the best possible reward per round, $r^*(\theta) = \max_{a\in \A} r(a,\theta)$.
The quantification of the learning speed is given by the expected regret 
$\Regret_n = n r^*(\theta) - \EE{\sum_{t=1}^n R_t}$, which, for brevity and when it does not cause confusion, 
we will just call regret.
A sublinear expected regret, i.e., $\Regret_n/n \to 0$ as $n\to \infty$ means that the learner in the long run collects
almost as much reward on expectation as if the optimal action was known to it.
In some cases it is more natural to define the problems in terms of costs as opposed to rewards;
in such cases the definition of regret is modified appropriately by flipping the sign of rewards and costs. 

\emph{Bandit Problems} are a special case of SPMs where 
$\Y$ is the set of real numbers, $r(a,\theta)$ is the mean of distribution $p(\cdot;a,\theta)$ and thus the learner in every round the learner upon choosing an action $A_t$ receives the noisy reward $Y_t \sim p(\cdot;A_t,\theta)$ as feedback. 
A finite armed \emph{bandit with side-observations} \cite{MaSh11} is also a special case of SPMs, where the learner upon choosing an action $a \in \A$ receives noisy reward observations, namely, $Y_t  = (Y_{t,a})_{a\in N(A_t)},\,\,Y_{t,a} \sim p_r(\cdot;a,\theta),\,\,\EE{Y_{t,a}} = r(a,\theta)$, from a neighbor-set $\cN(a) \subset \A$, which is a priori known to the learner. 
%
%
%
To cast this setting as an SPM we let $\Y$ as the set $\cup_{i=0}^K \R^i$ and define 
the family of distributions $(p(\cdot;a,\theta))_{a,\theta}$ such that $Y_t \sim p(\cdot;A_t,\theta)$.
The framework of SPM is quite general and allows for parametric and non-parametric sets $\Theta$. 
In what follows we identify $\Theta$ with set of instances $(p(\cdot;a,\theta),r(a,\theta))_{\theta\in \Theta}$.
In other words we view elements of $\Theta$ as a pair $p,r$ where $p(\cdot;a)$ is a probability distribution over $\Y$ for each $a\in \A$ and $r$ is a map from $\A$ to the reals.

\section{Unsupervised Sensor Selection}
\label{sec:Setup}
\newcommand{\ind}[1]{\mathbb{I}\{#1\}}
%
\vspace{-5pt}
\noindent
{\bf Preliminaries and Notation:} Proofs for formal statements appears in the Appendix. We will use upper case letters to denote random variables.
The set of real numbers is denoted by $\R$. For positive integer $n$, we let
$[n] = \{1,\dots,n\}$. 
We let $M_1(\X)$ to denote the set of probability distributions over some set $\X$.
When $\X$ is finite with a cardinality of $d \doteq |\X|$, 
$M_1(\X)$ can be identified with the $d$-dimensional probability simplex.

We first consider the {\it unsupervised, stochastic, 
cascaded sensor selection} problem. We cast it as a special case of stochastic partial monitoring problem (SPM), which is described in the appendix.
Later we will briefly describe extensions to tree-structures and contextual cases. 
Formally, 
a problem instance is specified by a pair $\theta = (P,c)$, where $P$ is
a distribution over the $K+1$ dimensional hypercube, and $c$ is a $K$-dimensional, nonnegative valued vector
of costs.
While $c$ is known to the learner from the start, $P$ is initially unknown. Henceforth we identify problem instance $\theta$ by $P$. 
The instance parameters specify the learner-environment interaction as follows:
In each round for $t=1,2,\dots$, 
the environment generates a $K+1$-dimensional binary vector
$Y = (Y_t,Y_t^1,\dots,Y_t^K)$ chosen at random from $P$.
Here, $Y_t^i$ is the output of sensor $i$, while $Y_t$ is a (hidden) label to be guessed by the learner.
Simultaneously, the learner chooses an index $I_t\in [K]$ and observes the sensor outputs $Y_t^1,\dots,Y_t^{I_t}$.
The sensors are known to be ordered from least accurate to most accurate, 
i.e., $\gamma_k(\theta) \doteq \Prob{Y_t\ne Y_t^k}$ is decreasing with $k$ increasing.
\begin{figure}[!h]
	\centering
	\includegraphics[scale=.4]{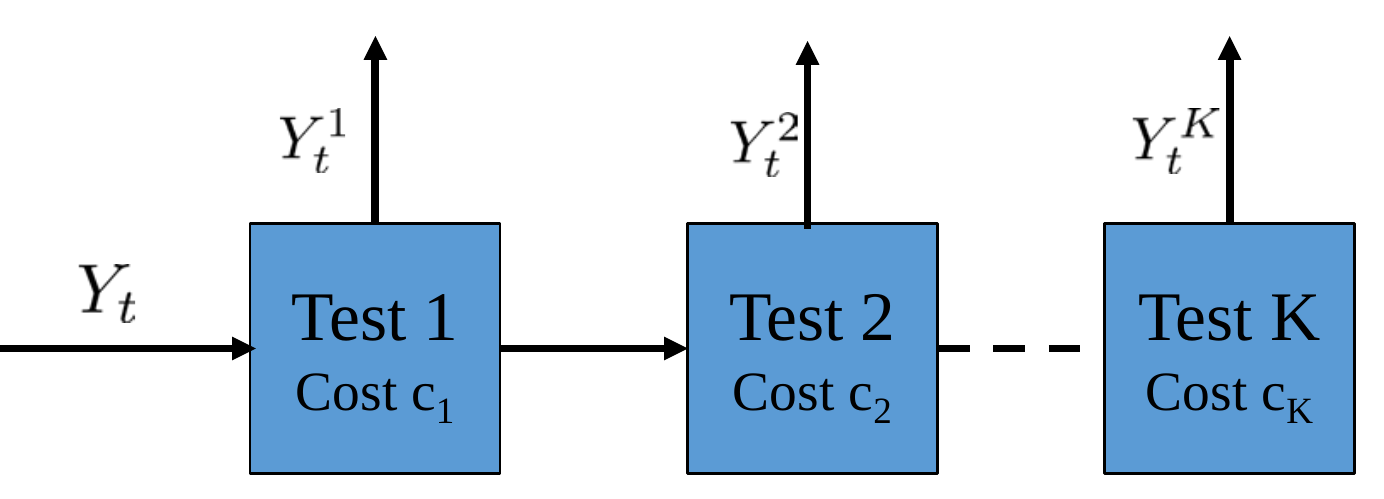}
	\caption{\footnotesize Cascaded Unsupervised Sequential Sensor Selection. $Y_t$ is the hidden state of the instance and $Y_t^1, Y_t^2 \ldots$ are test outputs. Not shown are features that a sensor could process to produce the output. The cost for using test $j$ is also shown.}
	\label{fig:SensorCascade}
	\vspace{-.2cm}
\end{figure} 
\todom{How about this figure and caption?}
Knowing this, the learner's choice of $I_t$ also indicates that he/she chooses $I_t$ to predict the unknown label $Y_t$.
Observing sensors is costly: The cost of choosing $I_t$ is $C_{I_t} \doteq c_1 + \dots + c_{I_t}$.
The total cost suffered by the learner in round $t$ is thus $C_{I_t} + \ind{Y_t \ne Y_t^{I_t}}$.
The goal of the learner is to compete with the best choice given the hindsight of the values $(\gamma_k)_k$.
The expected regret of learner up to the end of round $n$ is 
$\Regret_n =( \sum_{t=1}^n \EE{ C_{I_t} +\ind{Y_t \ne Y_t^{I_t} }} )- n \min_k (C_k + \gamma_k)$.

\noindent
{\bf Sublinear Regret:} The quantification of the learning speed is given by the expected regret 
$\Regret_n = n r^*(\theta) - \EE{\sum_{t=1}^n R_t}$, which, for brevity and when it does not cause confusion, 
we will just call regret. A sublinear expected regret, i.e., $\Regret_n/n \to 0$ as $n\to \infty$ means that the learner in the long run collects almost as much reward on expectation as if the optimal action was known to it.

For future reference, we let $c(k,\theta) = \EE{ C_{k} +\ind{Y_t \ne Y_t^{k} }}  (= C_k + \gamma_k)$ and $c^*(\theta) = \min_k c(k,\theta)$. Thus, $\Regret_n =( \sum_{t=1}^n \EE{ c(I_t,\theta) }) - n c^*(\theta)$.
In what follows, we shall denote by $\cA^*(\theta)$ the set of optimal actions of $\theta$
and we let $a^*(\theta)$ denote the optimal action that has the smallest index
\footnote{Note that even if $i<j$ are optimal actions, there can be suboptimal actions in the interval $[i,j] (=[i,j]\cap \N)$
(e.g., $\gamma_1=0.3$, $C_1=0$, $\gamma_2=0.25$, $C_2=0.1$, $\gamma_3=0$, $C_3=0.3$.}. Thus,
in particular, $a^*(\theta) = \min \cA^*(\theta)$. 
Next, for future reference note that one 
can express optimal actions from the viewpoint of marginal costs and marginal error. 
In particular an action $i$ is optimal if for all $j > i$ the marginal increase in cost, $C_j - C_i$, 
is larger than the marginal decrease in error, $\gamma_i - \gamma_j$: $ \forall \, j \geq i\,$
\begin{equation} \label{eqn:interp_opt}
\vspace{-5pt}
\underbrace{C_j - C_i}_{\text{Marginal Cost}} \geq \underbrace{ E \left [ \ind{Y_t \ne Y_t^i} - \ind{Y_t \ne Y_t^j} \right ]}_{\text{Marginal Error $= \gamma_i-\gamma_j$}}.
\end{equation}

\if0
A learner has access to $K\geq 2$ sensors that provide predictions
of an unknown label. 
 It is assumed that the sensors form a cascade (cf. \cref{wrap-fig:1}),
i.e., they are  \emph{ordered} in terms of their prediction efficiency,
later sensors are more accurate in predicting the unknown label.
However, acquiring the output of later sensor comes at a fixed cost.
The dilemma of the learner is that while he knows the ordering of the sensors,
the accuracies of the sensors are unknown.
The learner's task is to minimize the total prediction cost, which includes
both the cost of acquiring the sensor outputs and the cost incurred due to imperfect
sensor output.
The learner knows the costs, but does not know how efficient the sensors are
and learns only the output of the sensors.
Learning happens in a sequential setting, where in each round the learner can decide
sequentially (within the round) which sensor outputs to observe,
while respecting the ordering of the sensors.
The output of the last sensor selected serves as the prediction for the round.

The formal specification of the learning problem is as follows:
Learning happens sequentially.
In round $t$ ($t=1,2,\dots$), 
the environment generates 
$(Y_t,\hY_t^1,\dots,\hY_t^K)\in \{0,1\}^{K+1}$ from a distribution $P$ unknown to the learner.

Here, $Y_t$ is the unknown label of context/instance $Z_t$ to be predicted in round $t$, while $\hY_t^k$ is the output of sensor
$k$, a prediction of $Y_t$. We focus on the case where $Z_t$ is not available to the learner. The case where they are observed is briefly discussed in the appendix. 
At the cost of $c_1+ c_2 + \dots + c_k$,
the learner can choose to acquire the outputs of the first $k$ sensors,
where $k\in [K] := \{1,\dots,K\}$. 

Here, $c_i\ge 0$ is the marginal cost of acquiring the output of sensor $i$.
The costs $c := (c_1,\dots,c_K)$ are known to the learner.
Having acquired the output of the first $k$ sensors, the learner predicts the unknown label $Y_t$ using
the output of the last sensor acquired, i.e., using $\hY_t^k$, making the learner incur the loss
\begin{align*}
L_t(k)=\mathbf{1}_{\{\hat{Y}^k_t\neq Y_t\}}+\sum_{j=1}^k c_j\,
\end{align*}
in round $t$.
The feedback of learner for this round is then $H_t(k)=(\hat{Y}^1_t,\ldots,\hat{Y}^k_t)$.


\begin{wrapfigure}{r}{5cm}
	\vspace{-.5cm}
	\centering
	\includegraphics[scale=.6]{cascade.pdf}
	\caption{Cascade of sensors
	}\label{wrap-fig:1}
	\vspace{-.5cm}
\end{wrapfigure} 

\if0
The prediction error rate of the $i^{th}$ sensor is denoted as $\gamma_i:=\Pr\{Y_t\neq \hat{Y}^k_t\}$. The learner incurs an extra cost of $c_k\geq 0$ to acquire output of sensor $k$ after acquiring output of sensor $k-1$. The sensor cascade is depicted in the adjacent figure. In this section we assume that the error rate does not depend on the  context, and the treatment with contextual information is given in the supplementary. 
\fi

We refer to the above setup as Sensor Acquisition Problem (SAP).
Based on the previous description, an instance of SAP is the tuple $\psi = (K,P,c)$, where $K\in \mathbb{N}$, $K\ge 2$,
$P$ is a distribution over $\{0,1\}^{K+1}$ and $c\in [0,\infty)^K$. 
 A policy $\pi$ on a $K$-sensor SAP problem
 is a sequence of maps, $(\pi_1, \pi_2, \cdots)$, where
 $\pi_t : \mathcal{H}_{t-1}\rightarrow [K]$ gives the action selected in round $t$
 given a history $h_{t-1}\in \mathcal{H}_{t-1}$ that consists of all actions and corresponding feedback observed before $t$. 
 Let $\Pi$ denote set of such policies. 
 For any $\pi \in \Pi$, we compare its performance to that of the single best action in hindsight 
 and define its expected regret as follows
\begin{equation}
R^\psi_T(\pi)= \mathbb{E}\left[\sum_{t=1}^T L_t(I_t)\right]-\min_{k\in A}\mathbb{E}\left[\sum_{t=1}^T L_t(k)\right],
\end{equation}
where $I_t$ denotes the action selected by $\pi_t$ in round $t$.

The goal of the learner is to learn a policy that minimizes the expected total loss, or, equivalently, to minimize the expected regret, i.e.,
\begin{equation}
\pi^*= \arg \min_{\pi \in \Pi } R^\psi_T(\pi).
\end{equation}

\noindent
{\bf Optimal action in hindsight: } For any $t$, we have 
\begin{equation}
\label{eqn:OptimalAction}
\mathbb{E}[L_t(k)]=\Pr\{Y_t\neq \hat{Y}^k_t\}+\sum_{j=1}^kc_j=\gamma_k +C_k\,,
\end{equation}
where $\gamma_k:=\Pr\{Y_t\neq \hat{Y}^k_t\}$ is the misclassification error rate of sensor $k$ and $C_k := c_1+\dots+c_k$ denotes the total cost for selecting the first $k$ sensors.
Since $c$ is assumed to be known, we will denote the dependence of the various quantities below
on $\gamma = (\gamma_1,\dots,\gamma_K)$ only.
Denote the expected cost of choosing action $k$ by $l_k(\gamma) = \gamma_k + C_k$. Let $k^* = \arg\min_{k\in [K]} l_k(\gamma)$ be the optimal action and let 
$\Delta_k(\gamma) = l_k(\gamma) - c^*(\gamma)$ be the expected excess cost of using action $k$.

The optimal policy is to play action $k^*$ in each round. If an action $i$ is played in any round then it adds $\Delta_i(\gamma)$ to the expected regret. 
Let $N_k(s)$ denote the number of times action $k$ 
is selected till time $s$, i.e., $N_k(s)=\sum_{t=1}^s \boldsymbol{1}_{\{I_t=k\}}$. 
Then the expected regret can be expressed as
\begin{eqnarray}
\label{eqn:ExpRegretGap}
<<<<<<< HEAD
R^\psi_T(\pi)&=& \sum_{k \in [K]}\mathbb{E}[N_k(T)]\Delta_k(\gamma)\,.
\end{eqnarray}
=======
R^\psi_T(\pi)&=& \sum_{k \in [K]}\mathbb{E}[N_k(T)]\Delta_k\,.
\end{eqnarray}\
\fi

\section{When is USS Learnable?}
\label{sec:Learnability}
Let $\TSA$ be the set of all stochastic, cascaded sensor acquisition problems. 
Thus, $\theta \in \TSA$ such that if $Y\sim \theta$ then $\gamma_k(\theta):=\Prob{Y\ne Y^k}$ 
is a decreasing sequence.
Given a subset $\Theta\subset \TSA$, we say that $\Theta$ is \emph{learnable} 
if there exists a learning algorithm $\Alg$ such that
for any $\theta\in \Theta$, the expected regret $\EE{ \Regret_n(\Alg,\theta) }$ 
of algorithm $\Alg$ on instance $\theta$ is sublinear.
A subset $\Theta$ is said to be a maximal learnable problem class if it is learnable and for any $\Theta'\subset \TSA$ superset
of $\Theta$, $\Theta'$ is not learnable.
In this section we study two special learnable problem classes, $\TSD\subset \TWD$, where the regularity properties of the instances in $\TSD$ are more intuitive, while $\TWD$ can be seen as a maximal extension of $\TSD$.

Let us start with some definitions.
Given an instance $\theta \in \TSA$, we can decompose $\theta$ (or $P$) into the joint distribution $P_S$ of the sensor outputs $S = (Y^1,\dots,Y^k)$ and the conditional distribution of the state of the environment, given the sensor outputs, $P_{Y|S}$.
Specifically, letting $(Y,S)\sim P$, for $s\in \{0,1\}^K$ and $y\in \{0,1\}$, $P_S(s) = \Prob{S = s}$ and $P_{Y|S}(y|s) = \Prob{Y=y|S=s}$. We denote this by $P = P_S \otimes P_{Y|S}$.
A learner who observes the output of all sensors for long enough is able to identify 
$P_S$ with arbitrary precision, while $P_{Y|S}$ remains hidden from the learner.
 This leads to the following statement:
\begin{prop}
\label{prop:learnablemap}
A subset $\Theta\subset \TSA$ is learnable 
if and only if there exists a map $a: M_1( \{0,1\}^K )\to [K]$ such that 
for any $\theta \in \Theta$ 
with decomposition $P = P_S \otimes P_{Y|S}$, $a(P_S)$ is an optimal action in $\theta$.
\end{prop}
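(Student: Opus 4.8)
Here is my plan for proving Proposition~\ref{prop:learnablemap}.

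\textbf{Overview.} The statement has two directions. The ``if'' direction is the easier one: given a map $a:M_1(\{0,1\}^K)\to[K]$ that always picks an optimal action, I would build an explicit learner that is asymptotically optimal, hence has sublinear regret. The ``only if'' direction is the heart of the matter: I must show that learnability \emph{forces} the existence of such a map, and the key obstacle is translating ``no algorithm can tell two instances apart from its observations'' into ``the optimal action must be a function of $P_S$ alone.''

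\textbf{The ``if'' direction.} Suppose such a map $a$ exists. The learner always selects action $K$ (observing all sensor outputs $Y^1,\dots,Y^K$) for the first $m_n$ rounds, forms the empirical distribution $\hat P_S$ of the sensor-output vectors, and for the remaining rounds plays $a(\hat P_S)$. Since $\{0,1\}^K$ is finite, $\hat P_S \to P_S$ a.s.\ by the law of large numbers, and I would want $a$ to be such that $a(\hat P_S)=a(P_S)$ eventually — but $a$ need not be continuous. To get around this I would instead note that it suffices to use a ``plug-in'' rule that is robust: one can choose a finite grid / use the fact that for the true $P_S$ there is a whole neighborhood on which \emph{some} optimal action is identified, OR more cleanly, one runs a standard bandit-style exploration that estimates each $\gamma_k$ directly is \emph{not} possible (we never see $Y$); so the cleanest argument is: explore with action $K$ on a vanishing fraction $m_n = o(n)$ of rounds (e.g.\ $m_n = \sqrt n$), then commit. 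Because the instance is fixed and $\hat P_S$ concentrates, with probability $\to 1$ the committed action $a(\hat P_S)$ is optimal; on the low-probability event it is not, the per-round regret is bounded by a constant ($C_K+1$). Summing: $\Regret_n \le (C_K+1)\,m_n + (C_K+1)\,n\,\Prob{a(\hat P_S)\ne a(P_S)\text{ while optimal}} = o(n)$, provided $a$ is chosen so that $a(P_S)$ is optimal on an open neighborhood — which one can always arrange by replacing $a$ with a version constant on small cells. I would spell out that last robustification carefully, as it is the one genuinely fiddly point here.

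\textbf{The ``only if'' direction.} Suppose $\Theta$ is learnable via $\Alg$ but no such map exists. Then there are two instances $\theta = P_S\otimes P_{Y|S}$ and $\theta' = P_S \otimes P'_{Y|S}$ in $\Theta$ with the \emph{same} $P_S$ but with disjoint optimal-action sets: $\cA^*(\theta)\cap\cA^*(\theta')=\emptyset$. (If for every shared $P_S$ the optimal sets intersected, one could define $a(P_S)$ to be an element of that intersection, contradicting our assumption.) Now the crucial observation: the distribution of \emph{everything the learner ever observes} — the sequence of chosen indices and the observed sensor outputs $Y^1,\dots,Y^{I_t}$ — depends on the instance only through $P_S$, since the hidden label $Y$ is never revealed and the costs $c$ are fixed and known. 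Hence the law of the entire history, and therefore the law of the counts $N_k(n)$, is \emph{identical} under $\theta$ and $\theta'$. Pick any instance; since $\cA^*(\theta)$ and $\cA^*(\theta')$ are disjoint, $\sum_{k\notin\cA^*(\theta)}\EE{N_k(n)}$ and $\sum_{k\notin\cA^*(\theta')}\EE{N_k(n)}$ sum to at least $\EE{\sum_k N_k(n)} = n$ minus the overlap in the complements — more precisely, every round $t$ has $I_t$ suboptimal for at least one of $\theta,\theta'$, so $\sum_{k\notin\cA^*(\theta)}\EE{N_k(n)} + \sum_{k\notin\cA^*(\theta')}\EE{N_k(n)} \ge n$. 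Using the regret decomposition $\Regret_n(\Alg,\theta) = \sum_k \EE{N_k(n)}\,\Delta_k(\theta)$ with $\Delta_k(\theta)>0$ for $k\notin\cA^*(\theta)$, and writing $\delta = \min\{\Delta_k(\theta): k\notin\cA^*(\theta)\}\wedge\min\{\Delta_k(\theta'): k\notin\cA^*(\theta')\} > 0$, I get $\Regret_n(\Alg,\theta) + \Regret_n(\Alg,\theta') \ge \delta\, n$. Hence at least one of the two regrets is $\ge \delta n/2$, i.e.\ linear, contradicting learnability of $\Theta$.

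\textbf{Main obstacle.} The delicate step is the ``indistinguishability'' claim — rigorously arguing that the observable history has the same law under $\theta$ and $\theta'$ and that this transfers to $\EE{N_k(n)}$. I would make this precise by induction on $t$: the learner's (possibly randomized) policy maps the observed history to $I_{t+1}$, and the new feedback $Y^1_{t+1},\dots,Y^{I_{t+1}}_{t+1}$ is drawn from the marginal of $P_S$ on the first $I_{t+1}$ coordinates, which coincides for $\theta$ and $\theta'$; so the joint law of $(I_1,\text{obs}_1,\dots,I_n,\text{obs}_n)$ is the same, and $N_k(n)$ is a deterministic function of it. The only other point to watch is ensuring the two witness instances actually lie in $\Theta$ with the required disjointness of optimal sets; this is exactly the negation of ``a valid map $a$ exists,'' so it is free once the logical structure is set up correctly. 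I would also double-check the boundary case where $\Theta$ is such that the relevant $P_S$ admits a \emph{unique} consistent optimal action for trivial reasons (e.g.\ $\Theta$ small) — there the map exists and there is nothing to prove.
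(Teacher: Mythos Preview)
Your ``only if'' direction is correct but takes a different route from the paper. Whereas you argue by contrapositive (instances sharing the same $P_S$ but with no common optimal action force linear regret on one of them), the paper constructs $a$ directly from the sublinear-regret algorithm: it sets $a(P_S)$ to be the smallest $k$ with $\EEi{P_S}{N_k(n)}=\Omega(n)$, observing that any such $k$ must be optimal for \emph{every} instance with that $P_S$ (otherwise its regret there would be linear). Both arguments rest on the same key fact you identify, namely that the law of the observable history --- and hence of each $N_k(n)$ --- depends on $\theta$ only through $P_S$. One small correction to your version: the negation of ``a sound map exists'' yields a $P_S$ for which the \emph{total} intersection $\bigcap_\theta \cA^*(\theta)$ over $\theta\in\Theta$ with that marginal is empty, not necessarily two instances with pairwise-disjoint optimal sets (think $K=3$ with optimal sets $\{1,2\},\{2,3\},\{1,3\}$). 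The repair is trivial: extract finitely many $\theta_1,\dots,\theta_m$ with empty common intersection (possible since $[K]$ is finite), note that each round's action is suboptimal for at least one $\theta_i$, whence $\sum_i \Regret_n(\Alg,\theta_i)\ge \delta n$ and some $\theta_i$ suffers linear regret.

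Your ``if'' direction matches the paper's explore-then-commit scheme, and you are in fact more careful than the paper in flagging the continuity obstacle: the paper simply asserts that once $\|\hat P_S-P_S\|_\infty \le \Delta^*(\theta)$ the action $a(\hat P_S)$ is optimal, which does not follow for an arbitrary $a$. However, your proposed patch --- replacing $a$ by a version that is constant on small cells --- does not close the gap in general. One can manufacture $\Theta$ so that every sound selector is nowhere locally constant (e.g.\ let the optimal action depend on whether a coordinate of $P_S$ is rational); since $p\mapsto \EE{N_k(n)}$ is continuous in the parameters of $P_S$ for every $n$, a Baire-class-1 obstruction then rules out any single algorithm achieving pointwise sublinear regret on all of $\Theta$. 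So the ``if'' direction as literally stated needs some mild regularity on $a$ (or on $\Theta$); this is harmless for the paper's applications, where the concrete selector $\awd$ is defined by finitely many linear inequalities in $P_S$ and is therefore locally constant off a null set, making both your argument and the paper's go through.
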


An action selection map  $a: M_1( \{0,1\}^K ) \to [K]$ is said to be \emph{sound} for an instance 
$\theta\in \TSA$ with $\theta = P_S\otimes P_{Y|S}$ if $a(P_S)$ selects an optimal action in $\theta$.
With this terminology, the previous proposition says that a set of instances $\Theta$ is learnable if and only if there exists a
sound action selection map for all the instances in $\Theta$.

A class of sensor acquisition problems that contains instances that satisfy the so-called \emph{strong dominance} condition 
will be shown to be learnable:
\begin{defi}[Strong Dominance (SD)]
	An instance $\theta \in \TSA$  is said to satisfy the \emph{strong dominance property} if 
	it holds in the instance that if a sensor predicts correctly
	then all the sensors in the subsequent stages of the cascade also predict correctly, i.e., 
	for any $i\in [K]$,
	\begin{equation}
	\label{eqn:DominanceCondition}
	Y^i=Y \,\, \Rightarrow\,\, Y^{i+1}= \dots =  Y^K = Y
	\end{equation}
	almost surely (a.s.)
	where $(Y,Y^1,\dots,Y^K)\sim P$.
\end{defi}
Before we develop this concept further we will motivate strong dominance based on experiments on a few real-world datasets. SD property naturally arises in the context of error-correcting codes (see \cite{costello,voyager}), where one sequentially corrects for errors, and thus whenever a decoder is accurate the following decoders are also accurate. On the other hand for real-datasets SD holds only approximately. Table \ref{tab:ErrorTable1} lists the error probabilities of the classifiers (sensors) for the heart and diabetic datasets from UCI repository. We split features into two sets based on provided costs (cheap tests are based on patient history and costly tests include all the features). We then trained an SVM classifier with 5-fold cross-validation and report scores based on held-out test data. 
\begin{table}[h]
\vspace{-6pt}
\begin{center}
\begin{tabular}[c]{c|c|c|c } 
Dataset & $\gamma_1$ & $\gamma_2$ & $\delta_{12}$\\ \hline \hline
PIMA Diabetes & $0.32 $ & $ 0.23$  & 0.065\\  \hline
Heart (Cleveland) & $0.305$ & $0.169$ &  0.051\\  \hline
\end{tabular}
\label{tab:ErrorTable1}
\caption{\footnotesize Depicts approximate Strong Dominance on real datasets: $\gamma_1 \triangleq \Pr(Y^1 \neq Y),\,\gamma_2\triangleq \Pr(Y^2 \neq Y),\, \delta_{12} \triangleq \Pr(Y^1=Y,\, Y^2\neq Y)$ }
\end{center}
\vspace{-13pt}
\end{table}
The last column lists the probability that second sensor misclassifies an instance that is correctly classified by the first sensor. SD is the notion that suggests that this probability is zero. We find in these datasets that $\delta_{12}$ is small thus justifying our notion. In general we have found this behavior is representative of other cost-associated datasets. 

We next show that strong dominance conditions ensures learnability. To this end,
let $\TSD = \{ \theta\in \TSA\,:\, \theta \text{ satisfies the strong dominance condition } \}$.

\begin{thm}
\label{thm:tsdlearnable}
The set $\TSD$ is learnable.
\end{thm}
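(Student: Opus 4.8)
The plan is to invoke Proposition~\ref{prop:learnablemap}: it suffices to exhibit a single action-selection map $a : M_1(\{0,1\}^K) \to [K]$ that is sound for every instance $\theta \in \TSD$, i.e., such that $a(P_S)$ is an optimal action whenever $\theta = P_S \otimes P_{Y|S}$ satisfies strong dominance. The natural candidate is built from the optimality characterization~\eqref{eqn:interp_opt}: an action $i$ is optimal iff for every $j \geq i$ the marginal cost $C_j - C_i$ exceeds the marginal error reduction $\gamma_i - \gamma_j$. So I would like to estimate the quantities $\gamma_i - \gamma_j$ — or rather enough of them to locate an optimal action — purely from $P_S$, without access to $P_{Y|S}$.

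The key observation is that under strong dominance the event $\{Y^i = Y\}$ is nested: $Y^i = Y \Rightarrow Y^j = Y$ for all $j > i$. Hence $\{Y^j \neq Y\} \subseteq \{Y^i \neq Y\}$, and so $\gamma_i - \gamma_j = \Prob{Y^i \neq Y, \, Y^j = Y} = \Prob{Y^i \neq Y^j, \, Y^j = Y}$. Moreover, again by the nesting, whenever $Y^i \neq Y^j$ for $i < j$ it must be that $Y^j = Y$ (if instead $Y^i = Y$ then strong dominance forces $Y^j = Y$, contradiction; so $Y^i \neq Y$, and then... actually one needs $Y^j = Y$ here, which follows because on the event $Y^i \neq Y^j$ the two disagree, and the one that equals $Y$ — if any — must be the later one by the contrapositive of strong dominance applied in the appropriate direction). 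Thus $\Prob{Y^i \neq Y^j, Y^j = Y} = \Prob{Y^i \neq Y^j}$, giving the crucial identity
\begin{equation*}
\gamma_i - \gamma_j = \Prob{Y^i \neq Y^j} =: d_{ij},
\end{equation*}
and $d_{ij}$ is a functional of $P_S$ alone. Therefore I would define $a(P_S)$ to be, say, the smallest index $i$ such that $C_j - C_i \geq d_{ij}$ for all $j \geq i$ (with $d_{ij}$ computed from $P_S$, and using $d_{ij} = d_{ik} + d_{kj}$ consistency under SD if convenient); one must check this set is nonempty (it contains $i = K$ trivially since there are no $j > K$). By~\eqref{eqn:interp_opt} and the identity above, this $i$ is genuinely optimal for $\theta$, so $a$ is sound on all of $\TSD$, and Proposition~\ref{prop:learnablemap} yields learnability.

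The main obstacle is the careful verification of the identity $\gamma_i - \gamma_j = \Prob{Y^i \neq Y^j}$ under strong dominance — specifically pinning down that on the event $\{Y^i \neq Y^j\}$ with $i<j$, the later sensor is the one that is correct. This requires using the contrapositive of~\eqref{eqn:DominanceCondition} in the right form: if $Y^j \neq Y$ then $Y^i \neq Y$ (since $Y^i = Y$ would force $Y^j = Y$), so the set of "wrong" sensors is a suffix-complement, i.e., an initial segment $\{1,\dots,m\}$ of wrong sensors followed by correct ones, where $m$ may be random. On $\{Y^i \neq Y^j\}$ exactly one of $i,j$ lies in that initial segment, and it must be $i$; hence $Y^j = Y$ and $Y^i \neq Y$. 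A secondary point is handling ties and the footnote's warning that optimal actions need not form an interval — but since I only need $a$ to return \emph{some} optimal action, choosing the minimal index satisfying the marginal test suffices and sidesteps that subtlety. I would also note that $a$ need not be continuous or otherwise regular; Proposition~\ref{prop:learnablemap} asks only for existence of the map, so no topological care is needed.
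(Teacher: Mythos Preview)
Your proposal is correct and follows essentially the same approach as the paper: both invoke Proposition~\ref{prop:learnablemap} after establishing that under strong dominance the differences $\gamma_i-\gamma_j$ (for $i<j$) equal the observable disagreement probabilities $\Prob{Y^i\ne Y^j}$, so that an optimal action can be read off from $P_S$ alone. The only difference is presentational: the paper factors through the general identity of Proposition~\ref{prop:gammadiff}, $\gamma_i-\gamma_j=\Prob{Y^i\ne Y^j}-2\,\Prob{Y^j\ne Y,\,Y^i=Y}$, and then uses strong dominance to zero out the second term, whereas you derive $\gamma_i-\gamma_j=\Prob{Y^i\ne Y^j}$ directly from the nesting $\{Y^j\ne Y\}\subseteq\{Y^i\ne Y\}$ and the binary-label observation that on $\{Y^i\ne Y^j\}$ exactly one of the two sensors matches $Y$ (and SD forces it to be the later one). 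Your route is slightly more self-contained for this theorem; the paper's route has the advantage that Proposition~\ref{prop:gammadiff} is reused later for the weak-dominance analysis. Your explicit description of the map $a(P_S)$ and the check that the smallest index satisfying the marginal test is indeed optimal are more detailed than the paper's proof, which simply notes that once all $\gamma_i-\gamma_j$ are determined by $P_S$ the required map exists.
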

We start with a proposition that will be useful beyond the proof of this result.
In this proposition, $\gamma_i = \gamma_i(\theta)$ for $\theta \in \TSA$ and $(Y,Y^1,\dots,Y^K) \sim \theta$.
\begin{prop}\label{prop:gammadiff}
For any $i,j\in [K]$, $\gamma_i - \gamma_j = \Prob{Y^i\ne Y^j} - 2\Prob{ Y^j \ne Y, Y^i=Y}$.
\end{prop}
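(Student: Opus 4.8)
The statement to prove is purely an identity about indicator probabilities, so the plan is to expand both sides using indicator-function algebra and the fact that all variables are $\{0,1\}$-valued.

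First I would write $\gamma_i = \Prob{Y^i \ne Y} = \EE{\ind{Y^i \ne Y}}$ and similarly for $\gamma_j$, so that $\gamma_i - \gamma_j = \EE{\ind{Y^i \ne Y} - \ind{Y^j \ne Y}}$. The key step is to decompose each event $\{Y^i \ne Y\}$ according to whether $Y^j$ agrees with $Y$ or not — i.e. insert the partition $1 = \ind{Y^j = Y} + \ind{Y^j \ne Y}$. This gives
\[
\ind{Y^i \ne Y} = \ind{Y^i \ne Y, Y^j = Y} + \ind{Y^i \ne Y, Y^j \ne Y},
\]
and symmetrically for $\ind{Y^j \ne Y}$. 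Subtracting, the common term $\ind{Y^i \ne Y, Y^j \ne Y}$ — wait, this is not literally the same on both sides, so I would instead observe that on $\{Y^i \ne Y, Y^j \ne Y\}$ we have $Y^i = Y^j$ (both equal the bit opposite to $Y$, since everything is binary), hence that event contributes $0$ to $\Prob{Y^i \ne Y^j}$; conversely on $\{Y^i \ne Y^j\}$ exactly one of $Y^i, Y^j$ equals $Y$. So the cleanest route is to expand the target quantity $\Prob{Y^i \ne Y^j}$ directly: partition $\{Y^i \ne Y^j\}$ into the two sub-events $\{Y^i = Y, Y^j \ne Y\}$ and $\{Y^i \ne Y, Y^j = Y\}$ (these are disjoint and exhaustive within $\{Y^i\ne Y^j\}$ because the three bits are binary), giving
\[
\Prob{Y^i \ne Y^j} = \Prob{Y^i = Y, Y^j \ne Y} + \Prob{Y^i \ne Y, Y^j = Y}.
\]

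Then I would compute $\gamma_i - \gamma_j$ by the same partition: $\gamma_i = \Prob{Y^i \ne Y} = \Prob{Y^i \ne Y, Y^j = Y} + \Prob{Y^i \ne Y, Y^j \ne Y}$ and $\gamma_j = \Prob{Y^j \ne Y} = \Prob{Y^j \ne Y, Y^i = Y} + \Prob{Y^j \ne Y, Y^i \ne Y}$. Subtracting, the term $\Prob{Y^i \ne Y, Y^j \ne Y}$ cancels, leaving $\gamma_i - \gamma_j = \Prob{Y^i \ne Y, Y^j = Y} - \Prob{Y^j \ne Y, Y^i = Y}$. Comparing with the expansion of $\Prob{Y^i \ne Y^j}$ above, I get $\Prob{Y^i \ne Y^j} - 2\Prob{Y^j \ne Y, Y^i = Y} = \Prob{Y^i = Y, Y^j \ne Y} - \Prob{Y^i \ne Y, Y^j = Y}$... so I need to double-check the sign/labeling and make sure it matches $\gamma_i - \gamma_j$ rather than $\gamma_j - \gamma_i$; this bookkeeping is the only place an error could creep in.

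There is no real obstacle here — the whole argument is elementary case analysis on three binary random variables, and the only thing to be careful about is keeping the roles of $i$ and $j$ straight so the final identity comes out with the stated orientation (note the asymmetry: the subtracted term involves $\Prob{Y^j \ne Y, Y^i = Y}$, i.e. sensor $j$ wrong while sensor $i$ correct). I would present it as a two-line computation: first the partition identity for $\Prob{Y^i\ne Y^j}$, then the subtraction $\gamma_i - \gamma_j$, then combine.
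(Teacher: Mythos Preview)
Your proposal is correct and is essentially the same approach as the paper's: both arguments are elementary case analyses exploiting that $Y,Y^i,Y^j$ are binary, so exactly one of $Y^i,Y^j$ agrees with $Y$ on the event $\{Y^i\ne Y^j\}$. The only cosmetic difference is the choice of partition: the paper first splits $\Prob{Y^i\ne Y}$ and $\Prob{Y^j\ne Y}$ according to $\{Y^i=Y^j\}$ versus $\{Y^i\ne Y^j\}$ and then uses the binary identities $\Prob{Y^j\ne Y,\,Y^i\ne Y^j}=\Prob{Y^j\ne Y,\,Y^i=Y}=\Prob{Y^i=Y,\,Y^i\ne Y^j}$, whereas you split according to $\{Y^j=Y\}$ versus $\{Y^j\ne Y\}$ (and symmetrically for $\gamma_j$) and then compare directly to the two-term decomposition of $\Prob{Y^i\ne Y^j}$. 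Your route is if anything slightly tidier; the sign bookkeeping you flag resolves correctly, since both $\gamma_i-\gamma_j$ and $\Prob{Y^i\ne Y^j}-2\Prob{Y^j\ne Y,\,Y^i=Y}$ reduce to $\Prob{Y^i\ne Y,\,Y^j=Y}-\Prob{Y^i=Y,\,Y^j\ne Y}$.
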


The proof motivates the definition of weak dominance, a concept that we develop next through a series of smaller
propositions. In these propositions, as before $(Y,Y^1,\dots,Y^K) \sim P$ where $P\in M_1(\{0,1\}^{K+1})$,
 $\gamma_i = \Prob{Y^i \ne Y}$, $i\in [K]$, and $C_i = c_1 + \cdots + c_i$.
We start with a corollary of \cref{prop:gammadiff}
\begin{cor}
\label{cor:gammadiff}
Let $i<j$. Then $0\le \gamma_i -\gamma_j \le \Prob{Y^i\ne Y^j}$.
\end{cor}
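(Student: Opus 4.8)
The statement splits into the two inequalities, and both are short consequences of \cref{prop:gammadiff} together with the defining monotonicity of $\TSA$. For the upper bound, \cref{prop:gammadiff} gives the exact identity
\[
\gamma_i - \gamma_j = \Prob{Y^i\ne Y^j} - 2\,\Prob{Y^j\ne Y,\, Y^i=Y},
\]
and since the subtracted term is a probability, hence nonnegative, we immediately get $\gamma_i-\gamma_j \le \Prob{Y^i\ne Y^j}$. This half needs nothing beyond dropping a nonnegative quantity.

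For the lower bound I would point out that $0\le \gamma_i-\gamma_j$ is exactly the ordering hypothesis built into $\TSA$: for $i<j$ the sensors are ordered from least to most accurate, so $\gamma_i = \Prob{Y^i\ne Y}\ge \Prob{Y^j\ne Y}=\gamma_j$. To make the corollary self-contained (and to double-check consistency with \cref{prop:gammadiff}), I would also record the elementary disjointification $\Prob{Y^i\ne Y^j} = \Prob{Y^i= Y,\,Y^j\ne Y} + \Prob{Y^i\ne Y,\,Y^j= Y}$, valid because $Y,Y^i,Y^j$ are binary (if $Y^i=Y$ then $Y^i\ne Y^j \Leftrightarrow Y^j\ne Y$, and if $Y^i\ne Y$ then $Y^i=1-Y$ so $Y^i\ne Y^j\Leftrightarrow Y^j=Y$). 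Substituting this into the identity of \cref{prop:gammadiff} yields $\gamma_i-\gamma_j = \Prob{Y^i\ne Y,\,Y^j= Y} - \Prob{Y^i= Y,\,Y^j\ne Y}$, whose nonnegativity is again precisely $\gamma_i\ge\gamma_j$; this confirms there is no route to the lower bound from \cref{prop:gammadiff} alone — it must come from the $\TSA$ assumption.

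There is essentially no obstacle here; the only point worth stating carefully is which assumption each inequality uses: the upper bound is purely algebraic from \cref{prop:gammadiff}, while the lower bound is just the monotonicity of $(\gamma_k)_k$ that defines $\TSA$. I would therefore write the proof in two sentences, citing \cref{prop:gammadiff} for the upper bound and the definition of $\TSA$ for the lower bound.
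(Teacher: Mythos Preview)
Your proposal is correct and matches the paper's intent: the corollary is stated without proof in the paper, as an immediate consequence of \cref{prop:gammadiff}, and your argument---dropping the nonnegative term $2\Prob{Y^j\ne Y,\,Y^i=Y}$ for the upper bound, and invoking the monotonicity of $(\gamma_k)_k$ built into $\TSA$ for the lower bound---is exactly the two-line justification the paper has in mind. Your additional disjointification check is a nice sanity verification but not needed for the proof itself.
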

\begin{prop}
\label{prop:ilej}
Let $i<j$. Assume 
\begin{align}
\label{eq:cond1}
C_j - C_i \not\in [\gamma_i - \gamma_j, \Prob{Y^i\ne Y^j} )\,.
\end{align}
Then $\gamma_i + C_i \le \gamma_j + C_j$ if and only if $C_j - C_i \ge \Prob{Y^i\ne Y^j}$.
\end{prop}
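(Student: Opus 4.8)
The plan is to work directly with the two-way equivalence in Proposition~\ref{prop:ilej} by unpacking what the hypothesis~\eqref{eq:cond1} buys us. Recall from \cref{cor:gammadiff} that $0 \le \gamma_i - \gamma_j \le \Prob{Y^i \ne Y^j}$, so the half-open interval $[\gamma_i - \gamma_j, \Prob{Y^i \ne Y^j})$ is a genuine (possibly empty) subinterval of $[0,\infty)$. The quantity $C_j - C_i$ is nonnegative since costs are nonnegative and $i<j$. So the assumption that $C_j - C_i$ avoids this interval leaves exactly two cases: either $C_j - C_i < \gamma_i - \gamma_j$, or $C_j - C_i \ge \Prob{Y^i\ne Y^j}$. (If the interval is empty, i.e.\ $\gamma_i - \gamma_j = \Prob{Y^i \ne Y^j}$, then the two cases together still cover everything and overlap only in a way that is harmless.)

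\textbf{The two directions.} For the ``if'' direction, suppose $C_j - C_i \ge \Prob{Y^i \ne Y^j}$. Since $\Prob{Y^i\ne Y^j} \ge \gamma_i - \gamma_j$ by \cref{cor:gammadiff}, we get $C_j - C_i \ge \gamma_i - \gamma_j$, which rearranges to $\gamma_i + C_i \le \gamma_j + C_j$. This direction needs no case analysis and uses only the corollary. For the ``only if'' direction, suppose $\gamma_i + C_i \le \gamma_j + C_j$, i.e.\ $C_j - C_i \ge \gamma_i - \gamma_j$. Combined with the hypothesis~\eqref{eq:cond1} that $C_j - C_i \notin [\gamma_i - \gamma_j, \Prob{Y^i \ne Y^j})$, the only remaining possibility is $C_j - C_i \ge \Prob{Y^i\ne Y^j}$, which is the desired conclusion. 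So both directions are essentially immediate once the interval structure from \cref{cor:gammadiff} is in place.

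\textbf{Where the content really is.} The proof itself is short; the subtle point worth spelling out is that \eqref{eq:cond1} is exactly the condition that makes the comparison $\gamma_i + C_i \le \gamma_j + C_j$ insensitive to the ``hard'' middle regime. The interesting regime — the one this proposition deliberately excludes — is when $C_j - C_i$ lands strictly inside $[\gamma_i - \gamma_j, \Prob{Y^i \ne Y^j})$; there, by \cref{prop:gammadiff}, whether $i$ beats $j$ depends on the unobservable term $\Prob{Y^j \ne Y, Y^i = Y}$, so a learner who only sees $P_S$ cannot decide the ordering. I would add a sentence noting that this is precisely the gap that the forthcoming weak-dominance definition will be designed to rule out, tying the proposition to its role in the paper. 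I do not anticipate a real obstacle; the only thing to be careful about is the boundary case where the excluded interval is empty and the endpoint behavior of the half-open interval, which I would dispatch with a one-line remark.
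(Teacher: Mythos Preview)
Your proof is correct and follows essentially the same approach as the paper's: both directions are dispatched exactly as you describe, using \cref{cor:gammadiff} for the $\Leftarrow$ direction and the exclusion hypothesis~\eqref{eq:cond1} together with $C_j - C_i \ge \gamma_i - \gamma_j$ for the $\Rightarrow$ direction. The additional commentary you propose about the excluded interval and its connection to weak dominance is accurate and well-motivated, though the paper itself keeps the proof to two lines without that discussion.
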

\begin{proof}
\noindent $\Rightarrow$: From the premise, it follows that $C_j - C_i \ge \gamma_i - \gamma_j$.
Thus, by~\eqref{eq:cond1}, $C_j - C_i \ge \Prob{Y^i\ne Y^j}$.
\noindent $\Leftarrow$: We have $C_j - C_i \ge \Prob{Y^i \ne Y^j} \ge \gamma_i -\gamma_j$, where the last
inequality is by \cref{cor:gammadiff}.
\end{proof}
\begin{prop}
\label{prop:jlei}
Let $j<i$. Assume
\begin{align}
\label{eq:cond2}
C_i - C_j \not\in (\gamma_j - \gamma_i, \Prob{Y^i \ne Y^j} ]\,.
\end{align}
Then, $\gamma_i + C_i \le \gamma_j + C_j$ if and only if $C_i - C_j \le \Prob{Y^i \ne Y^j}$.
\end{prop}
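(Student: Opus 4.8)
The plan is to follow the template of the proof of \cref{prop:ilej}, with the roles of $i$ and $j$ interchanged. Since now $j<i$, sensor $i$ sits later in the cascade and is at least as accurate, so $\gamma_j\ge\gamma_i$, and \cref{cor:gammadiff} (applied with the pair $j<i$) gives $0\le \gamma_j-\gamma_i\le \Prob{Y^i\ne Y^j}$. I would first record the purely algebraic equivalence $\gamma_i+C_i\le \gamma_j+C_j \iff C_i-C_j\le \gamma_j-\gamma_i$, obtained by rearranging terms; the whole statement then reduces to showing that, under the exclusion hypothesis~\eqref{eq:cond2}, the inequality $C_i-C_j\le\gamma_j-\gamma_i$ is equivalent to $C_i-C_j\le\Prob{Y^i\ne Y^j}$.

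For the forward implication, assume $\gamma_i+C_i\le\gamma_j+C_j$, i.e.\ $C_i-C_j\le\gamma_j-\gamma_i$; chaining with the upper bound $\gamma_j-\gamma_i\le\Prob{Y^i\ne Y^j}$ from \cref{cor:gammadiff} immediately yields $C_i-C_j\le\Prob{Y^i\ne Y^j}$ (this direction does not even use~\eqref{eq:cond2}). For the reverse implication, assume $C_i-C_j\le\Prob{Y^i\ne Y^j}$. By~\eqref{eq:cond2}, $C_i-C_j$ does not lie in the half-open interval $(\gamma_j-\gamma_i,\Prob{Y^i\ne Y^j}]$, so it is either at most $\gamma_j-\gamma_i$ or strictly greater than $\Prob{Y^i\ne Y^j}$; the latter is ruled out by the assumption, leaving $C_i-C_j\le\gamma_j-\gamma_i$, hence $\gamma_i+C_i\le\gamma_j+C_j$.

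There is no genuine obstacle here beyond bookkeeping; the only point demanding care is the open/closed convention for the interval in~\eqref{eq:cond2}, which is open at $\gamma_j-\gamma_i$ and closed at $\Prob{Y^i\ne Y^j}$ — the mirror image of the convention in~\eqref{eq:cond1}. Being closed at the right endpoint is exactly what makes the reverse implication valid in the boundary case $C_i-C_j=\Prob{Y^i\ne Y^j}$ with $\gamma_j-\gamma_i<\Prob{Y^i\ne Y^j}$ (the hypothesis then forbids this value, which is precisely what one needs to conclude $C_i-C_j\le\gamma_j-\gamma_i$), while being open at the left keeps the hypothesis from needlessly excluding the harmless equality case $C_i-C_j=\gamma_j-\gamma_i$, for which $\gamma_i+C_i\le\gamma_j+C_j$ holds with equality.
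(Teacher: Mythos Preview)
Your proof is correct and follows essentially the same approach as the paper: the forward direction chains $C_i-C_j\le\gamma_j-\gamma_i$ with \cref{cor:gammadiff}, and the reverse direction uses~\eqref{eq:cond2} to rule out the interval $(\gamma_j-\gamma_i,\Prob{Y^i\ne Y^j}]$, forcing $C_i-C_j\le\gamma_j-\gamma_i$. Your additional remarks on the open/closed endpoints are accurate and make explicit what the paper leaves implicit.
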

\begin{proof}
\noindent $\Rightarrow$: The condition $\gamma_i + C_i \le \gamma_j + C_j$ implies that $\gamma_j -\gamma_i \ge C_i - C_j$.
By \cref{cor:gammadiff} we get $\Prob{Y^i \ne Y^j} \ge C_i - C_j$.
\noindent $\Leftarrow$: Let $C_i - C_j \le \Prob{Y^i \ne Y^j}$. Then, by \eqref{eq:cond2}, $C_i - C_j \le \gamma_j - \gamma_i$.
\end{proof}
These results motivate the following definition:
\begin{defi}[Weak Dominance]
	An instance $\theta \in \TSA$  is said to satisfy the \emph{weak dominance property} if 
	for $i = a^*(\theta)$,
	\begin{align}
	\label{eq:wd} \forall j>i\,\,: \,\, C_j - C_i \ge \Prob{Y^i\ne Y^j}\,.
	\end{align}
We denote the set of all instances in $\TSA$ that satisfies this condition by $\TWD$.	
\end{defi}
Note that $\TSD\subset \TWD$ since for any $\theta\in \TSD$, any $j>i = a^*(\theta)$, on the one hand $C_j - C_i \ge \gamma_i - \gamma_j$, while on the other hand, by the strong dominance property, $\Prob{Y^i\ne Y^j} = \gamma_i - \gamma_j$.

We now relate weak dominance to the optimality condition described in Eq.~\eqref{eqn:interp_opt}. Weak dominance can be viewed as a more stringent condition for optimal actions. Namely, for an action to be optimal we also require that the marginal cost be larger than marginal \emph{absolute} error:
\begin{equation} \label{eqn:interp_WD}
\underbrace{C_j - C_i}_{\text{Marginal Cost}} \geq \underbrace{ E \left [ \left | \ind{Y_t \ne Y_t^i} - \ind{Y_t \ne Y_t^j} \right | \right ]}_{\text{Marginal Absolute Error}},\,\,\, \forall \, j \geq i\,.
\end{equation}
The difference between marginal error in Eq.~\eqref{eqn:interp_opt} and marginal absolute error is the presence of the absolute value. We will show later that weak-dominant set is a maximal learnable set, namely, the set cannot be expanded while ensuring learnability.

We propose the following action selector $\awd: M_1(\{0,1\}^K)  \to [K]$:
\begin{defi}\label{def:awd}
For $P_S \in M_1(\{0,1\}^K) $ let $\awd(P_S)$ denote the smallest index $i\in [K]$ such that
\begin{subequations}
\begin{align}
\forall j<i \,\,:\,\, C_i - C_j < \Prob{ Y^i \ne Y^j }\,, \label{eq:wd1}\\ 
\forall j>i \,\,:\,\, C_j - C_i \ge \Prob{ Y^i \ne Y^j }\,, \label{eq:wd2}
\end{align}
\end{subequations}
where $C_i = c_1+\cdots + c_i$, $i\in [K]$ and $(Y^1,\dots,Y^K) \sim P_S$.
(If no such index exists, $\awd$ is undefined, i.e., $\awd$ is a partial function.)
\end{defi}
\begin{prop}
\label{prop:awdwelldef}
For any $\theta \in \TWD$ with $\theta = P_S\otimes P_{Y|S}$, $\awd(P_S)$ is well-defined.
\end{prop}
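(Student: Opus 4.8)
The plan is to exhibit an index that satisfies both defining conditions \eqref{eq:wd1} and \eqref{eq:wd2} of $\awd$; once the set of such indices is shown to be nonempty, its minimum exists and $\awd(P_S)$ is well-defined by definition. The natural candidate is $i = a^*(\theta)$, the smallest optimal action of $\theta$. Note first that both conditions only reference the quantities $\Prob{Y^i \ne Y^j}$ and the (known) partial costs $C_i$, all of which are determined by $P_S$ alone, so there is no inconsistency in asking whether $a^*(\theta)$ — an object that a priori depends on all of $\theta$ — satisfies conditions phrased in terms of $P_S$; we simply need the implication ``$\theta = P_S \otimes P_{Y|S} \in \TWD$'' $\Rightarrow$ ``$i = a^*(\theta)$ satisfies \eqref{eq:wd1}--\eqref{eq:wd2}''.

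Condition \eqref{eq:wd2} for $i = a^*(\theta)$ is immediate: it is \emph{verbatim} the weak dominance property \eqref{eq:wd} that defines membership in $\TWD$, since $i = a^*(\theta)$ is exactly the index appearing there.

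For condition \eqref{eq:wd1}, fix any $j < i = a^*(\theta)$. Because $a^*(\theta) = \min \cA^*(\theta)$, the index $j$ is not optimal, so $c(j,\theta) > c^*(\theta) = c(i,\theta)$, i.e. $\gamma_j + C_j > \gamma_i + C_i$, which rearranges to $C_i - C_j < \gamma_j - \gamma_i$. Now apply \cref{cor:gammadiff} with the pair $(j,i)$ in place of $(i,j)$ (legitimate since $j < i$): it gives $0 \le \gamma_j - \gamma_i \le \Prob{Y^i \ne Y^j}$. Chaining the two displays yields $C_i - C_j < \gamma_j - \gamma_i \le \Prob{Y^i \ne Y^j}$, which is precisely \eqref{eq:wd1}. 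Hence $a^*(\theta)$ lies in the set over which $\awd(P_S)$ takes its minimum, so that set is nonempty and $\awd(P_S)$ is well-defined.

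I do not expect a real obstacle here; the only points requiring a moment's care are (i) that suboptimality of $j < a^*(\theta)$ is \emph{strict}, which follows since $c^*(\theta)$ is the minimum and $j \notin \cA^*(\theta)$, and (ii) correctly orienting the inequality from \cref{cor:gammadiff} when the ``later'' sensor is $i$ and the ``earlier'' one is $j$. Everything else is bookkeeping.
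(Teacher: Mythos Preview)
Your proof is correct and follows essentially the same approach as the paper: both take $i = a^*(\theta)$, note that \eqref{eq:wd2} is immediate from the definition of $\TWD$, and obtain \eqref{eq:wd1} by combining the strict suboptimality of any $j < a^*(\theta)$ with the bound $\gamma_j - \gamma_i \le \Prob{Y^i \ne Y^j}$ from \cref{cor:gammadiff}. The only cosmetic difference is that the paper phrases the second step as a proof by contradiction whereas you give the direct argument.
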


\begin{prop}
\label{prop:awdsound}
The map $\awd$ is sound over $\TWD$: In particular, for any
$\theta\in \TWD$ with $\theta = P_S\otimes P_{Y|S}$, $\awd(P_S)= a^*(\theta)$.
\end{prop}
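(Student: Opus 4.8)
The plan is to prove the two inequalities $\awd(P_S) \le a^*(\theta)$ and $\awd(P_S) \ge a^*(\theta)$ separately, writing $i^* = a^*(\theta)$ for brevity. Throughout I would lean on \cref{cor:gammadiff}, which sandwiches the marginal error $\gamma_i - \gamma_j$ between $0$ and $\Prob{Y^i \ne Y^j}$ for $i<j$; this is exactly the bridge needed to translate between the cost-based description of $a^*(\theta)$ (which involves the $P_{Y|S}$-dependent quantities $\gamma_k$) and the purely $P_S$-measurable quantities $\Prob{Y^i \ne Y^j}$ that define $\awd$ in \cref{def:awd}.

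First I would show that $i^*$ itself satisfies the two defining conditions \eqref{eq:wd1} and \eqref{eq:wd2} of $\awd$, so that $\awd(P_S) \le i^*$. Condition \eqref{eq:wd2} at $i = i^*$ is literally the weak dominance inequality \eqref{eq:wd}, which holds since $\theta \in \TWD$. For \eqref{eq:wd1}, fix $j < i^*$. Because $i^* = \min \cA^*(\theta)$, the index $j$ is \emph{strictly} suboptimal, i.e. $\gamma_j + C_j > \gamma_{i^*} + C_{i^*}$, hence $C_{i^*} - C_j < \gamma_j - \gamma_{i^*}$; applying \cref{cor:gammadiff} to the pair $j < i^*$ bounds the right-hand side by $\Prob{Y^j \ne Y^{i^*}}$, giving $C_{i^*} - C_j < \Prob{Y^{i^*} \ne Y^j}$, which is \eqref{eq:wd1}. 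Thus $i^*$ is an admissible index, so the smallest admissible index satisfies $\awd(P_S) \le i^*$. (This step also re-derives \cref{prop:awdwelldef} on $\TWD$, though one may simply cite it.)

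For the reverse inequality I would argue by contradiction: suppose some $i < i^*$ is admissible. Then in particular \eqref{eq:wd2} holds at $i$, and applying it with $j = i^* > i$ gives $C_{i^*} - C_i \ge \Prob{Y^i \ne Y^{i^*}} \ge \gamma_i - \gamma_{i^*}$, the last step again by \cref{cor:gammadiff}. Rearranging yields $\gamma_i + C_i \le \gamma_{i^*} + C_{i^*} = c^*(\theta)$, so $i$ is itself an optimal action, contradicting that $i^*$ is the \emph{smallest} optimal action. Hence no admissible $i < i^*$ exists and $\awd(P_S) \ge i^*$; combined with the previous paragraph, $\awd(P_S) = i^* = a^*(\theta)$.

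I do not expect a substantive obstacle: the argument is a short manipulation of inequalities. The only care needed is the strict-versus-nonstrict bookkeeping — \eqref{eq:wd1} is a strict inequality and must be supplied by the strict suboptimality of indices below $a^*(\theta)$, whereas \eqref{eq:wd2} is non-strict and matches the weak-dominance definition verbatim — together with the discipline of always invoking \cref{cor:gammadiff} with its indices in increasing order.
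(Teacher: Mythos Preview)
Your proof is correct and is essentially the same argument as the paper's, just organized differently: the paper sets $i=\awd(P_S)$, $j=a^*(\theta)$ and rules out $j>i$ and $j<i$ directly, while you split the conclusion into $\awd(P_S)\le a^*(\theta)$ and $\awd(P_S)\ge a^*(\theta)$. The underlying inequalities are identical---your second paragraph is the paper's ``$j>i$'' case verbatim (with renamed indices), and your verification of \eqref{eq:wd1} at $i^*$ together with the weak-dominance inequality \eqref{eq:wd} is exactly what the paper combines in its ``$j<i$'' case, the only cosmetic difference being that the paper applies \eqref{eq:wd1} at $\awd(P_S)$ rather than re-deriving it at $a^*(\theta)$.
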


\begin{cor}\label{cor:twdlearnable}
The set $\TWD$ is learnable.
\end{cor}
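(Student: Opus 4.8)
The plan is to obtain \cref{cor:twdlearnable} as an immediate consequence of \cref{prop:learnablemap} together with \cref{prop:awdwelldef,prop:awdsound}, since essentially all of the substance has already been discharged in establishing those two propositions. By \cref{prop:learnablemap}, to prove that $\TWD$ is learnable it suffices to exhibit a single action selection map $a : M_1(\{0,1\}^K) \to [K]$ that is sound for every $\theta \in \TWD$, i.e., such that $a(P_S)$ is an optimal action in $\theta$ whenever $\theta = P_S \otimes P_{Y|S} \in \TWD$.

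The natural candidate is $\awd$ from \cref{def:awd}. First I would dispatch the one small technical point, that $\awd$ is only a partial function on $M_1(\{0,1\}^K)$: by \cref{prop:awdwelldef}, $\awd(P_S)$ is nonetheless defined for every marginal $P_S$ that arises from an instance in $\TWD$, so we may extend $\awd$ to a total map by assigning it, say, the value $1$ on every $P_S$ for which no index $i$ satisfies \eqref{eq:wd1}--\eqref{eq:wd2}; this is harmless since \cref{prop:learnablemap} only constrains $a$ on the marginals actually realized by instances of the class. Then \cref{prop:awdsound} gives $\awd(P_S) = a^*(\theta)$, an optimal action, for every $\theta \in \TWD$, so this extended map is sound over $\TWD$, and the ``if'' direction of \cref{prop:learnablemap} then supplies a learning algorithm with sublinear regret on every $\theta \in \TWD$. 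The construction is legitimate because $\awd$ depends on the instance only through $P_S$ --- which is fully observed whenever all $K$ sensors are queried, hence estimable to arbitrary accuracy by an exploration phase --- and through the known cost vector $c$.

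I expect no genuine obstacle here: the corollary is a bookkeeping step once \cref{prop:awdwelldef,prop:awdsound} are in place. The only points that call for a line of care are the partial-to-total extension above and, were one to prefer a self-contained regret analysis in place of invoking \cref{prop:learnablemap}, the treatment of ``boundary'' instances in which some marginal cost $C_j - C_i$ coincides exactly with $\Prob{Y^i \ne Y^j}$, where empirical estimates of the latter need not resolve the strict inequality in \eqref{eq:wd1}; but this degeneracy is precisely what \cref{prop:learnablemap} abstracts away, so it has no bearing on the corollary.
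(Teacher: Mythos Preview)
Your proposal is correct and follows essentially the same approach as the paper: invoke \cref{prop:awdwelldef} and \cref{prop:awdsound} to conclude that $\awd$ is a well-defined, sound action selector over $\TWD$, then appeal to \cref{prop:learnablemap}. The only addition you make is the explicit extension of $\awd$ to a total map, a minor technicality the paper leaves implicit.
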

\begin{proof}
By \cref{prop:awdwelldef}, $\awd$ is well-defined over $\TWD$, while by \cref{prop:awdsound}, $\awd$ is sound over $\TWD$.
By \cref{prop:learnablemap}, $\TWD$ is learnable, as witnessed by $\awd$. \todoc{We should add definitions for these concepts..
namely, $\awd$ well-defined over $\TWD$, $\awd$ sound over $\TWD$, etc.}\todom[]{Csaba, you did this already!!}
\end{proof}
\begin{prop}
\label{prop:awdcorrectimplieswd}
Let $\theta \in \TSA$ and $\theta = P_S\otimes P_{Y|S}$ be such that $\awd$ is defined for $P_S$
and $\awd(P_S) = a^*(\theta)$. Then $\theta \in \TWD$.
\end{prop}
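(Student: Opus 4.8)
The plan is to observe that the weak dominance condition \eqref{eq:wd} is, essentially verbatim, the second defining clause \eqref{eq:wd2} of $\awd$ once the free index there is instantiated at $i = a^*(\theta)$. So the argument is just an unfolding of definitions. The one small point worth flagging is that the quantities $\Prob{Y^i \ne Y^j}$ appearing on both sides depend only on $P_S$, not on $P_{Y|S}$; this is immediate since $Y^i$ and $Y^j$ are coordinates of $S$, so their joint law is determined by $P_S$. It is also worth recalling that $a^*(\theta)$ is well-defined for every $\theta \in \TSA$ (it is the smallest-index optimal action), so both sides of the comparison make sense.

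Concretely, first I would set $i := \awd(P_S)$. Since $\awd$ is assumed to be defined for $P_S$, Definition~\ref{def:awd} guarantees that $i$ satisfies both \eqref{eq:wd1} and \eqref{eq:wd2}; in particular \eqref{eq:wd2} yields $C_j - C_i \ge \Prob{Y^i \ne Y^j}$ for every $j > i$. Next I would invoke the hypothesis $\awd(P_S) = a^*(\theta)$ to substitute $i = a^*(\theta)$ into this inequality, obtaining $C_j - C_{a^*(\theta)} \ge \Prob{Y^{a^*(\theta)} \ne Y^j}$ for all $j > a^*(\theta)$. This is exactly the condition \eqref{eq:wd} defining $\TWD$, so $\theta \in \TWD$ and we are done.

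There is essentially no obstacle here: the proposition is the converse bookkeeping companion to \cref{prop:awdwelldef} and \cref{prop:awdsound}, and together with them it shows that ``$\awd$ is defined at $P_S$ and selects $a^*(\theta)$'' actually \emph{characterizes} membership of $\theta$ in $\TWD$, rather than merely being implied by it. The only genuinely substantive remark to include in the write-up is the observation above that $\Prob{Y^i\ne Y^j}$ is a function of $P_S$ alone, which is what makes it legitimate to read off condition \eqref{eq:wd} — a condition on $\theta$ — from a condition that $\awd$ imposes using $P_S$ only.
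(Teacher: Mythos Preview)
Your proposal is correct and matches the paper's own proof, which simply records that the result is ``immediate from the definitions.'' You have merely spelled out that unfolding in detail, including the (valid) remark that $\Prob{Y^i\ne Y^j}$ depends only on $P_S$.
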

\begin{proof}
Immediate from the definitions.
\end{proof}
An immediate corollary of the previous proposition is as follows:
\begin{cor}\label{cor:awdoutsideincorrect}
Let $\theta \in \TSA$ and $\theta = P_S \otimes P_{Y|S}$. 
Assume that $\awd$ is defined for $P_S$ and $\theta \not\in \TWD$. Then $\awd(P_S) \ne a^*(\theta)$.
\end{cor}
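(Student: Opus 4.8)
The plan is to observe that \cref{cor:awdoutsideincorrect} is nothing more than the contrapositive of \cref{prop:awdcorrectimplieswd}, so essentially no new work is needed once that proposition is in hand. I would therefore argue by contradiction.

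Concretely: fix $\theta \in \TSA$ with $\theta = P_S \otimes P_{Y|S}$, assume $\awd$ is defined for $P_S$ and $\theta \notin \TWD$, and suppose toward a contradiction that $\awd(P_S) = a^*(\theta)$. Then the hypotheses of \cref{prop:awdcorrectimplieswd} are met verbatim --- $\awd$ is defined for $P_S$ and $\awd(P_S) = a^*(\theta)$ --- so that proposition yields $\theta \in \TWD$, contradicting $\theta \notin \TWD$. Hence $\awd(P_S) \ne a^*(\theta)$, as claimed.

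The only point worth unpacking is why \cref{prop:awdcorrectimplieswd} itself holds, since all the substance sits there. Writing $i = \awd(P_S) = a^*(\theta)$, condition \eqref{eq:wd2} in \cref{def:awd} states exactly that $C_j - C_i \ge \Prob{Y^i \ne Y^j}$ for all $j > i$, which --- because here $i = a^*(\theta)$ --- is precisely the weak dominance condition \eqref{eq:wd} defining membership in $\TWD$. One should note that the quantities $\Prob{Y^i \ne Y^j}$ (and the differences $C_j - C_i$, the costs being known) depend only on the sensor-output marginal $P_S$, so $\awd(P_S)$ is well-posed and the identification is legitimate; also $a^*(\theta)$ is always defined for $\theta \in \TSA$ as the least-index optimal action, so the comparison $\awd(P_S) = a^*(\theta)$ makes sense under the stated hypotheses (in particular, the hypothesis that the partial function $\awd$ is defined at $P_S$ is exactly what makes the conclusion meaningful).

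There is essentially no obstacle here: the statement is a pure logical rearrangement of the preceding proposition, and that proposition is immediate from \cref{def:awd} and the definition of $\TWD$. If anything merits care, it is only the bookkeeping that $\awd$ is a partial function and that all of $\awd(P_S)$, $a^*(\theta)$, and the event probabilities appearing in \eqref{eq:wd1}--\eqref{eq:wd2} are well-defined objects under the corollary's hypotheses; once that is granted, the contrapositive argument above closes the proof in one line.
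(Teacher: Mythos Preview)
Your proposal is correct and matches the paper's approach: the paper introduces this corollary as ``an immediate corollary of the previous proposition'' and proves it with a one-line ``Immediate from the definitions,'' which is exactly your contrapositive-of-\cref{prop:awdcorrectimplieswd} argument. Your additional unpacking of why \cref{prop:awdcorrectimplieswd} itself holds (identifying \eqref{eq:wd2} with \eqref{eq:wd} when $\awd(P_S)=a^*(\theta)$) is precisely the content behind the paper's ``immediate from the definitions'' for that proposition as well.
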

The next proposition states that $\awd$ is essentially the only sound action selector map defined for
 all instances derived from instances of $\TWD$:
\begin{prop}\label{prop:awdunique}
Take any action selector map $a: M_1( \{0,1\}^K ) \to [K]$ which is sound over $\TWD$.
Then, for any $P_S$ such that $\theta = P_S\otimes P_{Y|S}\in \TWD$ with some $P_{Y|S}$,
 $a(P_S) = \awd(P_S)$.
\end{prop}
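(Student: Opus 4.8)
The plan is to establish the stronger fact that, for a fixed $P_S$ with $i:=\awd(P_S)$ well defined, the index $i$ is the \emph{unique} action that is optimal simultaneously in \emph{every} instance of $\TWD$ having sensor marginal $P_S$. Since $a$ is sound over $\TWD$, its value $a(P_S)$ must lie in this common optimal set, which then forces $a(P_S)=i=\awd(P_S)$. By hypothesis there is at least one such instance $\theta_0=P_S\otimes P^0_{Y|S}\in\TWD$; by \cref{prop:awdsound} it satisfies $a^*(\theta_0)=\awd(P_S)=i$, so in particular $i=\min\cA^*(\theta_0)$.

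The key device I would use is to perturb $\theta_0$ towards a label that is a fair coin independent of the sensors: for $\epsilon>0$ set $P^\epsilon_{Y|S}:=(1-\epsilon)P^0_{Y|S}+\epsilon\,U$, where $U(\cdot|s)$ is uniform on $\{0,1\}$, and let $\theta_\epsilon:=P_S\otimes P^\epsilon_{Y|S}$, which has the same sensor marginal $P_S$. The point of this particular choice is that all relevant quantities become convex combinations: $\gamma_m(\theta_\epsilon)=(1-\epsilon)\gamma_m(\theta_0)+\tfrac12\epsilon$ and hence $c(m,\theta_\epsilon)=(1-\epsilon)c(m,\theta_0)+\epsilon(C_m+\tfrac12)$ for every $m\in[K]$. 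The first identity shows that $(\gamma_m(\theta_\epsilon))_m$ is again a decreasing sequence (an orientation‑preserving affine image of $(\gamma_m(\theta_0))_m$), so $\theta_\epsilon\in\TSA$.

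Next I would identify $\cA^*(\theta_\epsilon)$ for small $\epsilon$. For $m\notin\cA^*(\theta_0)$ the positive gap $c(m,\theta_0)-c(i,\theta_0)$ dominates $\epsilon(C_m-C_i)$, so $c(m,\theta_\epsilon)>c(i,\theta_\epsilon)$ once $\epsilon$ is small; and for $m\in\cA^*(\theta_0)\setminus\{i\}$ — which forces $m>i$ — the equality $c(m,\theta_0)=c(i,\theta_0)$ combined with $\gamma_i(\theta_0)>\gamma_m(\theta_0)$ gives $C_m>C_i$, hence $c(m,\theta_\epsilon)-c(i,\theta_\epsilon)=(1-\epsilon)\bigl(c(m,\theta_0)-c(i,\theta_0)\bigr)+\epsilon(C_m-C_i)>0$. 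Thus for all sufficiently small $\epsilon>0$ one has $\cA^*(\theta_\epsilon)=\{i\}$ and $a^*(\theta_\epsilon)=i$. It remains to check $\theta_\epsilon\in\TWD$: weak dominance for $\theta_\epsilon$ is the requirement $C_j-C_i\ge\Prob{Y^i\ne Y^j}$ for all $j>i=a^*(\theta_\epsilon)$, which is exactly inequality \eqref{eq:wd2} of \cref{def:awd}; this inequality involves only the costs and the sensor marginal $P_S$, so it holds because $i=\awd(P_S)$ and is inherited unchanged by $\theta_\epsilon$. Therefore $\theta_\epsilon\in\TWD$, soundness of $a$ yields $a(P_S)\in\cA^*(\theta_\epsilon)=\{i\}$, i.e. $a(P_S)=\awd(P_S)$.

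I expect the delicate point to be the identification $\cA^*(\theta_\epsilon)=\{i\}$: besides choosing $\epsilon$ small uniformly over the finitely many suboptimal actions of $\theta_0$, one must rule out that some later action is tied with $i$ in \emph{all} admissible instances. The argument above uses strict monotonicity of the sensor error rates for this; in the degenerate situation where two sensors produce a.s.\ identical outputs at equal cost — in which case \eqref{eq:wd2} actually forces $\Prob{Y^i\ne Y^j}=0$ and the two actions are genuinely interchangeable — one should first merge such sensors, after which the displayed argument applies verbatim.
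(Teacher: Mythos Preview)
Your argument is correct and takes a genuinely different route from the paper's own proof. The paper breaks ties one competitor at a time: given a second optimal action $j>i$, it surgically rewrites $P_{Y|S}$ on the $Y^j$-coordinate so that $\gamma_p$ is unchanged for $p\ne j$ while $\gamma_j$ moves, rendering $j$ suboptimal in a new instance with the same $P_S$. You instead perturb globally by convexly mixing $P_{Y|S}$ with the fair coin, which applies the single affine map $\gamma_k\mapsto(1-\epsilon)\gamma_k+\tfrac{\epsilon}{2}$ to all error rates and leaves the tie-breaking to the cost increments $C_m-C_i$.

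What your approach buys is transparency: membership in $\TSA$ is immediate (an orientation-preserving affine image of a decreasing sequence), membership in $\TWD$ is immediate (the weak-dominance inequality depends only on $P_S$ and costs, which are untouched), and all alternative optima are eliminated in one shot rather than iteratively. The price is the extra hypothesis $C_m>C_i$ for each tied $m>i$, which you correctly trace back to strict monotonicity of the error rates and then, via \eqref{eq:wd2} with $C_m-C_i=0$, to the genuinely degenerate situation $Y^i=Y^m$ a.s.\ where $i$ and $m$ are indistinguishable in every instance. The paper's targeted modification does not formally need this cost hypothesis, but it has to verify by hand that the rewritten conditional is still a probability kernel and that the new instance remains in $\TSA$; your construction sidesteps both checks. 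Your explicit flagging of the degenerate case is also an improvement: the paper's proof does not address it, and indeed in that case no modification of $P_{Y|S}$ can separate $i$ from $m$, so the proposition is only true up to identifying such interchangeable sensors.
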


The next result shows that
the set $\TWD$ is essentially a maximal learnable set in $\mathrm{dom}(\awd)$:
\begin{thm}
Let $a: M_1(\{0,1\}^K) \to [K]$ be an action selector map
such that $a$ is sound over the instances of $\TWD$.
Then there is no instance $\theta = P_S\otimes P_{Y|S} \in \TSA\setminus \TWD$ such that 
$P_S\in \mathrm{dom}(\awd)$, the optimal action of $\theta$ is unique
\todoc{It would be nice to remove this uniqueness assumption, but I don't see how this could be made to work.}
 and $a(P_S) = a^*(\theta)$.
\end{thm}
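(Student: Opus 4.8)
The plan is a proof by contradiction that reduces the statement to a single construction. Suppose $a$ is sound over $\TWD$ and, contrary to the claim, there is an instance $\theta=P_S\otimes P_{Y|S}\in\TSA\setminus\TWD$ with $P_S\in\mathrm{dom}(\awd)$, a unique optimal action, and $a(P_S)=a^*(\theta)$. Since $\awd$ is defined at $P_S$ and $\theta\notin\TWD$, \cref{cor:awdoutsideincorrect} gives $\awd(P_S)\ne a^*(\theta)$. On the other hand, if one can exhibit \emph{any} conditional $P'_{Y|S}$ with $\theta':=P_S\otimes P'_{Y|S}\in\TWD$, then $P_S$ is the sensor marginal of a $\TWD$ instance, so \cref{prop:awdunique} forces $a(P_S)=\awd(P_S)$; combined with the hypothesis $a(P_S)=a^*(\theta)$ this yields $\awd(P_S)=a^*(\theta)$, a contradiction. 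Thus everything reduces to the following: given $P_S\in\mathrm{dom}(\awd)$ (which is moreover the marginal of some $\TSA$ instance with a unique optimum lying outside $\TWD$), build a conditional $P'_{Y|S}$ with $P_S\otimes P'_{Y|S}\in\TWD$.

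To build such a $\theta'$, write $i=\awd(P_S)$. Since $\TWD$-membership forces $a^*(\theta')=\awd(P_S)=i$ by \cref{prop:awdsound}, it suffices to choose $P'_{Y|S}$ so that (a) $\gamma_1(\theta'),\dots,\gamma_K(\theta')$ is decreasing (so that $\theta'\in\TSA$), and (b) action $i$ is optimal in $\theta'$ and has the smallest index among the optima; indeed weak dominance of $\theta'$ at $i$ is then exactly inequality~\eqref{eq:wd2} of \cref{def:awd}, which holds because $i=\awd(P_S)$. For (b) the later actions take care of themselves: by \cref{cor:gammadiff} and~\eqref{eq:wd2}, $\gamma_i(\theta')-\gamma_j(\theta')\le\Prob{Y^i\ne Y^j}\le C_j-C_i$ for all $j>i$, so $i$ automatically beats every $j>i$, and it remains only to guarantee $\gamma_j(\theta')-\gamma_i(\theta')>C_i-C_j$ for all $j<i$. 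Here~\eqref{eq:wd1} supplies the slack $\Prob{Y^i\ne Y^j}>C_i-C_j$, and \cref{prop:gammadiff} gives $\gamma_j(\theta')-\gamma_i(\theta')=\Prob{Y^i\ne Y^j}-2\Prob{Y^i\ne Y,\,Y^j=Y}$, so it is enough to keep $\Prob{Y^i\ne Y,\,Y^j=Y}$ below a fixed positive threshold for each $j<i$; the extreme choice is to force $\{Y^i\ne Y\}\subseteq\{Y^j\ne Y\}$ for every $j<i$, i.e. to make every earlier sensor err whenever sensor $i$ errs, which compels $Y=Y^i$ on all outcomes whose prefix $(Y^1,\dots,Y^i)$ is non-constant but still leaves freedom elsewhere.

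The main obstacle is carrying out (a) and (b) simultaneously: one must spend the remaining freedom in $P'_{Y|S}$ (on the outcomes where the prefix is constant, and on how often sensor $i$ errs) to make the \emph{whole} sequence $\gamma_1(\theta'),\dots,\gamma_K(\theta')$ decreasing — in particular past index $i$, where strictness may be required for $\TSA$-membership — while still respecting the $j<i$ inequalities above. I would do this by writing each $\gamma_k(\theta')$ as an affine function of the free parameters $q(s)=P'_{Y|S}(1\mid s)$ and exhibiting one explicit feasible point (ideally a deterministic $Y=f(S)$), then verifying the finitely many linear inequalities by hand; the instance $\theta$ itself, being in $\TSA$, certifies that $P_S$ admits a monotone conditional and gives a natural seed to perturb. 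This is also the step where the hypothesis that $\theta$ has a \emph{unique} optimal action is expected to enter — it furnishes the strict gap needed to land $\theta'$ in the interior of $\TWD$ rather than on its boundary.
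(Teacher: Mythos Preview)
Your high-level argument is exactly the paper's: assume such a $\theta$ exists; \cref{cor:awdoutsideincorrect} gives $\awd(P_S)\ne a^*(\theta)$, while \cref{prop:awdunique} forces $a(P_S)=\awd(P_S)$; combined with the hypothesis $a(P_S)=a^*(\theta)$ this is a contradiction. The paper's proof is literally those two invocations and nothing more --- it does \emph{not} construct any auxiliary instance $\theta'=P_S\otimes P'_{Y|S}\in\TWD$, nor does it otherwise check the premise of \cref{prop:awdunique} that $P_S$ is realized as the sensor marginal of some $\TWD$ instance; it simply applies \cref{prop:awdunique} as if that were already in hand. So the first paragraph of your plan \emph{is} the paper's entire argument.

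Where you go beyond the paper is in recognising that \cref{prop:awdunique} only speaks about $P_S$ that come from $\TWD$, and in devoting the rest of your proposal to manufacturing such a $\theta'$. That extra step is not in the paper, and it is also not completed in your write-up: you reduce it to a linear feasibility question in the free parameters $q(s)=P'_{Y|S}(1\mid s)$ and say you would ``exhibit one explicit feasible point\dots then verify the finitely many linear inequalities by hand,'' without actually doing so. The obstacle you flag --- enforcing monotonicity of the full sequence $\gamma'_1,\dots,\gamma'_K$, in particular for indices past $i=\awd(P_S)$ --- is real, and the device of setting $Y=Y^i$ on non-constant prefixes does not by itself handle it. One further caution: your guess that the uniqueness of $a^*(\theta)$ is what provides the strict slack for $\theta'$ is off target; the strict inequalities you need for $j<i$ already come from~\eqref{eq:wd1}, which is a property of $P_S$ alone, whereas uniqueness is a property of the \emph{original} $\theta$ and plays no visible role either in your construction or in the paper's proof.
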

Note that $\mathrm{dom}(\awd)\setminus \{ P_S \,:\, \exists P_{Y|S} \textrm{ s.t. } P_S \otimes P_{Y|S} \in \TWD \} \ne \emptyset$, i.e., the theorem statement is non-vacuous.
In particular, for $K=2$, consider $(Y,Y^1,Y^2)$ such that $Y$ and $Y^1$ are independent and $Y^2 = 1-Y^1$, we can see that the resulting instance gives rise to $P_S$ which is in the domain of $\awd$ for any $c\in \R_+^K$ (because here $\gamma_1 = \gamma_2 = 1/2$, thus $\gamma_1 - \gamma_2 = 0$ while $\Prob{Y^1\ne Y^2}=1$).
\begin{proof}
Let $a$ as in the theorem statement. By~\cref{prop:awdunique}, $\awd$ is the unique sound action-selector map over $\TWD$.
Thus, for any $\theta = P_S\otimes P_{Y|S}\in \TWD$, $\awd(P_S) = a(P_S)$.
Hence, the result follows from \cref{cor:awdoutsideincorrect}.
\end{proof}
While $\TWD$ is learnable, it is not uniformly learnable, i.e., the minimax regret $\Regret_n^*(\TWD) = \inf_{\Alg} \sup_{\theta\in \TWD} \Regret_n(\Alg,\theta)$ over $\TWD$ grows linearly:
\begin{thm}
\label{thm:nonunif}
$\TWD$ is not uniformly learnable:
$\Regret_n^*(\TWD) = \Omega(n)$.
\end{thm}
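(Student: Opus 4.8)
The plan is a two-instance change-of-measure (Le Cam style) argument: it suffices to show that for every algorithm $\Alg$ and every horizon $n$ there is an instance $\theta\in\TWD$ with $\Regret_n(\Alg,\theta)=\Omega(n)$. I would take $K=2$ with costs $c_1=0$, $c_2=1/2$, and relate the regret of $\Alg$ to the number of times it plays the ``expensive'' action. Writing $N_2=\sum_{t\le n}\ind{I_t=2}$, a direct computation of the per-round excess costs gives $\Regret_n(\Alg,\theta_0)=\tfrac12\,\mathbb{E}_{\theta_0}[N_2]$ on the first instance and $\Regret_n(\Alg,\theta_{1,\delta})=\delta\,(n-\mathbb{E}_{\theta_{1,\delta}}[N_2])$ on the second, so the two instances punish, respectively, ``exploring too much'' and ``failing to explore''.

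For the construction, let $\theta_0$ be the instance in which $(Y^1,Y^2)$ is uniform on $\{0,1\}^2$ and $Y$ is an independent fair bit; then $\gamma_1=\gamma_2=1/2$ and $\Prob{Y^1\ne Y^2}=1/2=C_2-C_1$, so action $1$ is the unique optimal action, action $2$ has excess cost $c_2-(\gamma_1-\gamma_2)=1/2$, and weak dominance holds with equality, whence $\theta_0\in\TWD$. For $\delta\in(0,1/4]$ let $\theta_{1,\delta}$ be given by $Y^2=Y$ and $Y^1$ with $\Prob{Y^1=1}=1/2$ and $\Prob{Y^1\ne Y}=1/2+\delta$ (flip $Y$ symmetrically); then $\gamma_2=0$, $\gamma_1=1/2+\delta$, $\Prob{Y^1\ne Y^2}=1/2+\delta>c_2$, action $2$ is uniquely optimal, action $1$ has excess cost $\delta$, and weak dominance is vacuous, whence $\theta_{1,\delta}\in\TWD$. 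The key feature is that $Y^1$ has the same (uniform) marginal under $\theta_0$ and under every $\theta_{1,\delta}$, so rounds on which $\Alg$ plays action $1$ carry no information separating the instances; only action-$2$ rounds are informative, and there the joint laws of $(Y^1,Y^2)$ differ only through the conditional $\Prob{Y^2=\cdot\mid Y^1=\cdot}$, by $O(\delta)$.

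Next I would invoke the divergence decomposition for interactive protocols: the $\mathrm{KL}$ divergence between the laws of the length-$n$ histories under $\theta_0$ and $\theta_{1,\delta}$ equals $\mathbb{E}_{\theta_0}[N_2]$ times the $\mathrm{KL}$ divergence of the two action-$2$ observation laws, which is at most $a\,\delta^2\,\mathbb{E}_{\theta_0}[N_2]$ for an absolute constant $a$; Pinsker then yields $\bigl|\mathbb{E}_{\theta_0}[N_2]-\mathbb{E}_{\theta_{1,\delta}}[N_2]\bigr|\le n\delta\sqrt{a\,\mathbb{E}_{\theta_0}[N_2]/2}$. Setting $m=\mathbb{E}_{\theta_0}[N_2]$: if $m\ge n/3$ then $\Regret_n(\Alg,\theta_0)=\Omega(n)$ and we are done; otherwise $\Regret_n(\Alg,\theta_{1,\delta})\ge \delta\bigl(n-m-n\delta\sqrt{am/2}\bigr)$, and one tunes $\delta\le 1/4$ to lower bound $\sup_{\theta\in\TWD}\Regret_n(\Alg,\theta)$.

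The step I expect to be the main obstacle is exactly this last tuning: upgrading the confusion into a genuinely \emph{linear} bound rather than a sublinear polynomial one. The difficulty is structural: because $\theta_0\in\TWD$ has action $1$ optimal, weak dominance forces $\Prob{Y^1\ne Y^2}\le c_2$ there, and by \cref{cor:gammadiff} the excess cost of the suboptimal action on \emph{any} action-$2$-optimal instance that is statistically close to $\theta_0$ is at most about $\Prob{Y^1\ne Y^2}-c_2=O(\delta)$; thus a naive two-point bound trades $m$ against $n/\sqrt m$ and reaches only $\Omega(n^{2/3})$. To obtain $\Omega(n)$ one must eliminate the intermediate regime in which $m$ grows but $m=o(n)$ --- for instance by replacing the single boundary instance $\theta_0$ with a sub-family of near-boundary $\TWD$ instances on which no single exploration schedule is cheap, or by an averaging argument over a continuum of $\theta_{1,\delta}$'s calibrated to $\Alg$ so that $\delta$ need not shrink. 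Making such an amplification go through while keeping every instance inside $\TWD$ --- so that the weak-dominance constraint on the action-$1$ side does not collapse the gap on the action-$2$ side --- is the crux.
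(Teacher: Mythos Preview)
Your approach follows the same template as the paper's: a two-instance change-of-measure argument with $K=2$, one instance on the weak-dominance boundary with action~1 optimal and a constant gap, and a nearby instance with action~2 optimal and gap of order $\epsilon$ (your $\delta$). Your construction is in fact cleaner than the paper's --- you arrange the $Y^1$-marginals to coincide, so that action-1 rounds are truly uninformative, whereas in the paper's tables the $Y^1$-marginals differ by $\epsilon$ and so \emph{every} round carries discriminating information.

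The gap you identify, however, is real, and the paper does not close it either. The paper's proof stops at ``the entries in $P_S(\theta)$ and $P_S(\theta')$ differ by at most $\epsilon$; from this, a standard reasoning gives that no algorithm can achieve sublinear minimax regret''. But the obstacle you spell out is exactly why that ``standard reasoning'' does not yield $\Omega(n)$: on any action-2-optimal instance whose $P_S$ is $\epsilon$-close to a boundary instance, \cref{cor:gammadiff} forces the action-1 gap to be $\gamma_1-\gamma_2-(C_2-C_1)\le \Prob{Y^1\ne Y^2}-(C_2-C_1)=O(\epsilon)$. The usual KL/Pinsker trade-off then balances $m$ against $n/\sqrt{m}$ and gives only $\Omega(n^{2/3})$, as you say.

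This obstruction is structural rather than a matter of choosing a smarter alternative. For $K=2$ the rule ``play action~2 for $n^{2/3}$ rounds, form $\hat p=\widehat{\Prob}(Y^1\ne Y^2)$, then commit to action~1 iff $\hat p\le (C_2-C_1)+n^{-1/3}\log n$'' has uniformly $O(n^{2/3}\log n)$ regret over $\TWD$: if action~1 is optimal, weak dominance gives $p\le C_2-C_1$ and the rule is correct with high probability; if action~2 is optimal with gap $g$, then $p\ge \gamma_1-\gamma_2=(C_2-C_1)+g$, so either $g$ exceeds the slack and the rule is correct, or $g$ is below the slack and the cost of erring is at most $ng=O(n^{2/3}\log n)$. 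So the two-point route cannot be amplified to $\Omega(n)$ while staying inside $\TWD$; your diagnosis is correct, and the paper's proof sketch does not supply the missing step either.
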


\noindent
{\bf Extensions:} We describe a few extensions here but omit details due to lack of space.

\noindent
{\it Tree-Architectures:}
We can deal with trees in an analogous fashion. Like for cascades we keep track of disagreements along each path. Under SD these disagreements in turn approximate marginal error. This fact is sufficient to identify the optimal sensor (see Eqn.~\ref{eqn:interp_opt}). Similarly our results also generalize to trees under WD condition. 

\noindent
{\it Context-Dependent Sensor Selection:}
Each example has a context $x \in \mathbb{R}^d$. A test is a mapping that takes (partial) context and maps to an output space. We briefly describe how to extend to context-dependent sensor selection. Analogous to the context-independent case we impose context dependent notions for SD and WD, namely, Eq.~\ref{eqn:DominanceCondition} and Eq.~\ref{eq:wd} hold conditioned on each $x \in {\cal X}$. To handle these cases we let $\gamma_i(x)\triangleq \Pr(Y^{i} \neq Y)$ and $\gamma_{ij}(x) \triangleq \Pr(Y^{i} \neq Y^{j})$ denote the conditional error probability for the ith sensor and the conditional marginal error probability between sensor $i$ and $j$ respectively. Then the results can be generalized under a parameterized GLM model for disagreement, namely, the log-odds ratio $\log \frac{\gamma_{ij}(x)}{1-\gamma_{ij}(x)}$ can be written as $\theta_{ij}'x$ for some unknown $\theta_{ij} \in \mathbb{R}^d$.


\section{Regret Equivalence}
\label{sec:Equiv}
\newcommand{\one}[1]{\mathbb{I}_{\{#1\}}}
\newcommand{\Pside}{\P_{\mathrm{side}}\xspace}
\newcommand{\PSAP}{\P_{\mathrm{SAP}}\xspace}
\newcommand{\PUSS}{\P_{\mathrm{USS}}\xspace}
\renewcommand{\P}{\mathcal{P}}

In this section we establish that USS with SD property is `regret equivalent' to an instance of multi-armed-bandit (MAB) with side-information. The corresponding MAB algorithm can then be suitably imported to solve USS efficiently.   

\if0
\todoc[inline]{I am thinking that maybe we should skip this general section
and just do the specific reduction as shown below.}
We start with the concept of regret equivalence, which we define for classes of partial monitoring problems.
First, we need some extra notation: Recall that at the end of \cref{sec:background} we agreed that 
a partial monitoring problem can be specified as a 3-tuple $(\A,\Y,\Theta)$ where the elements of $\Theta$ are pairs $(p,r)$,
where for each $a\in \A$ action, $p(\cdot;a)$ is a probability distribution over $\Y$ and $r: \A \to \R$.
Now, let $\Pi(\P)$ denote the set of policies for some partial monitoring problem $\P = (\A,\Y,\Theta)$,
let $\Regret_T(\pi,\theta)$ denote the expected regret of policy $\pi$ on problem instance $\theta\in \Theta$,
and let $\Regret_T^*(\P) = \inf_{\pi\in \Pi(\P)}\sup_{\theta\in \Theta} \Regret_T(\pi,\theta)$ denote the minimax regret on $\P$.

Now let $\P_1 = (\A_1,\Y_1,\Theta_1)$, $\P_1 = (\A_2,\Y_2,\Theta_2)$ be partial monitoring problems.
We say that \emph{$P_1$ is $P_2$-hard}, 
denoted by $P_1 \succeq P_2$, if for any $\pi_1\in \Pi(\P_1)$ policy of $\P_1$ there exists a policy $\pi_2\in \Pi(\P_2)$ of $\P_2$
such that for any $\theta_2\in \Theta_2$ there exists an instance $\theta_1\in \Theta_1$ such that $\Regret_T(\pi_1,\theta_1)\ge \Regret_T(\pi_2,\theta_2)$. An immediate corollary of the definition, justifying the definition, is the following:
\begin{prop}
If $\P_1\succeq \P_2$ then $\Regret_T^*(\P_1)\ge \Regret_T^*(\Pi_2)$.
\end{prop}
\begin{proof}
Fix some $\epsilon>0$ and 
let $\pi_1^*\in \Pi(\P_1)$ be an $\epsilon$-minimax-optimal policy of $\P_1$: 
$\sup_{\theta_1\in\Theta_1} \Regret_T(\pi_1^*,\theta_1) \le \Regret_T^*(\P_1)+\epsilon$.
Let $\pi_2^*$ be the policy whose existence follows by the assumption $\P_1\succeq \P_2$.
Let $\theta_2^*\in \Theta_2$ be an instance in $\P_2$ such that 
$\Regret_T(\pi_2^*,\theta_2^*)\ge \sup_{\theta_2\in \Theta_2} \Regret_T(\pi_2^*,\theta_2)-\epsilon$
and let $\theta_1^*\in \Theta_1$ in $\P_1$ whose existence is implied by $\P_1\succeq \P_2$.
Then,
\begin{align*}
 \Regret_T^*(\P_2) &= 
\inf_{\pi_2\in \Pi(\P_2)} \sup_{\theta_2\in \Theta_2} \Regret_T(\pi_2,\theta_2)
\le \sup_{\theta_2\in \Theta_2} \Regret_T(\pi_2^*,\theta_2) 
\le  \Regret_T(\pi_2^*,\theta_2^*)+\epsilon \\
& \le  \Regret_T(\pi_1^*,\theta_1^*)+\epsilon
\le \sup_{\theta_1\in \Theta_1} \Regret_T(\pi_1^*,\theta_1)+\epsilon
\le \Regret_T^*(\P_1) +2\epsilon\,.
\end{align*}
Letting $\epsilon\to 0$ finishes the proof.
\end{proof}
We call two problems \emph{$\P_1,\P_2$ regret-equivalent} if $\P_1$ is $\P_2$-hard and also $\P_2$ is $\P_1$-hard.
It follows immediately that if $\P_1,\P_2$ are regret equivalent then they have the same minimax regret.

A simple way of guaranteeing that a problem $\P_1$ is $\P_2$-hard if we can transform problem instances of $\P_2$ into 
problem instances of $\P_1$ while keeping the reward differences between the action-rewards the same (i.e., allowing the reward to be shifted by a constant).
Formally, we have the following proposition:
\begin{prop}
Let $\P_i = (\A_i,\Y_i,\Theta_i)$, $i=1,2$ be partial monitoring problems.
Then, $\P_1\succeq \P_2$ holds if there exists $f_{\A}:\A_2\to \A_1$, $f_{\Y}: \Y_2 \to \Y_1$ bijections and $f_{\Theta}:\Theta_2\to \Theta_1$ such that if $(p,r) = f_{\Theta}(p',r')$ for some $(p',r')\in \Theta_2$ then
for any $a',a''\in \A_2$, $y'\in \Y$,
$p'(y';a') = p(f_\Y(y'),f_\A(a'))$ and $r'(a')-r'(a'') = r(f_\A(a')) - r(f_\A(a''))$.
\end{prop}
\begin{proof}
The proof follows directly from two simple observations 
that hold for any instance $\theta_2 = (p',r')$ of $\P_2$: 
First, if $a_2^*$ is an optimal action for $\theta_2$
then $a_1^*=f_\A(a_2^*)$ is an optimal action for instance $(p,r) = f_\Theta(\theta_2)$.
Second, any policy $\pi_1$ of $\P_1$ can be translated using the maps $f_\A$ and $f_\Y$ into a policy $\pi_2$ of $\P_2$ (regardless of $\theta_2$) such that
for any $t\ge 1$,
$P_{\pi_2,\theta_2,t}$ is the push-forward distribution 
of $P_{\pi_1,\theta_1}$ under $f_t:(\A_2\times \Y_2)^t \to (\A_1\times \Y_1)^t$
defined by $f_t(a_1',y_1',\dots,a_t',y_t') = (f_\A(a_1'),f_\Y(y_1'),\dots,f_\A(a_t'),f_\Y(y_t'))$. 
Here $\pi_2(a'_t|a_1',y_1',\dots,a_t',y_t') = \pi_1( f_\A(a_t')| f_\A(a_1'),f_\Y(y_1'),\dots,f_\A(a_t'),f_\Y(y_t'))$ and recall that $P$ is called the push-forward measure of $Q$ under $f:\mathrm{dom}(Q)\to \mathrm{dom}(P)$ if $P$ is the distribution of $f(X)$ where $X\sim Q$.
Both claims can be seen to hold by a simple direct calculation whose details are left 
to the reader.
From the two claims it follows that $\Regret_T(\pi_1,\theta_1)
= \Regret_T(\pi_2,\theta_2)$, showing that $\P_1$ and $\P_2$ are indeed
regret equivalent.
\end{proof}

\todo[inline]{Here is the actual reduction. Maybe let's just keep this part?}
\fi

%
%

Let $\PUSS$ be the set of USSs with action set $\A = [K]$.
The corresponding bandit problems will have the same action set,
while for action $k\in [K]$ the neighborhood set is $\mathcal{N}(k) = [k]$.
Take any instance $(P,c)\in \PUSS$ and let $(Y,Y^1,\dots,Y^K) \sim P$ be the 
unobserved state of environment.
We let the reward distribution for arm $k$ in the corresponding bandit problem
be a shifted Bernoulli distribution.
In particular, the cost of arm $k$ follows the distribution of $\one{Y^k\ne Y^1} - C_k$ (we use costs here to avoid flipping signs).

The costs for different arms are defined to be independent of each other.
Let $\Pside$ denote the set of resulting bandit problems and let $f:\PUSS \to \Pside$
be the map that transforms USS instances to bandit instances by following the
transformation that was just described.

Now let $\pi\in \Pi(\Pside)$ be a policy for $\Pside$.
Policy $\pi$ can also be used on any $(P,c)$ instance in $\PUSS$ in an obvious way:
In particular, given the history of actions and states $A_1,U_1,\dots,A_t,U_t$
in $\theta=(P,c)$ where $U_s = (Y_s,Y_s^1,\dots,Y_s^{K})$ such that 
the distribution of $U_s$ given that $A_s=a$ is $P$ marginalized to $\Y^{a}$,
the next action to be taken is 
$A_{t+1}\sim \pi(\cdot| A_1, V_1,\dots,A_t,V_t)$, where 
$V_s = (\one{Y_s^1\ne Y_s^1}-C_1,\dots,\one{Y_s^1\ne Y_s^{A_s}}-C_{A_s})$. Let the resulting policy be denoted by $\pi'$.
The following can be checked by simple direct calculation:
\begin{prop} 
	\label{prop:equivalence}
If $\theta \in \TSD$, then the regret of $\pi$ on $f(\theta)\in \Pside$
is the same as the regret of $\pi^\prime$ on $\theta$. 
\end{prop}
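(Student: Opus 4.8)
The plan is to unwind the two regret definitions and show they coincide term-by-term, using the strong dominance property precisely once, to relate the USS cost $\ind{Y^k\ne Y^k_t}$ to the bandit cost $\one{Y^k\ne Y^1}-C_k$. First I would set up the coupling: run $\pi$ on $f(\theta)$ and $\pi'$ on $\theta$ on the same underlying randomness, i.e. the same i.i.d.\ draws $U_t=(Y_t,Y_t^1,\dots,Y_t^K)\sim P$ and the same internal randomization of $\pi$. By construction of $\pi'$, the history fed to $\pi$ at round $t$ when playing $\pi'$ on $\theta$ is exactly $(A_1,V_1,\dots,A_t,V_t)$ with $V_s=(\one{Y_s^1\ne Y_s^1}-C_1,\dots,\one{Y_s^1\ne Y_s^{A_s}}-C_{A_s})$; this is distributionally identical to the feedback $\pi$ receives in $\Pside$, since in $f(\theta)$ the cost vector for neighborhood $\mathcal N(A_s)=[A_s]$ has exactly the law of $(\one{Y^1\ne Y^1}-C_1,\dots,\one{Y^1\ne Y^{A_s}}-C_{A_s})$ (here using that the per-arm costs in $\Pside$ are generated from the same latent $(Y,Y^1,\dots,Y^K)$). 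Hence the action sequences $(A_t)$ have the same distribution under both processes, and it suffices to compare the per-round expected costs arm by arm.

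Next I would compare the optimal baselines and the per-round costs. In $\Pside$ the expected cost of arm $k$ is $\EE{\one{Y^k\ne Y^1}} - C_k = \Prob{Y^k\ne Y^1} - C_k$, whereas in $\theta$ the expected cost of action $k$ is $c(k,\theta)=C_k+\gamma_k$. The key identity is that under strong dominance $\Prob{Y^k\ne Y^1} = \gamma_1-\gamma_k$: indeed \cref{prop:gammadiff} gives $\gamma_1-\gamma_k=\Prob{Y^1\ne Y^k}-2\Prob{Y^k\ne Y, Y^1=Y}$, and SD (applied with $i=1$) forces $Y^1=Y \Rightarrow Y^k=Y$, so the subtracted term vanishes. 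Therefore the $\Pside$-cost of arm $k$ equals $\gamma_1-\gamma_k-C_k = \gamma_1 - (C_k+\gamma_k) = \gamma_1 - c(k,\theta)$, i.e.\ it is the negative of the USS cost shifted by the constant $\gamma_1$ (which does not depend on $k$). Consequently the gaps $\Delta_k$ — the difference between the cost of arm $k$ and the cost of the best arm — are identical in the two problems, and since the distribution of $(N_k(n))_k$ is the same under the coupling, summing $\sum_k \EE{N_k(n)}\Delta_k$ yields $\Regret_n(\pi,f(\theta)) = \Regret_n(\pi',\theta)$.

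The main obstacle — really the only place care is needed — is the first step: making the "same feedback distribution, hence same action distribution" claim fully rigorous. One must check that the map sending the USS trajectory to the bandit trajectory is measurable and that, conditioned on the past, the feedback $V_t$ in the USS run has exactly the conditional law that $\pi$ expects in $\Pside$; this is where one uses that in $f(\theta)$ the arm-$k$ cost is defined as $\one{Y^k\ne Y^1}-C_k$ with $(Y^k)$ drawn from the same $P$, and that the neighborhood of $A_t$ is $[A_t]$ so that exactly the coordinates $Y_t^1,\dots,Y_t^{A_t}$ (equivalently the indicators against $Y_t^1$) are revealed. The remaining computations — the $\gamma_1-\gamma_k$ identity from \cref{prop:gammadiff} and SD, and the bookkeeping of the constant shift — are routine. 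I would present the argument as: (i) coupling and equality in law of action counts; (ii) the SD identity $\Prob{Y^k\ne Y^1}=\gamma_1-\gamma_k$; (iii) conclude equality of gaps and hence of regret.
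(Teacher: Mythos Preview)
Your proposal follows the paper's proof almost exactly: both assert that the pull counts $(N_k(n))_k$ coincide under the two processes and then show the per-arm gaps agree, with your version more carefully deriving $\Prob{Y^k\ne Y^1}=\gamma_1-\gamma_k$ via \cref{prop:gammadiff} and SD where the paper simply asserts the arm means in $f(\theta)$ equal $\gamma_1-(\gamma_k+C_k)$. One caveat: you justify the feedback-law equality by saying the $\Pside$ costs are generated from the same latent $(Y,Y^1,\dots,Y^K)$, but the paper actually defines them to be \emph{independent} across arms; the paper's own proof sidesteps this by baldly asserting that the pull counts match, so your argument is aligned with theirs both in substance and in what is left implicit.
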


This implies that $\Regret_T^*(\TSD)\le \Regret_T^*(f(\TSD))$. Now note that this reasoning can also be repeated in the other ``direction'': 
For this, first note that the map $f$ has a right inverse $g$ 
(thus, $f\circ g$ is the identity over $\Pside$)
and if $\pi'$ is a policy for $\PUSS$, 
then $\pi'$ can be ``used''  on any instance $\theta\in \Pside$
via the ``inverse'' of the above policy-transformation:
Given the sequence $(A_1,V_1,\dots,A_t,V_t)$ where 
$V_s= (B_s^1+C_1,\dots,B_ s^{K}+C_s)$ is the vector of costs for round $s$
with $B_s^k$ being a Bernoulli with parameter $\gamma_k$,
let $A_{t+1} \sim \pi'( \cdot| A_1,W_1,\dots,A_t,W_t)$ where
$W_s = (B_s^1,\dots,B_s^{A_s})$. Let the resulting policy be denoted by $\pi$.
Then the following holds:
\begin{prop}
Let $\theta \in f(\TSD)$. Then the regret of policy $\pi$ on $\theta\in f(\TSD)$ is the same
as the regret of policy $\pi'$ on instance $f^{-1}(\theta)$.
\end{prop}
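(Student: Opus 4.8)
This statement is the exact mirror of Proposition~\ref{prop:equivalence}, so the plan is to replay that argument with the two problems interchanged. A preliminary remark on notation: since $g$ is only a \emph{right} inverse of $f$, read $f^{-1}(\theta)$ as $g(\theta)\in\TSD$; the regret of $\pi'$ is the same on every $\PUSS$-preimage of $\theta$, as it depends on the instance only through the (known) costs $(C_k)_k$ and the law of the observable quantities, and these are precisely what $f$ preserves. Write $(Y,Y^1,\dots,Y^K)\sim P$ for the state of $g(\theta)$; because $g(\theta)$ satisfies strong dominance, Proposition~\ref{prop:gammadiff} gives $\Prob{Y^i\ne Y^j}=\gamma_i-\gamma_j$ for all $i\le j$. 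This is exactly what makes the mean cost of arm $k$ in $\theta$ equal to the USS cost $c(k,g(\theta))=C_k+\gamma_k$ up to an additive constant that does not depend on $k$ (a fixed slope and intercept), so that the optimal arm of $\theta$ is $a^*(g(\theta))$ and the per-action cost gaps agree on the two sides.

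Next, place both processes on a single probability space carrying an i.i.d.\ sequence of environment draws $U_s=(Y_s,Y_s^1,\dots,Y_s^K)\sim P$ together with the internal randomization of $\pi'$, and run $\pi$ against $\theta$ and $\pi'$ against $g(\theta)$ simultaneously on this space. By construction $\pi$, on seeing a bandit history, strips the known shifts $C_k$ off the observed cost vector and passes the result to $\pi'$; since this re-coding is a bijection and the draws are shared, one shows by induction on the round index $t$ that (a)~the two action sequences $A_1,A_2,\dots$ agree almost surely, and (b)~after the deterministic re-coding the feedback seen by $\pi'$ in the two runs agrees almost surely --- this is the same verification as in Proposition~\ref{prop:equivalence}, read backwards. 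Granting (a)--(b), the cost incurred in round $t$ against $\theta$ and against $g(\theta)$ differ by the $k$-independent constant identified above; summing over $t=1,\dots,n$ and subtracting $n$ times the common optimal per-round cost, the constant cancels and $\Regret_n$ is the same on the two sides.

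The main obstacle is step~(b): checking that the law of the feedback sequence consumed by $\pi'$ is genuinely identical whether it is produced from the bandit instance $\theta$ (where, by the construction, the round-$s$ cost vector is assembled from independent shifted Bernoullis) or from the USS instance $g(\theta)$ (where it is assembled from the single joint draw $Y_s^1,\dots,Y_s^{A_s}$). This is precisely where strong dominance is essential: under SD the vector of disagreements with the root sensor is a monotone $0/1$ ``staircase'' whose only randomness is the location of its jump, so after marginalizing its law on any prefix coincides with the product-of-Bernoullis description, with the parameters pinned down by $\Prob{Y^i\ne Y^j}=\gamma_i-\gamma_j$. Once this law-matching is established the remainder is the bookkeeping of Proposition~\ref{prop:equivalence}, the only delicate point being to keep the sign conventions (costs versus rewards, and the $\pm C_k$ shifts) consistent throughout.
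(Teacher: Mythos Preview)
Your overall plan mirrors the paper's argument for Proposition~\ref{prop:equivalence}: match the suboptimality gaps $\Delta_k$ on the two instances (this is where SD enters, via $\Prob{Y^i\ne Y^j}=\gamma_i-\gamma_j$), and then argue that the action sequences coincide so that the expected pull counts $\EE{N_k(n)}$ agree. The paper simply asserts the pull-count equality from the history re-coding being a bijection and then carries out the gap calculation; you go further and try to justify that assertion by matching the \emph{joint} law of the feedback that $\pi'$ consumes on the two sides.

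That is where your argument breaks. The claim that under SD the staircase law of the disagreement vector ``coincides with the product-of-Bernoullis description'' is false: the two have the same \emph{marginals} (the $k$th coordinate is $\mathrm{Ber}(\gamma_1-\gamma_k)$ in both) but different \emph{joints}. Already for $K=3$, independent Bernoullis put positive mass on the pattern $(0,1,0)$ whenever $\gamma_1>\gamma_2$, whereas a monotone staircase never realises that pattern. Strong dominance pins down the marginals but drives the cross-arm dependence to the comonotone extreme---the opposite of independence---so it cannot reconcile the two descriptions. Consequently a USS policy $\pi'$ that reacts to the joint pattern of its inputs will in general select different actions under the two feedback streams, and your coupling does not force $N_k(n)$ to agree. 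The fix is not to prove the laws match (they do not, if the bandit costs are taken literally independent) but to follow the paper and rest the argument on the bijective re-coding together with the gap identity; equivalently, the side-observation instance should be read as carrying the joint law of $(\one{Y^1\ne Y^k})_k$ induced by $P$, in which case the law-matching is immediate and the staircase detour is unnecessary.
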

Hence, $\Regret_T^*(f(\TSD))\le \Regret_T^*(\TSD)$.
In summary, we get the following result:
\begin{cor}
$\Regret_T^*(\TSD) = \Regret_T^*(f(\TSD))$. 
\end{cor}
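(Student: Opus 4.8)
The plan is to derive the corollary by chaining the two inequalities that the two immediately preceding propositions furnish. Concretely, the first proposition (on $\pi \mapsto \pi'$) together with the observation that any policy of $\Pside$ transfers to a policy of $\PUSS$ with the same regret on corresponding instances already yields $\Regret_T^*(\TSD) \le \Regret_T^*(f(\TSD))$: for any bandit policy $\pi$, the induced USS policy $\pi'$ satisfies $\sup_{\theta \in \TSD}\Regret_T(\pi',\theta) = \sup_{\theta \in \TSD}\Regret_T(\pi, f(\theta)) \le \sup_{\theta' \in f(\TSD)}\Regret_T(\pi,\theta')$, and taking the infimum over $\pi$ on the right and noting it bounds the infimum over all USS policies on the left gives the claim. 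Symmetrically, the second proposition (on $\pi' \mapsto \pi$, using the right inverse $g$ of $f$) gives $\Regret_T^*(f(\TSD)) \le \Regret_T^*(\TSD)$: for any USS policy $\pi'$, the induced bandit policy $\pi$ satisfies $\sup_{\theta \in f(\TSD)}\Regret_T(\pi,\theta) = \sup_{\theta \in f(\TSD)}\Regret_T(\pi', f^{-1}(\theta)) = \sup_{\eta \in \TSD}\Regret_T(\pi',\eta)$, where the last equality uses that $f$ restricted to $\TSD$ is onto $f(\TSD)$ with inverse given by $g$. Combining the two inequalities yields equality.

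The only genuine content beyond bookkeeping is making sure the two propositions are applied to the right families and that the minimax quantities are over matching policy classes; since $\PUSS$ restricted to $\TSD$ maps bijectively (via $f$, with inverse $g$) onto $f(\TSD)$, and both propositions assert equality of per-instance regret under the policy transformations, the suprema over instances and infima over policies line up exactly. I would simply state: "By Proposition~\ref{prop:equivalence} and the paragraph following it, $\Regret_T^*(\TSD) \le \Regret_T^*(f(\TSD))$; by the subsequent proposition and the paragraph following it, $\Regret_T^*(f(\TSD)) \le \Regret_T^*(\TSD)$; combining the two gives the claim."

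The main (very mild) obstacle is purely one of exposition rather than mathematics: one must be careful that the policy-transformation maps $\pi \mapsto \pi'$ and $\pi' \mapsto \pi$ are mutually consistent on the relevant instances — i.e., that applying $f$ then transforming a bandit policy back, or $g$ then transforming a USS policy back, reproduces the regret identities claimed — but this is exactly what the two propositions encapsulate, so no new argument is needed. Hence the proof is a two-line deduction from the preceding results.
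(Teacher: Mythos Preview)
Your proposal is correct and matches the paper's approach exactly: the paper derives the corollary simply by combining the two inequalities $\Regret_T^*(\TSD)\le \Regret_T^*(f(\TSD))$ and $\Regret_T^*(f(\TSD))\le \Regret_T^*(\TSD)$ stated immediately after the two preceding propositions, with no further argument. If anything, your write-up is more explicit about the sup/inf bookkeeping than the paper itself.
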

{\bf Lower Bounds:} Note that as a consequence of the reduction and the one-to-one correspondence between the two problems, lower bounds for MAB with side-information is a lower bound for USS problem.


\if0
At the price of abusing notation 
we simplify the notation by dropping the indices from $f_{\A}, f_{\Y}$ and $f_{\Theta}$ (the identity of the appropriate map can be inferred from its argument).
Pick any policy $\pi_1\in \Pi_1$.s
First we define the policy corresponding to $\pi_1$.
Recall that a policy maps histories to distributions over actions.
Taking a history $(a_1,y_1,\dots,a_t,y_t)\in (\A_2\times \Y_2)^t$ ($t\ge 0$),
for $a_{t+1}\in \A_2$ we define the probability of taking $a_{t+1}$ by $\pi_2$, in notation,
 $\pi_2(a_{t+1}|a_1,y_1,\dots,a_t,y_t)$, as
 $\pi_2(a_{t+1}|a_1,y_1,\dots,a_t,y_t) = \pi_1(f(a_{t+1})|f(a_1),f(y_1),\dots,f(a_t),f(y_t))$.
Pick any instance $\theta_2 = (p',r')\in \Theta_2$ and let $(p,r) = \theta_1  = f( \theta_2)$.
The probability distribution over histories $\Y_1 \times \A_1$ generated by $\pi_1$ and $\theta_1$
\fi

\section{Algorithms}
\label{sec:Algo}
\newcommand{\set}{\leftarrow}
\newcommand{\hgamma}{\hat{\gamma}}
The reduction of the previous section suggests that one can  utilize 
an algorithm developed for stochastic bandits with side-observation to solve a USS satisfying SD property.
In this paper we make use of Algorithm~1 of \cite{WGySz:NIPS15}
that was proposed for stochastic bandits with Gaussian side observations. 
As noted in the same paper, the algorithm is also suitable for problems where the payoff distributions are sub-Gaussian.
As Bernoulli random variables are $\sigma^2=1/4$-sub-Gaussian (after centering),
the algorithm is also applicable in our case.

For the convenience of the reader, we give the algorithm resulting from applying the reduction to Algorithm~1 
of \cite{WGySz:NIPS15} in an explicit form.
For specifying the algorithm we need some extra notation.
Recall that given a USS instance $\theta = (P,c)$, we let $\gamma_k = \Prob{Y\ne Y^k}$ where $(Y,Y^1,\dots,Y^K)\sim P$ and $k\in [K]$. Let $k^*=\arg\min_k \gamma_k +C_k$ denote the optimal action and $\Delta_k(\theta)=\gamma_k+C_k-(\gamma_{k^*}+C_{k^*})$ the sub-optimality gap of arm $k$. Further, let $\Delta^*(\theta) = \min\{\Delta_k(\theta), k\neq k^* \}$ denote the smallest positive sub-optimality gap and define $\Delta_k^*(\theta) =\max\{\Delta_k(\theta), \Delta^*(\theta)\}$.

Since cost vector $c$ is fixed, in the following we use parameter $\gamma$ in place of $\theta$ to denote the problem instance.
A (fractional) allocation count $u\in [0,\infty)^K$ determines for each action $i$ how many times the
action is selected.
Thanks to the cascade structure, using an action $i$ implies observing the output of all the sensors with index $j$ less than equal to $i$. Hence, a sensor $j$ gets observed $u_j+u_{j+1}+\dots+u_K$ times.
We call an allocation count ``sufficiently informative'' if (with some level of confidence)
it holds that {\em (i)} for each suboptimal choice, 
the number of observations for the corresponding sensor is sufficiently large to distinguish
it from the optimal choice; and  {\em (ii)}  the optimal choice is also distinguishable from the second best choice.
We collect these counts into the set $C(\gamma)$ for a given parameter $\gamma$:
$C(\gamma) = \{ u\in [0,\infty)^K\,:\, 
u_j+u_{j+1}+\dots+u_K
\ge \frac{2\sigma^2}{(\Delta_j^*(\theta))^2}, j\in [K] \}$
(note that $\sigma^2=1/4$).
Further, let $u^*(\gamma)$
be the allocation count that minimizes the total expected excess cost over the set of sufficiently informative allocation counts:
In particular,  we let $u^*(\gamma) = \argmin_{u\in C(\gamma)} \ip{ u, \Delta(\theta) }$ 
with the understanding that for any optimal action $k$, $u_k^*(\gamma) = \min \{ u_k \,: u\in C(\gamma) \}$ (here, $\ip{x,y} = \sum_i x_i y_i$ is the standard inner product of vectors $x,y$).
For an allocation count $u\in [0,\infty)^K$ let $m(u)\in \N^K$ denote total sensor observations, where $m_j(u) = \sum_{i=1}^j u_i$ corresponds to observations of sensor $j$.

\begin{center}
	\begin{minipage}{0.48\textwidth}
		\begin{algorithm}[H]
			\caption{Algorithm for USS under SD property} 
			\label{alg:asym}
			\begin{algorithmic}[1]
				\STATE Play action $K$ and observe  $Y^1,\dots,Y^K$.
				\STATE Set $\hgamma_{i}^1 \set \one{Y^1\ne Y^i}$ for all $i\in [K]$.
				\STATE Initialize the exploration count: $n_e \set 0$.
				\STATE Initialize the allocation counts: $N_K(1) \set 1$.
				\FOR{$t=2,3,...$}
				\IF{$\frac{N(t-1)}{4\alpha \log t}\in C(\hgamma^{t-1})$} \label{alg:check}
				\STATE Set $I_t \set \argmin_{k\in [K]} c(k,\hgamma^{t-1})$. \label{alg:greedy}
				\ELSE
				\IF{$N_K(t-1)<\beta(n_e)/K$} \label{alg:starve}
				\STATE Set $I_t =K$. \label{alg:forced}
				\ELSE
				\STATE Set $I_t$ to some $i$ for which \label{alg:plan} \\
				$\quad$ $N_i(t-1)< u_i^*(\hgamma^{t-1})4\alpha\log t$.
				\ENDIF
				\STATE Increment exploration count: $n_e \set n_e+1$.
				\ENDIF
				\STATE Play $I_t$ and observe  $Y^1,\dots,Y^{I_t}$.
				\STATE For $i\in [I_t]$, set\\
				$\quad$ $\hgamma_i^t \set (1-{1\over t}) \hgamma_i^{t-1} + {1\over t} \,\one{Y^1\ne Y^i}$.
				\ENDFOR
			\end{algorithmic}
		\end{algorithm}
	\end{minipage}
\end{center}

The idea of \cref{alg:asym} is as follows:
The algorithm keeps track of an estimate $\hgamma^{t}:=(\hgamma^t_i)_{i \in [K]}$ of $\gamma$ in each round, which is initialized by pulling arm $K$ as this arm
gives information about all the other arms.
In each round, the algorithm first checks whether given the current estimate $\hgamma^t$ and the current confidence level (where the confidence level is gradually increased over time), the current allocation count $N(t)\in \N^K$
is sufficiently informative (cf. line \ref{alg:check}). If this holds, the action that is optimal under $\hgamma(t)$ is chosen 
(cf. line \ref{alg:greedy}). If the check fails, we need to explore.
The idea of the exploration is that it tries to ensure that the ``optimal plan'' -- assuming $\hgamma$ is the ``correct'' parameter -- is followed (line \ref{alg:plan}). However, this is only reasonable, if all components of $\gamma$ are relatively well-estimated.
Thus, first the algorithm checks whether any of the components of $\gamma$ has a chance of being
extremely poorly estimated (line \ref{alg:starve}). Note that the requirement here is that a significant, but still altogether diminishing fraction of the \emph{exploration rounds} is spent on estimating each components: In the long run, the fraction of exploration rounds amongst all rounds itself is diminishing; hence the forced exploration of line \ref{alg:forced} overall has a small impact on the regret, while it allows to stabilize the algorithm.

\newcommand{\gap}{d}
\newcommand{\norm}[1]{\|#1\|}
For $\theta \in \TSD$, let $\gamma(\theta)$ be the error probabilities for the various sensors.
The following result follows from Theorem~6 of \cite{WGySz:NIPS15}:
\begin{thm} \label{thm:ftregret}
Let $\epsilon>0$, $\alpha>2$ arbitrary and choose any non-decreasing $\beta(n)$ that satisfies $0\le \beta(n)\le n/2$ and $\beta(m+n)\le \beta(m)+\beta(n)$ for $m,n\in \N$.
Then, 
for any
$\theta\in \TSD$, letting $\gamma = \gamma(\theta)$
the expected regret of \cref{alg:asym} after $T$ steps satisfies 
\begin{align*}
&R_T(\theta)
  \le  \Big( 2K+2+\frac{4K}{\alpha-2} \Big) 
  +  4K\sum_{s=0}^T \exp{\Big( \frac{-8\beta(s)\epsilon^2}{2K} \Big)}~~~~~ \\
 &\hspace{-.3cm} + 2 \beta\Big( 4\alpha\log T\sum_{i\in \iset{K}} u_i^*(\gamma,\epsilon)+K \Big)
  +  4\alpha\log T \sum_{i\in \iset{K}} u_i^*(\gamma,\epsilon)\Delta_i(\theta) \,,
\end{align*}
where $u_i^*(\gamma,\epsilon) = \sup\{u_i^*(\gamma')\,:\, \norm{\gamma'-\gamma}_{\infty}\le \epsilon \}$.
\label{thm:alg1-ub1}
\end{thm}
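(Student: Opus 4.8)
The plan is to obtain the bound as a direct consequence of the regret equivalence of \cref{sec:Equiv} and Theorem~6 of \cite{WGySz:NIPS15}. By construction, \cref{alg:asym} is exactly the policy obtained by applying the reduction of \cref{sec:Equiv} to Algorithm~1 of \cite{WGySz:NIPS15}, run on the side-observation bandit $f(\theta)$ associated with the cascade (neighbourhoods $\mathcal{N}(k)=[k]$); that is, in the notation of \cref{sec:Equiv}, if $\pi$ is that bandit algorithm then \cref{alg:asym} is $\pi'$. So it suffices to: (i) check that the hypotheses of Theorem~6 of \cite{WGySz:NIPS15} hold for $f(\theta)$; (ii) identify the quantities in the statement with the corresponding objects of \cite{WGySz:NIPS15} specialized to the cascade feedback graph; (iii) read off the bound for $\pi$ on $f(\theta)$; and (iv) transport it to $\pi'$ on $\theta$ via \cref{prop:equivalence}.

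For (i): Theorem~6 of \cite{WGySz:NIPS15} is stated for sub-Gaussian payoff/side-observation noise and a fixed, known feedback graph. Here the arm payoffs are (shifted) Bernoulli, hence $\tfrac14$-sub-Gaussian after centering --- precisely the $\sigma^2=\tfrac14$ appearing in the definition of $C(\gamma)$ --- and $\mathcal{N}(k)=[k]$ is fixed and known, so the hypotheses hold. For (ii), the one substantive point is that the reduction must carry the best-arm problem of $f(\theta)$ onto the optimal-action problem of the USS \emph{with the gaps preserved}: the payoff of arm $i$ in $f(\theta)$ has mean $\Prob{Y^1\ne Y^i}-C_i$, and by \cref{prop:gammadiff} together with strong dominance $\Prob{Y^1\ne Y^i}=\gamma_1-\gamma_i$, so this mean equals $-(\gamma_i+C_i)$ up to the action-independent constant $\gamma_1$. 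Hence the optimal arm of $f(\theta)$ is $a^*(\theta)$, the bandit gaps are the $\Delta_k(\theta)$, the estimate $\hgamma_i^t$ maintained by \cref{alg:asym} is the algorithm's estimate of $\Prob{Y^1\ne Y^i}=\gamma_1-\gamma_i$ (with $\hgamma_1^t\equiv 0$), and --- since action $i$ reveals sensors $1,\dots,i$, so that sensor $j$ is observed $u_j+\dots+u_K$ times under allocation $u$ --- the set $C(\gamma)$, the optimizer $u^*(\gamma)$, and $m(u)$ are exactly the cascade specializations of the corresponding quantities in \cite{WGySz:NIPS15}. Here strong dominance is essential: it is what makes the unobservable label drop out (replaced by the observable disagreement $\one{Y^1\ne Y^i}$) while leaving the gaps intact.

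Given (i)--(ii), step (iii) is immediate: applying Theorem~6 of \cite{WGySz:NIPS15} with the stated $\epsilon>0$, $\alpha>2$ and the prescribed sub-additive, non-decreasing schedule $\beta(\cdot)$ with $0\le\beta(n)\le n/2$ yields verbatim the four-term bound, with $u_i^*(\gamma,\epsilon)=\sup\{u_i^*(\gamma')\,:\,\norm{\gamma'-\gamma}_\infty\le\epsilon\}$. For (iv), since $\theta\in\TSD$, \cref{prop:equivalence} gives that the regret of $\pi$ on $f(\theta)$ equals that of $\pi'=$~\cref{alg:asym} on $\theta$, completing the proof. The main obstacle is entirely one of bookkeeping --- confirming term by term that the general (arbitrary Gaussian side-observation) algorithm and bound of \cite{WGySz:NIPS15} specialize exactly to \cref{alg:asym} and to the quantities $\Delta_k$, $C(\gamma)$, $u^*$, $m(u)$ defined here; no new analysis beyond the cited theorem and the strong-dominance identity is required.
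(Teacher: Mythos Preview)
Your proposal is correct and matches the paper's approach: the paper gives no standalone proof of this theorem, stating only that it ``follows from Theorem~6 of \cite{WGySz:NIPS15}'' via the reduction of \cref{sec:Equiv}, and your write-up simply fills in the bookkeeping (sub-Gaussianity of the Bernoulli rewards, gap preservation under strong dominance via \cref{prop:gammadiff}, and the transport via \cref{prop:equivalence}) that the paper leaves implicit.
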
  
Further specifying $\beta(n)$ and using the continuity of $u^*(\cdot)$ at $\theta$, it immediately follows that Algorithm~\ref{alg:asym} achieves asymptotically optimal performance: 
\begin{cor}
\label{cor:alg1-asym-opt}
 Suppose the conditions of Theorem~\ref{thm:alg1-ub1} hold. Assume, furthermore, that $\beta(n)$ satisfies $\beta(n) = o(n)$ and $\sum_{s=0}^\infty \exp \left( -\frac{\beta(s)\epsilon^2}{2K\sigma^2} \right)<\infty$ for any $\epsilon>0$, then for any $\theta$ such that $u^*(\theta)$ is unique, 
$
\limsup_{T\rightarrow \infty} R_T(\theta,c)/\log T \le 4\alpha \inf_{u\in C_\theta} \ip{u,\Delta(\theta)}\,$.
\end{cor}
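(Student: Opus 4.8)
The plan is to start from the explicit finite‑time bound of Theorem~\ref{thm:alg1-ub1}, divide it by $\log T$, take $\limsup_{T\to\infty}$ with $\epsilon>0$ held fixed, and only afterwards let $\epsilon\downarrow 0$, invoking the continuity of $u^*(\cdot)$ at $\theta$ at the very last step. Throughout, $\gamma=\gamma(\theta)$ and I identify $C_\theta$ with $C(\gamma)$.

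First I would dispatch, one group at a time, the four summands on the right‑hand side of Theorem~\ref{thm:alg1-ub1} after division by $\log T$. (i)~The constant $2K+2+\tfrac{4K}{\alpha-2}$ contributes $O(1/\log T)\to0$. (ii)~For $4K\sum_{s=0}^{T}\exp\!\big(\tfrac{-8\beta(s)\epsilon^2}{2K}\big)$, note that with $\sigma^2=1/4$ the exponent equals $-4\beta(s)\epsilon^2/K$, so applying the hypothesis $\sum_{s\ge0}\exp\!\big(-\tfrac{\beta(s)\tilde\epsilon^2}{2K\sigma^2}\big)<\infty$ with $\tilde\epsilon^2=2\epsilon^2$ shows this series converges; hence the whole group is $O(1)$ and, divided by $\log T$, vanishes. (iii)~The argument of $\beta$ in the third term is $4\alpha(\log T)\,S_\epsilon+K$ with $S_\epsilon=\sum_{i\in[K]}u_i^*(\gamma,\epsilon)$ a finite constant (finiteness discussed below), so $\beta(n)=o(n)$ gives $2\beta(4\alpha S_\epsilon\log T+K)=o(\log T)$, which divided by $\log T$ tends to $0$. (iv)~The last term divided by $\log T$ is exactly $4\alpha\sum_{i\in[K]}u_i^*(\gamma,\epsilon)\,\Delta_i(\theta)$, independent of $T$. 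Putting (i)--(iv) together yields, for every fixed $\epsilon>0$,
\[
\limsup_{T\to\infty}\frac{R_T(\theta)}{\log T}\ \le\ 4\alpha\sum_{i\in[K]}u_i^*(\gamma,\epsilon)\,\Delta_i(\theta)\,.
\]

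Next I would let $\epsilon\downarrow0$. Since $\Delta_j^*(\theta)\ge\Delta^*(\theta)>0$, the thresholds defining $C(\gamma')$ are continuous near $\gamma$, the set $C(\gamma')$ has nonempty interior (it contains arbitrarily large $u$), and the minimizing counts stay uniformly bounded in a small $\infty$‑ball around $\gamma$; hence $u_i^*(\gamma,\epsilon)=\sup\{u_i^*(\gamma'):\|\gamma'-\gamma\|_\infty\le\epsilon\}$ is finite for small $\epsilon$. The correspondence $\gamma'\mapsto C(\gamma')$ is (lower and upper hemi‑)continuous and the objective $u\mapsto\ip{u,\Delta}$ has coefficients depending continuously on $\gamma'$, so by Berge's maximum theorem the $\argmin$ correspondence is upper hemicontinuous; because $u^*(\theta)$ is \emph{unique}, this upgrades to genuine continuity of $u^*(\cdot)$ at $\theta$, so $u_i^*(\gamma,\epsilon)\to u_i^*(\gamma)$ as $\epsilon\downarrow0$. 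Taking this limit in the displayed inequality gives
\[
\limsup_{T\to\infty}\frac{R_T(\theta)}{\log T}\ \le\ 4\alpha\sum_{i\in[K]}u_i^*(\gamma)\,\Delta_i(\theta)\ =\ 4\alpha\,\ip{u^*(\gamma),\Delta(\theta)}\ =\ 4\alpha\inf_{u\in C_\theta}\ip{u,\Delta(\theta)}\,,
\]
the last equality being the definition of $u^*(\gamma)$, which is what the corollary asserts.

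The only genuinely nontrivial ingredient is the continuity of $u^*(\cdot)$ at $\theta$, and it is precisely there that the hypothesis ``$u^*(\theta)$ unique'' is used (together with the standard fact that a unique minimizer of a continuous objective over a continuously varying, essentially compact‑valued constraint correspondence depends continuously on parameters). Everything else is routine asymptotic bookkeeping: matching the $\sigma^2=1/4$ constants so the summability hypothesis covers term (ii), and invoking $\beta(n)=o(n)$ for term (iii).
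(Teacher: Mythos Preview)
Your proposal is correct and follows exactly the approach the paper indicates: the paper does not give a detailed proof of this corollary but simply states that it ``immediately follows'' by ``further specifying $\beta(n)$ and using the continuity of $u^*(\cdot)$ at $\theta$,'' which is precisely what you do---divide the bound of Theorem~\ref{thm:alg1-ub1} by $\log T$, kill the first three groups of terms using the growth assumptions on $\beta$ and the summability hypothesis, and then send $\epsilon\downarrow 0$ using continuity of $u^*$ at the unique minimizer. Your handling of the constants (in particular matching the $\sigma^2=1/4$ factor so the summability hypothesis applies to term~(ii)) and your justification of continuity via uniqueness are more explicit than anything the paper provides.
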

Note that any $\beta(n) = an^b$ with $a\in (0,\frac{1}{2}]$, $b\in (0,1)$ satisfies the requirements in Theorem~\ref{thm:alg1-ub1} and Corollary~\ref{cor:alg1-asym-opt}.

The algorithm in \ref{alg:asym} only estimates the disagreements $\P\{Y^1 \neq Y^j\}$ for all $j \in [K]$ which suffices to identify the optimal arm when SD property (see Sec \ref{sec:Equiv}) holds. Clearly, one can estimate pairwise disagreements probabilities $\P\{Y^i \neq Y^j\}$ for $i\neq j$ and use them to order the arms. We next develop a heuristic algorithm that uses this information and works for USS under WD. 

\subsection{Algorithm for Weak Dominance}
The reduction scheme described above is optimal under SD property and can fail under the more relaxed WD property. This is because, under SD, marginal error is equal to marginal disagreement (see Prop.~2); it follows for any two sensors $i < j$, $\gamma_i-\gamma_j = (\gamma_i-\gamma_1) - (\gamma_j-\gamma_1) = \Prob{Y^i\ne Y^1} - \Prob{Y^i\ne Y^1}$; and so we can compute marginal error between any two sensors by only keeping track of disagreements relative to sensor 1. Under WD, marginal error is a lower bound and keeping track only of disagreements relative to sensor 1 no longer suffices because $\Prob{Y^i\ne Y^1} - \Prob{Y^i\ne Y^1}$ is no longer a good estimate for $\Prob{Y^i\ne Y^j}$.
%
Our key insight is based on the fact that,
under WD, for a given set of the disagreement probabilities, for all $i \neq j$, the set $\{i \in [K-1]: C_j - C_i \geq \P\{Y^i \neq Y^j\} \mbox{ for all } j>i\}$ includes the optimal arm. We use this idea in Algorithm \ref{alg:UCB} to identify the optimal arm when an instance of USS satisfies WD. We will experimentally validate its performance on real datasets in the next section.
\begin{center}
\begin{minipage}{0.48\textwidth}
		\begin{algorithm}[H]
			\caption{Algorithm for USS with WD property} 
			\label{alg:UCB}
			\begin{algorithmic}[1]
				\STATE Play action $K$ and observe $Y^1,\dots,Y^K$
				\STATE Set $\hgamma_{ij}^1 \set \one{Y^i\ne Y^j}$ for all $i,j\in [K]$ and $i < j$.
				\STATE $n_i(1)\leftarrow \one{i=K} \; \forall i\in [K]$.
				\FOR{$t=2,3,...$}
				\STATE $U_{ij}^t=\hgamma_{ij}^{t-1} + \sqrt{\frac{1.5\log(t)}{n_j(t-1)}}$  $\;\forall \; i,j \in [K]$ and $i<j$ \label{algo:UCB}
				\STATE $S_t=\{i \in [K-1]: C_j-C_i \geq U_{ij}^t \;\forall \;   j > i \}$ \label{algo:sort}
				\STATE Set $I_t= \arg \min S_t \cup \{K\} $
				\STATE Play $I_t$ and observe $Y^1,\dots,Y^{I_t}$.
				 \FOR {$i\in [I_t]$}
				\STATE $n_i(t) \set n_i(t-1)+1$\\
				 \STATE $\hgamma_{ij}^t \set \left (1-\frac{1}{n_j(t)}\right )
				 \hgamma_{ij}^{t-1} + \frac{1}{n_j(t)} \,\one{Y^j\ne Y^i} \forall \; i<j \leq I_t$ \label{algo:Update}
				\ENDFOR
				\ENDFOR
			\end{algorithmic}
		\end{algorithm}
	\end{minipage}
\end{center}

The algorithm works as follows. At each round, $t$, based on history, we keep track of estimates, $\hgamma^t_{ij}$, of disagreements between sensor $i$ and sensor $j$.  
In the first round, the algorithm plays arm $K$ and initializes its values. In each subsequent round, the algorithm computes the upper confidence value of $\hgamma^t_{ij}$ denoted as $U^t_{ij}$ (\ref{algo:UCB}) for all pairs $(i,j)$ and orders the arms: $i$ is considered better than arm $j$ if $C_j-C_i \geq U^t_{ij}$. Specifically, the algorithm plays an arm $i$ that satisfies $C_j-C_i \geq U^t_{ij}$ for all $j>i$ \ref{algo:sort}. If no such arm is found, then it plays arm $K$.  $n_j(t), j\in [K] $ counts the total number of observation of pairs $(Y^i, Y^j)$, for all $i<j$, till round $t$ and uses it to update the estimates $\hgamma_{ij}^t$ (\ref{algo:Update}). Regret guarantees analogous to Thm~\ref{thm:ftregret} under SD for this new scheme can also be derived but requires additional conditions. Nevertheless, no such guarantee can be provided under WD because it is not uniformly learnable (Thm~3). 
\subsection{Extensions} 
We describe briefly a few extensions here, which we will elaborate on in a longer version of the paper.

\noindent
{\it Tree Structures:}
We can deal with trees, where now the sensors are organized as a tree with root node corresponding to sensor 1 and we can select any path starting from the root node. To see this, note that the marginal error condition of Eqn.~\ref{eqn:interp_opt} still characterizes optimal sensor selection. Under a modified variant of SD, namely, Eq.~\ref{eqn:DominanceCondition} is in terms of the children of an optimal sensor, it follows that it is again sufficient to keep track of disagreements. In an analogous fashion Eq.~\ref{eqn:WD} can be suitably modified as well. This leads to sublinear regret algorithm for tree-structures. 

\noindent
{\it Context-Dependent Sensor Selection:}
Each example has a context $x \in \R^d$ and a test maps (partial) context to the output space. Analogous to the context-independent case we impose context dependent notions for SD and WD, namely, Eq.~\ref{eqn:DominanceCondition} and Eq.~\ref{eqn:WD} hold conditioned on each $x \in {\cal X}$. To handle these cases we let $\gamma_i(x)\triangleq \Pr(Y^{i} \neq Y\mid x)$ and $\gamma_{ij}(x) \triangleq \Pr(Y^{i} \neq Y^{j}\mid x)$ denote the corresponding contextual error and disagreement probabilities. Our sublinear regret guarantees can be generalized for a parameterized GLM model for disagreement, namely, when the log-odds ratio $\log \frac{\gamma_{ij}(x)}{1-\gamma_{ij}(x)}$ can be written as $\theta_{ij}'x$ for some unknown $\theta_{ij} \in \mathbb{R}^d$.
\vspace{-5pt}

\section{Experiments}
\label{sec:Experiments}
In this section we evaluate performance of Algorithms \ref{alg:asym} and \ref{alg:UCB} on synthetic and real datasets (PIMA-Diabetes) and Heart Disease (Cleveland). Both of these datasets have accompanying costs for individual features.

{\bf Synthetic:} We generate data as follows. The input, $Y_t$, is generated IID Ber($0.7$). Outputs for channels 1, 2, 3 have an overall error statistic, $\gamma_1 = 0.4,\,\gamma_2=0.1,\,\gamma_3=0.05$ respectively. To ensure SD we enforce Defn.~\ref{eqn:DominanceCondition} during the generation process. To relax SD requirement we introduce errors upto 10\% during data generation for sensor outputs 2 and 3 when sensor 1 predicts correctly.


{\bf Real Datasets:} 
We split the features into three ``sensors'' based on their costs. For PIMA-Diabetes dataset (size=$768$) the first sensor is associated with patient history/profile with cost (\$6), the 2nd sensor in addition utilizes  insulin test (cost \$ 22) and the 3rd sensor uses all attributes (cost \$46). For the Heart dataset (size=$297$) we use the first $7$ attributes that includes cholestrol readings, blood-sugar, and rest-ECG (Cost \$27), the 2nd sensor utilizes in addition thalach, exang, oldpeak attributes that cost $\$300$  and the 3rd sensor utilizes more extensive tests at a total cost of \$601.

We train three linear SVM classifiers with 5-fold cross-validation and have verified that our results match known state-of-art. Note that Table~\ref{tab:ErrorTable1} shows that the resulting classifiers/tests on these datasets approximately satisfies SD condition and thus our setup should apply to these datasets. 
The size of these datasets is relatively small and limits our ability to experiment. To run the online algorithm we therefore generate an instance randomly from the dataset (with replacement) in each round. We repeat the experiments $20$ times and averages are shown with $95\%$ confidence bounds.

\begin{center}
\begin{figure*}[!bt]
\begin{minipage}{8cm}
		\centering
		\includegraphics[scale=0.45]{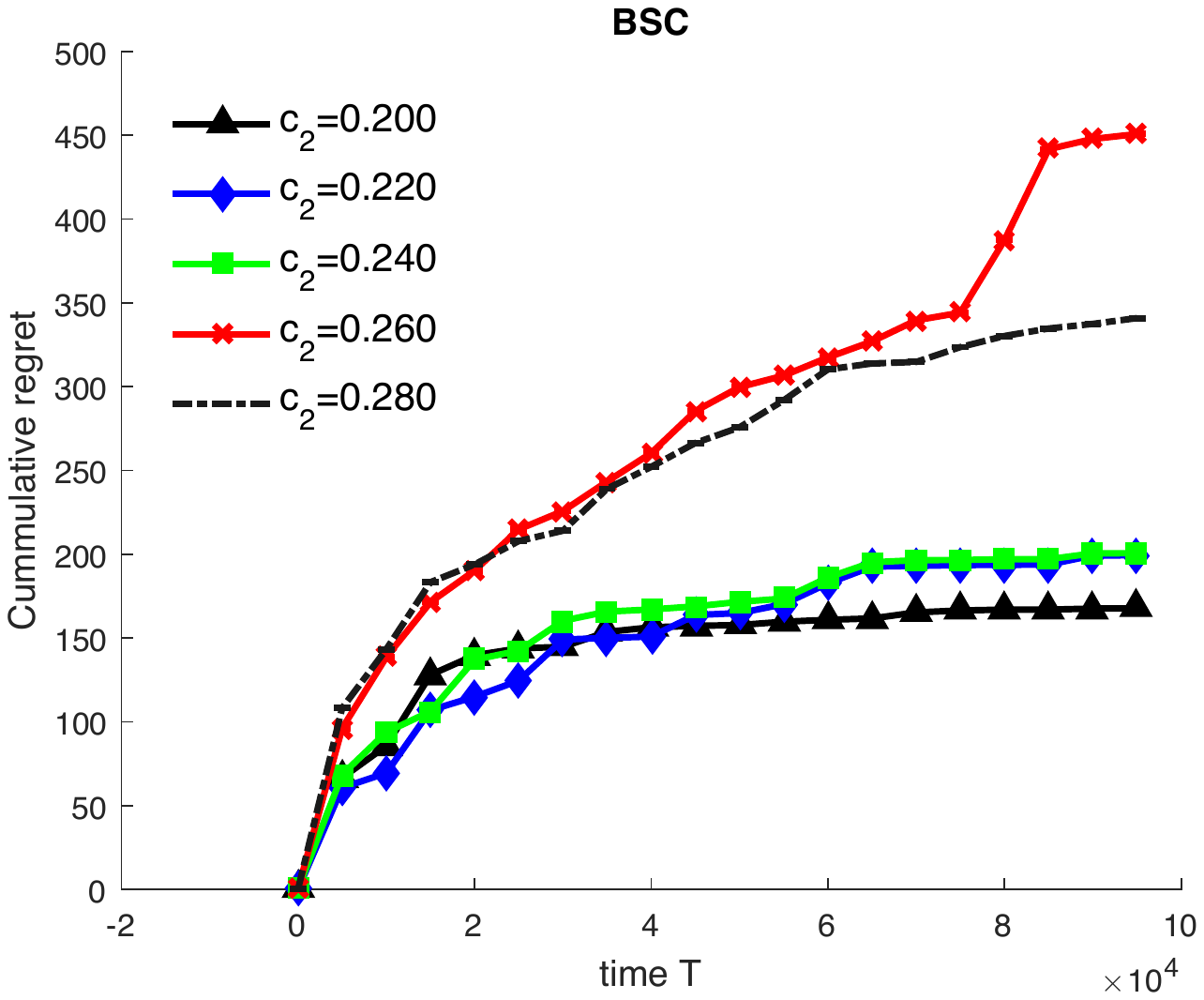}
				\vspace{-.3cm}
		\caption{\footnotesize Regret on Synthetic data under Strong Dominances}
\label{fig:BSC_SD}
	\end{minipage}
	\begin{minipage}{8cm}
		\centering
		\includegraphics[scale=0.45]{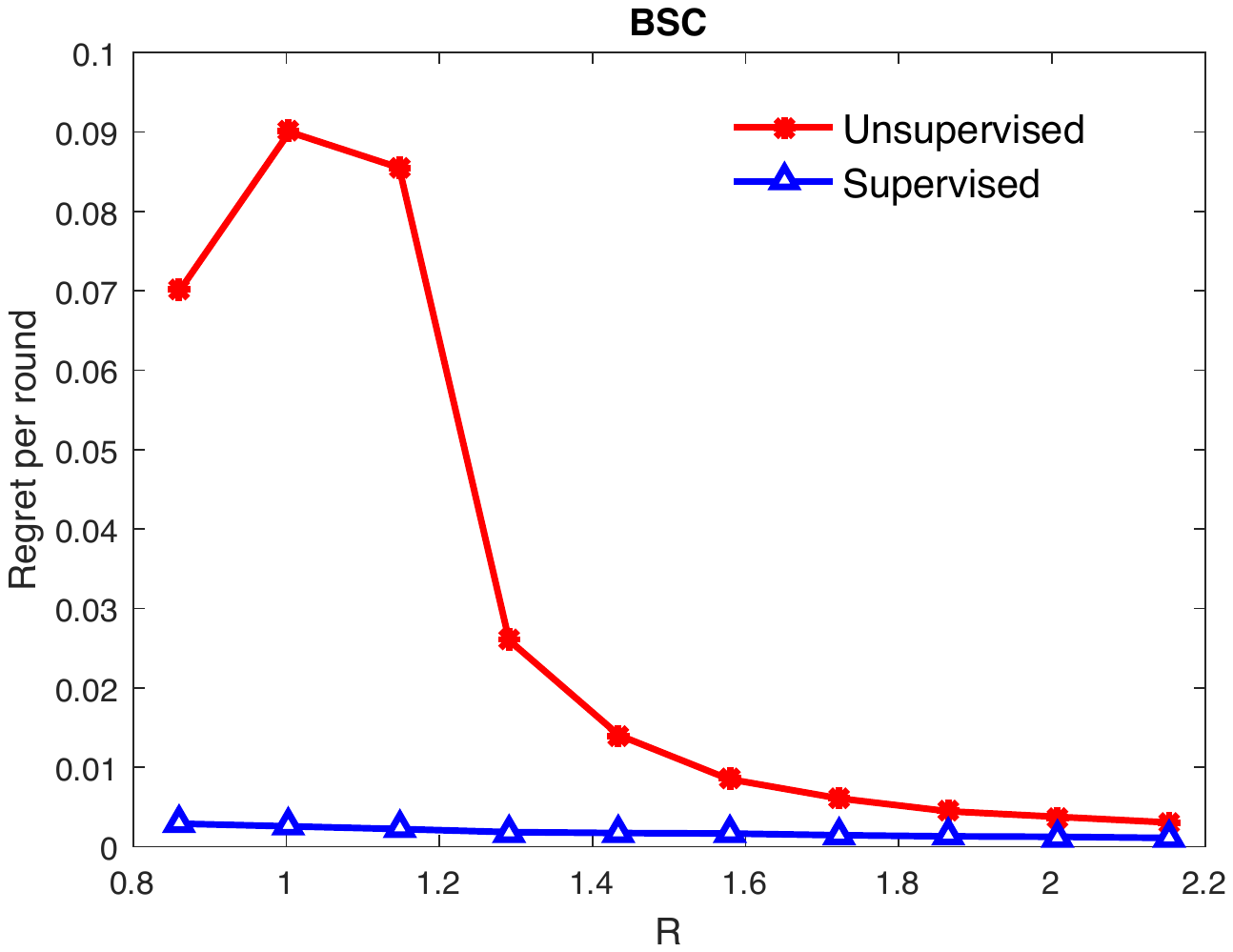}
				\vspace{-.3cm}
		\caption{\footnotesize Varying levels of Weak Dominance for Synthetic Data}
\label{fig:BSC_WD}
	\end{minipage}
%
\vspace{10pt}

\noindent
{\footnotesize Fig. \ref{fig:BSC_SD} depicts regret of USS (Alg.~1). Under SD it is always sublinear regardless of costs/probability. Fig. \ref{fig:BSC_WD} demonstrates phase-transition effect for USS (Alg.~2). Alg.~1 is not plotted here because it fails in this case. As $\rho \rightarrow 1$ (from the right) regret-per-round drastically increases thus validating our theory that WD is a maximal learnable set. 
}
\end{figure*}
\end{center}

\noindent
{\bf Testing Learnability:}
We experiment with different USS algorithms on the synthetic dataset. Our purpose is twofold: (a) verify sublinear regret reduction scheme (Alg~1) under SD (b) verify that WD condition is a maximal learnable set. 
Fig.~\ref{fig:BSC_SD} depicts the results of Alg.~\ref{alg:asym} when SD condition is satisfied and shows that we obtain sublinear regret regardless of costs/probabilities. To test WD requirement we parameterize the problem by varying costs. Without loss of generality we fix the cost of sensor 1 to be zero and the total cost of the entire system to be $C_{\mbox{tot}}$. We then vary cost of sensors 2 and 3. 
We test the hypothesis that WD is a maximal learnable set. We enforce Sensor 2 as the optimal sensor and vary the costs so that we continuously pass from the situation where WD holds ($\rho \geq 1$) to the case where WD does not hold ($\rho < 1$). 
Fig.~\ref{fig:BSC_WD} depicts regret-per-round for Alg.~\ref{alg:UCB} and as we can verify there is indeed a transition at $\rho = 1$ . 

\noindent
{\bf Unsupervised vs. Supervised Learning:}
The real datasets provide an opportunity to understand how different types of information can impact performance. We compare USS algorithm (Alg.~2) against a corresponding bandit algorithm where the learner receives feedback. In particular, for each action in each round, in the bandit setting, the learner knows whether or not the corresponding sensor output is correct. We implement the supervised bandit setting by replacing Step 5 in Alg.~2 with estimated marginal error rates. 

We scale costs by means of a tuning parameter (since the costs of features are all greater than one) and consider minimizing a combined objective ($\lambda$ Cost + Error) as stated in Sec.~2. High (low)-values for $\lambda$ correspond to low (high)-budget constraint. If we set a fixed budget of (\$ 50), this corresponds to high-budget (small $\lambda$) and low budget (large $\lambda$) for PIMA Diabetes (3rd test optimal) and Heart Disease (1st test optimal) respectively. Figs~\ref{fig:Diabetes} and \ref{fig:Heart} depicts performance for typical scenarios. We notice that for both high as well as low cost scenarios, while supervised does have lower regret, the USS cummulative regret is also sublinear and within a constant fraction of the supervised case. This is qualititively interesting because these plots demonstrate that (although under WD we do not have uniform learnability), in typical cases, we learn as well as the supervised setting. 

\begin{center}
\begin{figure*}
	\begin{minipage}{8cm}
		\centering
		\includegraphics[scale=0.5]{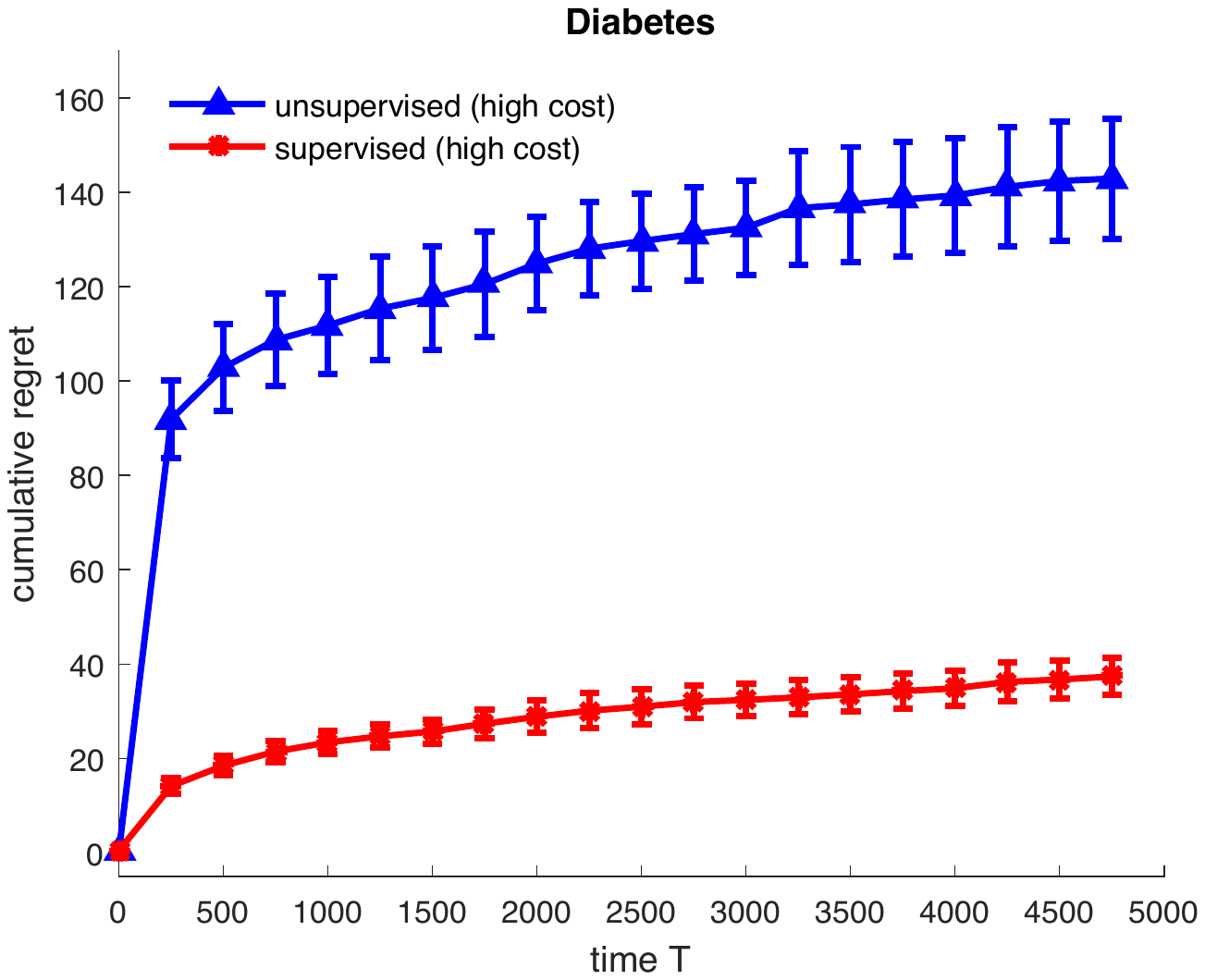}
		\vspace{-.3cm}
		\caption{\footnotesize Regret Curves on PIMA Diabetes}
		\label{fig:Diabetes}
	\end{minipage}
	\begin{minipage}{8cm}
		\centering
		\includegraphics[scale=0.5]{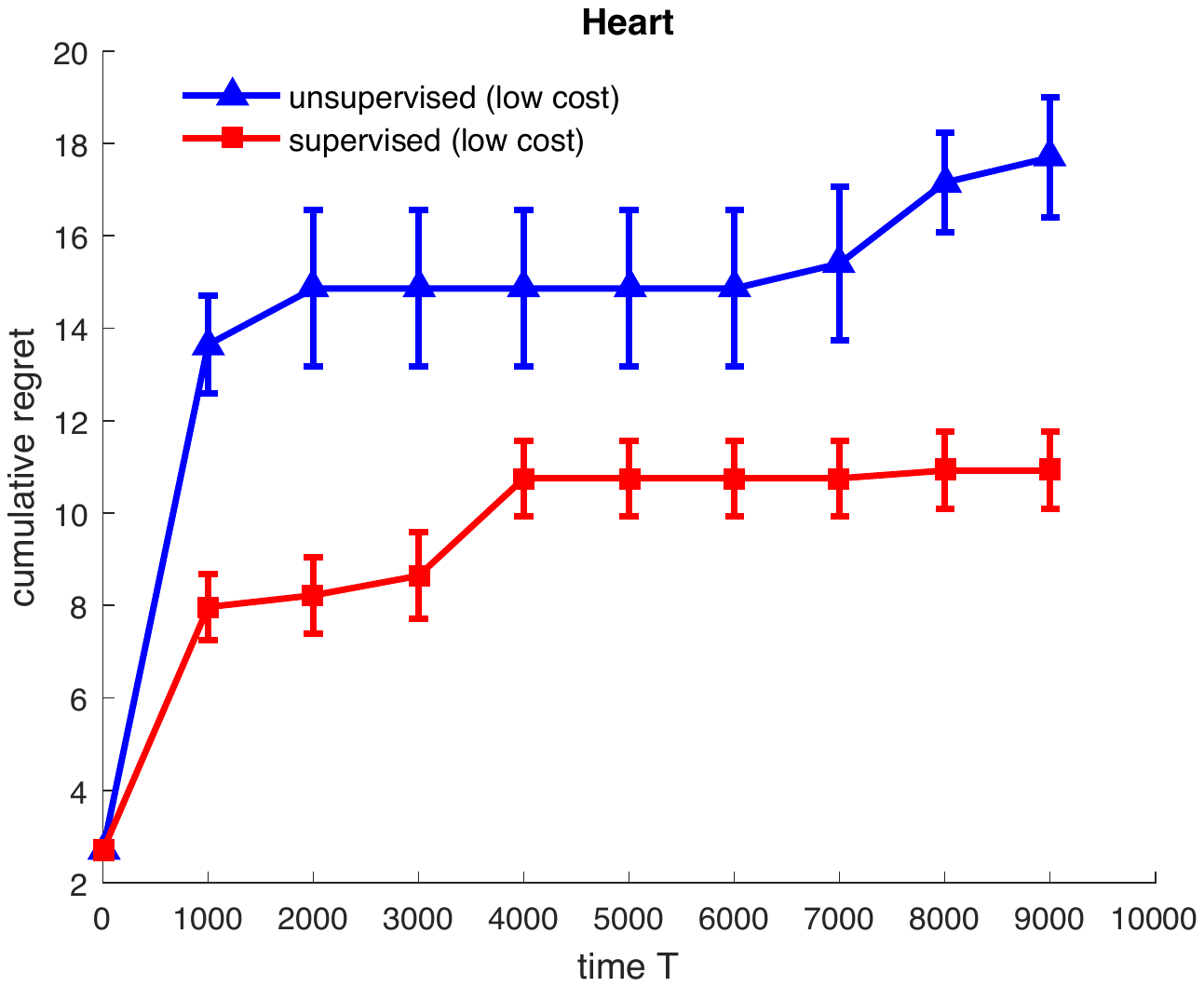}
		\vspace{-.3cm}
		\caption{Regret Curves on Heart Disease}
	\label{fig:Heart}
	\end{minipage}
\vspace{10pt}

\noindent
{\footnotesize Fig. \ref{fig:Diabetes} and Fig. \ref{fig:Heart} depict performance for real-datasets and presents comparisons against supervised (bandit with ground-truth feedback) scenario. Typically (but not in the worst-case (see Thm~5)), under WD, the unsupervised learning rate is within a constant factor of supervised setting.
}
\end{figure*}
\end{center}

\section{Conclusions}
\label{sec:Conclu}
The paper describes a novel approach for unsupervised sensor selection. These types of problems arise in a number of healthcare and security applications. In these applications we have a collection of tests that are typically organized in a hierarchical architecture wherein test results are acquired in a sequential fashion. The goal is to identify the best balance between accuracy \& test-costs. The main challenge in these applications is that ground-truth annotated examples are unavailable and it is often difficult to acquire them in-situ. We proposed a novel approach for sensor selection based on a novel notion of weak and strong dominance properties. We showed that weak dominance condition is maximal in that violation of this condition leads to loss of learnability. Our experiments demonstrate that weak dominance does hold in practice for real datasets and typically for these datasets, unsupervised selection can be as effective as a supervised (bandit) setting. 

\noindent
{\bf Acknowledgements:} The authors thank Dr. Joe Wang for helpful discussions and in particular suggesting the concept of strong dominance.

\section{Appendix}
\section*{Stochastic Partial Monitoring Problem \ref{sec:Setup}}
In an SPM a learner interacts with a stochastic environment in a sequential manner.
In round $t=1,2,\dots$ the learner chooses an action $A_t$ from an action set $\A$, and receives a feedback $Y_t\in \Y$
from a distribution $p$ which depends on the action chosen and also on the environment instance identified
with a ``parameter'' $\theta\in\Theta$:
$Y_t \sim p(\cdot;A_t,\theta)$. 
The learner also incurs a reward $R_t$, which is a function of the action chosen and the unknown parameter $\theta$:
$R_t = r(A_t,\theta)$. 
The reward may or may not be part of the feedback for round $t$.
The learner's goal is to maximize its total expected reward.
The family of distributions $(p(\cdot;a,\theta))_{a,\theta}$ and the family of rewards $(r(a,\theta))_{a,\theta}$
and the set of possible parameters $\Theta$ are known to the learner, who uses this knowledge to judiciously choose
its next action to reduce its uncertainty about $\theta$ so that it is able to eventually converge on choosing only an 
optimal action $a^*(\theta)$, achieving the best possible reward per round, $r^*(\theta) = \max_{a\in \A} r(a,\theta)$.  Bandit problems are a special case of SPMs where $\Y$ is the set of real numbers, $r(a,\theta)$ is the mean of distribution $p(\cdot;a,\theta)$.
\section*{Proof of Proposition \ref{prop:learnablemap}}
\begin{proof}
	$\Rightarrow$: Let $\Alg$ be an algorithm that achieves sublinear regret
	and pick  an instance  $\theta \in\Theta$. Let $P = P_S \otimes P_{Y|S}$.
	The regret $\Regret_n(\Alg,\theta)$ of $\Alg$ on instance $\theta$ can be written in the form
	\begin{align*}
	\Regret_n(\Alg,\theta) = \sum_{k\in [K]} \EEi{P_S}{N_k(n)} \Delta_k(\theta)\,,
	\end{align*}
	where $N_k(n)$ is the number of times action $k$ is chosen by $\Alg$ during the $n$ rounds while
	$\Alg$ interacts with $\theta$, $\Delta_k(\theta) = c(k,\theta) - c^*(\theta)$ is the immediate regret
	and $\EEi{P_S}{\cdot}$ denotes the expectation under the distribution induced by $P_S$.
	In particular, $N_k(n)$ hides dependence on the iid sequence $Y_1,\dots,Y_n \sim P_S$ 
	that we are taking the expectation over here. 
	Since the regret is sublinear, for any $k$ suboptimal action, $\EEi{P_S}{N_k(n)} = o(n)$. 
	Define $a(P_S) = \min \{ k\in [K]\,;\, \EEi{P_S}{N_k(n)} = \Omega(n) \,\}$. Then, $a$ is well-defined as the distribution of $N_k(n)$ for any $k$ depends only on $P_S$ (and $c$). Furthermore, $a(P_S)$ selects an optimal action.
	
	$\Leftarrow$: Let $a$ be the map in the statement and let $f:\N_+\to\N_+$ be such that $1\le f(n)\le n$ for any  $n\in \N$,
	$f(n)/\log(n) \to \infty$ as $n\to \infty$ and $f(n)/n \to 0$ as $n\to \infty$ (say, $f(n) = \lceil \sqrt{n} \rceil$).
	Consider the algorithm that chooses $I_t = K$ for the first $f(n)$ steps, after which it estimates $\hat{P}_S$ by
	frequency counting and then uses $I_t = a(\hat{P}_S)$ in the remaining $n-f(n)$ trials. 
	Pick any $\theta \in \Theta$ so that $\theta = P_S \otimes P_{Y|S}$. 
	Note that by Hoeffding's inequality, 
	$\sup_{y\in \{0,1\}^K} |\hat{P}_S(y)  - P_S(y)| \le \sqrt{\frac{K\log(4n)}{2f(n)}}$ holds with probability $1-1/n$.
	Let $n_0$ be the first index such that for any $n\ge n_0$,
	$\sqrt{\frac{K\log(4n)}{2f(n)}}\le \Delta^*(\theta) \doteq \min_{k:\Delta_k(\theta)>0} \Delta_k(\theta)$.
	Such an index $n_0$ exists by our assumptions that $f$ grows faster than $n \mapsto \log(n)$.
	For $n\ge n_0$, the expected regret of $\Alg$ is at most $n \times 1/n + f(n) (1-1/n) \le 1+f(n) = o(n)$.
	\end{proof}

\section*{Proof of Corollary \ref{cor:twdlearnable}}
\begin{proof}
	By \cref{prop:awdwelldef}, $\awd$ is well-defined over $\TWD$, while by \cref{prop:awdsound}, $\awd$ is sound over $\TWD$.
\end{proof}

\section*{Proof of Proposition \ref{prop:gammadiff}}

\begin{proof}
	
	We construct a map as required by~\cref{prop:learnablemap}.
	Take an instance $\theta \in \TWD$ and let $\theta = P_S \otimes P_{Y|S}$ be its decomposition
	as defined above.
	Let $\gamma_i = \Prob{Y^i \ne Y}$, $(Y,Y^1,\dots,Y^K)\sim \theta$.
	For identifying an optimal action in $\theta$, it clearly suffices
	to know the sign of $\gamma_i + C_i - (\gamma_j +C_j)$ for all pairs $i,j\in [K]^2$.
	Since $C_i - C_j$ is known, it remains to study $\gamma_i-\gamma_j$.
	Without loss of generality (WLOG) let $i<j$.
	Then, 
	\begin{align*}
	\MoveEqLeft 0  \le \gamma_i  - \gamma_j = \Prob{Y^i\ne Y} - \Prob{Y^j\ne Y} \\
	& = \cancel{\Prob{Y^i\ne Y, Y^i=Y^j}} + \Prob{ Y^i\ne Y, Y^i\ne Y^j } - \\
	& - \left\{ 
	\cancel{\Prob{Y^j\ne Y, Y^i=Y^j}} + \Prob{ Y^j\ne Y, Y^i\ne Y^j }\right\}\\
	& = \Prob{ Y^i\ne Y, Y^i \ne Y^j } + \Prob{Y^i=Y,Y^i\ne Y^j}       \\
	& - \left\{ 
	\Prob{ Y^j \ne Y, Y^i\ne Y^j } + \Prob{ Y^i=Y,Y^i\ne Y^j}
	\right\}\\
	& \stackrel{\footnotesize (a)}{=} \Prob{ Y^j \ne Y^i } -2 \Prob{ Y^j\ne Y, Y^i = Y }\,,
	\end{align*}
	where in $(a)$ we used that $\Prob{ Y^j \ne Y, Y^i\ne Y^j } =  \Prob{ Y^j\ne Y,Y^i= Y}$ and also
	$\Prob{ Y^i=Y,Y^i\ne Y^j} = \Prob{ Y^j\ne Y,Y^i= Y}$
	which hold because $Y,Y^i,Y^j$ only take on two possible values.
\end{proof}

\section*{Proof of Proposition \ref{prop:ilej}}
\begin{proof}
	\noindent $\Rightarrow$: From the premise, it follows that $C_j - C_i \ge \gamma_i - \gamma_j$.
	Thus, by~\eqref{eq:cond1}, $C_j - C_i \ge \Prob{Y^i\ne Y^j}$.
	\noindent $\Leftarrow$: We have $C_j - C_i \ge \Prob{Y^i \ne Y^j} \ge \gamma_i -\gamma_j$, where the last
	inequality is by \cref{cor:gammadiff}.
	\end{proof}
\section*{Proof of proposition \ref{prop:jlei}}	
	\begin{proof}
		\noindent $\Rightarrow$: The condition $\gamma_i + C_i \le \gamma_j + C_j$ implies that $\gamma_j -\gamma_i \ge C_i - C_j$.
		By \cref{cor:gammadiff} we get $\Prob{Y^i \ne Y^j} \ge C_i - C_j$.
		\noindent $\Leftarrow$: Let $C_i - C_j \le \Prob{Y^i \ne Y^j}$. Then, by \eqref{eq:cond2}, $C_i - C_j \le \gamma_j - \gamma_i$.
	\end{proof}

\section*{Proof of Theorem \ref{thm:MaxLearnability}}
\begin{proof}
	Let $a$ as in the theorem statement. By~\cref{prop:awdunique}, $\awd$ is the unique sound action-selector map over $\TWD$.
	Thus, for any $\theta = P_S\otimes P_{Y|S}\in \TWD$, $\awd(P_S) = a(P_S)$.
	Hence, the result follows from \cref{cor:awdoutsideincorrect}.
	\end{proof}

\section*{Proof of Proposition \ref{prop:awdwelldef}}
\begin{proof}
	Let $\theta\in \TWD$, $i = a^*(\theta)$. Obviously, \eqref{eq:wd2} holds by the definition of $\TWD$.
	Thus, the only question is whether \eqref{eq:wd1} also holds.
	We prove this by contradiction:
	Thus, assume that \eqref{eq:wd1} does not hold, i.e., for some $j<i$, $C_i-C_j \ge \Prob{Y^i \ne Y^j}$. Then, by \cref{cor:gammadiff}, $\Prob{ Y^i \ne Y^j} \ge \gamma_j - \gamma_i$, hence $\gamma_j + C_j \le \gamma_i + C_i$, which contradicts the definition of $i$, thus finishing the proof.
	\end{proof}
\section*{Proof of Proposition \ref{prop:awdsound}}
\begin{proof}
	Take any $\theta\in \TWD$ with $\theta = P_S\otimes P_{Y|S}$, $i = \awd(P_S)$, $j = a^*(\theta)$.
	If $i=j$, there is nothing to be proven. Hence, first assume that $j>i$. Then, by~\eqref{eq:wd2}, $C_j - C_i \ge \Prob{Y^i \ne Y^j}$.
	By \cref{cor:gammadiff}, $\Prob{Y^i \ne Y^j } \ge \gamma_i - \gamma_j$. Combining these two inequalities we get that
	$\gamma_i + C_i \le \gamma_j + C_j$, which contradicts with the definition of $j$.
	Now, assume that $j<i$. Then, by~\eqref{eq:wd}, $C_i - C_j \ge \Prob{Y^i \ne Y^j}$.
	However, by~\eqref{eq:wd1}, $C_i -C_j < \Prob{Y^i \ne Y^j}$, thus $j<i$ cannot hold either and we must have $i=j$.
	\end{proof}

\section*{Proof of Proposition \ref{prop:awdunique}}
\begin{proof}
	Pick any $\theta = P_S\otimes P_{Y|S}\in \TWD$. If $A^*(\theta)$ is a singleton, then clearly $a(P_S) = \awd(P_S)$ since both are sound over $\TWD$.
	Hence, assume that $A^*(\theta)$ is not a singleton.
	Let $i = a^*(\theta) = \min A^*(\theta)$ and let $j = \min A^*(\theta) \setminus \{ i \}$.
	We argue that $P_{Y|S}$ can be changed so that on the new instance $i$ is still an optimal action, while
	$j$ is not an optimal action, while the new instance $\theta' = P_S \otimes P_{Y|S}'$ is in $\TWD$.
	
	The modification is as follows:
	Consider any $y^{-j} \doteq (y^1,\dots,y^{j-1},y^{j+1},\dots,y^K)\in \{0,1\}^{K-1}$.
	For $y,y^j\in \{0,1\}$, define 
	$q(y|y^j) = P_{Y|S}(y|y^1, \dots, y^{j-1}, y^j, y^{j+1},\dots, y^K)$
	and similarly let
	$q'(y|y^j) = P_{Y|S}'(y|y^1, \dots, y^{j-1}, y^j, y^{j+1},\dots, y^K)$
	Then, we let $q'(0|0) = 0$ and $q'(0|1) = q(0|0) + q(0|1)$,
	while we let  $q'(1|1) = 0$ and $q'(1|0) = q(1|1) + q(1|0)$.
	This makes $P_{Y|S}'$ well-defined ($P_{Y|S}'(\cdot|y^1,\dots,y^K)$ is a distribution for any $y^1,\dots,y^K$).
	Further, we claim that the transformation has the property that 
	it leaves $\gamma_p$ unchanged for $p\ne j$, while $\gamma_j$ is guaranteed to decrease.
	To see why $\gamma_p$ is left unchanged for $p\ne j$ note that
	$\gamma_p = \sum_{y^p}  P_{Y^p}(y^p) P_{Y|Y^p}(1-y^p|y^p)$.
	Clearly, $P_{Y^p}$ is left unchanged.
	Introducing $y^{-k}$ to denote a tuple where the $k$th component is left out,
	$P_{Y|Y^p}(1-y^p|y^p) = \sum_{y^{-p,-j}} P_{Y|Y^1,\dots,Y^K}( 1-y^p | y^1,\dots, y^{j-1}, 0, y^{j+1}, \dots, y^K )
	+P_{Y|Y^1,\dots,Y^K}( 1-y^p | y^1,\dots, y^{j-1}, 1, y^{j+1}, \dots, y^K )$
	and by definition,
	\begin{align*}
	& P_{Y|Y^1,\dots,Y^K}( 1-y^p | y^1,\dots, y^{j-1}, 0, y^{j+1}, \dots, y^K )\\
	&\quad +P_{Y|Y^1,\dots,Y^K}( 1-y^p | y^1,\dots, y^{j-1}, 1, y^{j+1}, \dots, y^K )\\
	&
	=
	P_{Y|Y^1,\dots,Y^K}'( 1-y^p | y^1,\dots, y^{j-1}, 0, y^{j+1}, \dots, y^K )\\
	&\quad+P_{Y|Y^1,\dots,Y^K}'( 1-y^p | y^1,\dots, y^{j-1}, 1, y^{j+1}, \dots, y^K )\,,
	\end{align*}
	where the equality holds because ``$q'(y|0)+q'(y|1) = q(y|0) + q(y|1)$''.
	Thus, $P_{Y|Y^p}(1-y^p|y^p) = P_{Y|Y^p}'(1-y^p|y^p)$ as claimed.
	That $\gamma_j$ is non-increasing follows with an analogue calculation.
	In fact, this shows that $\gamma_j$ is strictly decreased
	if for any $(y^1,\dots,y^{j-1},y^{j+1},\dots,y^K)\in \{0,1\}^{K-1}$, either $q(0|0)$ or $q(1|1)$ was positive.
	If these are never positive, this means that $\gamma_j=1$. 
	But then $j$ cannot be optimal since $c_j>0$.
	Since $j$ was optimal, $\gamma_j$ is guaranteed to decrease.
	
	Finally, it is clear that the new instance is still in $\TWD$ since  $a^*(\theta)$ is left unchanged.
\end{proof}

\section*{Proof of  Theorem \ref{thm:tsdlearnable}}
\begin{proof}[Proof of \cref{thm:tsdlearnable}]
	We construct a map as required by~\cref{prop:learnablemap}.
	Take an instance $\theta \in \TSD$ and let $\theta = P_S \otimes P_{Y|S}$ be its decomposition as before.
	Let $\gamma_i = \Prob{Y^i \ne Y}$, $(Y,Y^1,\dots,Y^K)\sim \theta$, $C_i = c_1+\dots+c_i$.
	For identifying an optimal action in $\theta$, it clearly suffices
	to know the sign of $\gamma_i + C_i - (\gamma_j +C_j) = \gamma_i-\gamma_j + (C_i-C_j)$ for all pairs $i,j\in [K]^2$.
	Without loss of generality (WLOG) let $i<j$. By \cref{prop:gammadiff},
	$\gamma_i - \gamma_j = \Prob{ Y^i \ne Y^j } -2 \Prob{ Y^j\ne Y, Y^i = Y }$.
	Now, since $\theta$ satisfies the strong dominance condition, $ \Prob{ Y^j\ne Y, Y^i = Y } = 0$.
	Thus, $\gamma_i - \gamma_j = \Prob{ Y^i \ne Y^j }$
	which is a function of $P_S$ only.
	Since $(C_i)_i$ are known, a map as required by~\cref{prop:learnablemap} exists.
\end{proof}

\section*{Proof of Theorem \ref{thm:nonunif}}
\begin{proof}
We first consider the case when $K=2$ and arbitrarily choose $C_2 - C_1 = 1/4$. 
We will consider two instances, $\theta,\theta'\in \TWD$ such that for instance $\theta$, 
action $k=1$ is optimal with an action gap of $c(2,\theta) - c(1,\theta) = 1/4$ between the cost of the second and the first
action,  while for instance $\theta'$, $k=2$ is the optimal action and the action gap is $c(1,\theta) - c(2,\theta) = \epsilon$
where $0<\epsilon<3/8$.
Further, the entries in $P_S(\theta)$ and $P_S(\theta')$ differ by at most $\epsilon$. 
From this, a standard reasoning gives that no algorithm can achieve sublinear minimax regret over $\TWD$ because any
algorithm is only able to identify $P_S$. 

The constructions of $\theta$ and $\theta'$ are shown in \cref{tab:nonunif}:
The entry in a cell gives the probability of the event as specified by the column and row labels.
For example, in instance $\theta$, $3/8$ is the probability of $Y=Y^1=Y^2$, 
while the probability of $Y^1=Y\ne Y^2$ is $1/8$. Note that the cells with zero actually 
correspond to impossible events, i.e., these cannot be assigned a positive probability.
The rationale of a redundant (and hence sparse) table is so that probabilities of certain events of interest, such as $Y^1\ne Y^2$ are easier to determine based on the table. The reader should also verify that the positive probabilities correspond to events that are possible.
\bgroup
\def\arraystretch{1.5}
\begin{table}[]
	\centering
	\begin{tabular}{|c|c|c|c|}
		\hline
		\multicolumn{2}{|c|}{Instance $\theta$}  & $Y^1=Y^2$     & $Y^1\ne Y^2$  \\ \hline
		\multirow{2}{*}{$Y^1= Y$}   & $Y^2= Y$   & $\frac{3}{8}$ & $0$           \\ \cline{2-4} 
		& $Y^2\ne Y$ & $0$ & $\frac{1}{8}$ \\ \hline
		\multirow{2}{*}{$Y^1\ne Y$} & $Y^2= Y$   & $0$ & $\frac{1}{8}$           \\ \cline{2-4} 
		& $Y^2\ne Y$ & $\frac{3}{8}$ & $0$ \\ \hline
	\end{tabular}
	\begin{tabular}{|c|c|c|c|}
		\hline
		\multicolumn{2}{|c|}{Instance $\theta'$}  & $Y^1=Y^2$     & $Y^1\ne Y^2$  \\ \hline
		\multirow{2}{*}{$Y^1= Y$}   & $Y^2= Y$   & $\frac{3}{8}-\epsilon$ & $0$           \\ \cline{2-4} 
		& $Y^2\ne Y$ & $0$ & $0$ \\ \hline
		\multirow{2}{*}{$Y^1\ne Y$} & $Y^2= Y$   & $0$ & $\frac{2}{8}+\epsilon$           \\ \cline{2-4} 
		& $Y^2\ne Y$ & $\frac{3}{8}$ & $0$ \\ \hline
	\end{tabular}
	\vspace*{0.1in}
	\caption{The construction of two problem instances for the proof of \cref{thm:nonunif}.}
	\label{tab:nonunif}
\end{table}
\egroup

We need to verify the following:
{\em (i)} $\theta,\theta'\in \TWD$;
{\em (ii)} the optimality of the respective actions in the respective instances;
{\em (iii)} the claim concerning the size of the action gaps;
{\em (iv)} that $P_S(\theta)$ and $P_S(\theta')$ are close.
Details of the calculations to support {\em (i)}--{\em (iii)} can be found in \cref{tab:nonunif2}.
The row marked by $(*)$ supports that the instances are proper USS instances.
In the row marked by $(**)$, there is no requirement for $\theta'$ because 
in $\theta'$ action two is optimal, and hence there is no action with larger index 
than the optimal action, hence $\theta'\in \TWD$ automatically holds.
To verify the closeness of $P_S(\theta)$ and $P_S(\theta')$ we actually 
would need to first specify $P_S$ (the tables do not fully specify these).
However, it is clear the only restriction we put on $P_S$ is the value of $\Prob{Y^1\ne Y^2}$ (and
that of $\Prob{Y^1=Y^2}$) and these values are within an $\epsilon$ distance of each other.
Hence, $P_S$ can also be specified to satisfy this. In particular, one possibility for $P$ and $P_S$ are given in \cref{tab:nonunif3}.
\bgroup
\def\arraystretch{1.5}
\begin{table}[]
	\centering
	\begin{tabular}{|c|c|c|}
		\hline
		& $\theta$                & $\theta'$ \\ \hline
		$\gamma_1 = \Prob{Y^1\ne Y}$ & $\frac{1}{4}$           & $\frac{5}{8}+\epsilon$ \\ \hline
		$\gamma_2 = \Prob{Y^2\ne Y}$ & $\frac{1}{4}$           & $\frac{3}{8}$ \\ \hline
		$\gamma_2 \le \gamma_1 \mbox{}^{(*)}$        & \checkmark           & \checkmark \\ \hline
		$c(1,\cdot)$                                 & $\frac{1}{4}$           & $\frac{5}{8}+\epsilon$ \\ \hline
		$c(2,\cdot)$                                 & $\frac{2}{4}$           & $\frac{5}{8}$ \\ \hline
		$a^*(\cdot)$                                 & $k=1$                   & $k=2$ \\ \hline
		$\Prob{Y^1\ne Y^2}$                   & $\frac{1}{4}$         & $\frac{1}{4}+\epsilon$ \\ \hline
		$\theta \in \TWD  \mbox{}^{(**)}$                        & $\frac{1}{4}\ge \frac14$ \checkmark & \checkmark \\ \hline
		$|c(1,\cdot)-\c(2,\cdot)|$              & $\frac{1}{4}$         & $\epsilon$ \\ \hline
	\end{tabular}
	\vspace*{0.1in}
	\caption{Calculations for the proof of \cref{thm:nonunif}.}
	\label{tab:nonunif2}
\end{table}
\egroup

\bgroup
\def\arraystretch{1.5}
\begin{table}[]
	\centering
	\begin{tabular}{|c|c|c||c|c|}
		\hline
		$Y^1$ & $Y^2$ & $Y$ & $\theta$ & $\theta'$ \\ \hline\hline
		$0$ & $0$ & $0$ & $\frac38$ & $\frac38-\epsilon$ \\ \hline
		$0$ & $0$ & $1$ & $\frac38$ & $\frac38-\epsilon$ \\ \hline
		$0$ & $1$ & $0$ & $0         $ & $0                       $ \\ \hline
		$0$ & $1$ & $1$ & $0         $ & $0                       $ \\ \hline
		$1$ & $0$ & $0$ & $\frac18$ & $\frac28+\epsilon$ \\ \hline
		$1$ & $0$ & $1$ & $\frac18$ & $0                       $ \\ \hline
		$1$ & $1$ & $0$ & $0         $ & $0                       $ \\ \hline
		$1$ & $1$ & $1$ & $0         $ & $0                       $ \\ \hline
	\end{tabular}
	\mbox{}
	\vspace*{2in}
	\hspace*{0.5in}
	\mbox{}
	\begin{tabular}{|c|c||c|c|}
		\hline
		$Y^1$ & $Y^2$ &$\theta$ & $\theta'$ \\ \hline\hline
		$0$ & $0$ & $\frac68$ & $\frac68-\epsilon$ \\ \hline
		$0$ & $1$ & $0         $ & $0                       $ \\ \hline
		$1$ & $0$ & $\frac28$ & $\frac28+\epsilon$ \\ \hline
		$1$ & $1$ & $0         $ & $0                       $ \\ \hline
	\end{tabular}
	
	\vspace*{0.1in}
	\caption{Probability distributions for instances $\theta$ and $\theta'$. On the left are shown the joint
		probability distributions, while on the right are shown their marginals
		for the sensors.}
	\label{tab:nonunif3}
\end{table}
\egroup

\end{proof}

\section*{Proof of Proposition \ref{prop:equivalence}}
\begin{proof}First note that the mapping of the policies is such that number of pull of arm $k$ after $n$ rounds by policy $\pi$ on problem instance $f(\theta)$ is the same as the number of pulls of arm $k$ by $\pi^\prime$ on problem instance $\theta$. Recall that mean value of arm $k$ in problem instance $\theta$ $ is \gamma_k +C_k$ and that of corresponding arm in problem instance $f(\theta)$ is $\gamma_1-(\gamma_i+C_i)$. We have
	
	\begin{align*}
	\Regret_n(\pi^\prime,\theta) = \sum_{k\in [K]} \EEi{P_S}{N_k(n)}(\gamma_k+C_k-\gamma_{k^*}-C_{k^*}) \,,
	\end{align*}
	and
	\begin{align*}
	&\Regret_n(\pi,f(\theta))\\
	&= \sum_{k\in [K]} \EEi{P_S}{N_k(n)}\left (\max_{i \in [K]}\{\gamma_1-\gamma_i- C_i\}-(\gamma_1-\gamma_k- C_k)\right) \\
	&= \sum_{k\in [K]} \EEi{P_S}{N_k(n)}\left (\gamma_k+ C_k - \min_{i \in [K]}\{\gamma_i+ C_i\}\right) \\
	&=\Regret_n(\pi^\prime,\theta).
	\end{align*}
	\end{proof}

\end{document}